\newcommand{\umax}{\|u\|_{L^\infty(\mu)}}
\newcommand{\Linfnorm}[2]{\norm{#1}_{L^\infty(#2)}}
\newcommand{\rmax}{r_{\mathrm{max}}}
\pgfplotsset{compat=1.15}
\newcounter{BMatrix}
\newcommand{\setmaxwd}[1]{%
  \eqmakebox[BM-\theBMatrix][\BMalign]{$#1$}%
}
\newtheorem{theorem}{Theorem}
\newtheorem{assumption}{Assumption}
\newtheorem{proposition}{Proposition}[section]
\newtheorem{corollary_prop}{Corollary}[proposition]
\newtheorem{corollary}{Corollary}[theorem]
\newtheorem{lemma}{Lemma}
\theoremstyle{definition}
\newtheorem{definition}{Definition}
\theoremstyle{remark}
\newtheorem*{remark}{Remark}
\DeclareMathOperator*{\esssup}{ess\,sup}
\DeclareMathOperator*{\essinf}{ess\,inf}
\newcommand{\inv}{^{-1}}
\newcommand{\1}{\mathds{1}}
\newcommand{\R}{\mathbb{R}}
\newcommand{\Z}{\mathbb{Z}}
\newcommand{\var}{\mathrm{Var}}
\newcommand{\bias}{\mathrm{Bias}}
\newcommand{\cov}{\mathrm{Cov}}
\newcommand{\rhs}{\mathrm{RHS}}
\newcommand{\lhs}{\mathrm{LHS}}
\newcommand{\del}{\partial}
\newcommand{\da}{\downarrow}
\newcommand{\ra}{\rightarrow}
\newcommand{\ua}{\uparrow}
\newcommand{\cd}{\cdot}
\newcommand{\ds}{\dots}
\newcommand{\bd}[1]{\mathbf{#1}}
\newcommand{\mrm}[1]{\mathrm{#1}}
\newcommand{\cB}{\mathcal{B}}
\newcommand{\cF}{\mathcal{F}}
\newcommand{\cH}{\mathcal{H}}
\newcommand{\cL}{\mathcal{L}}
\newcommand{\cM}{\mathcal{M}}
\newcommand{\cN}{\mathcal{N}}
\newcommand{\cP}{\mathcal{P}}
\newcommand{\cT}{\mathcal{T}}
\newcommand{\fr}[1]{\mathfrak{#1}}
\newcommand{\spnorm}[1]{\left|#1\right|_\mathrm{span}}
\newcommand{\set}[1]{\left\{{#1}\right\}}
\newcommand{\ceil}[1]{\left\lceil{#1}\right\rceil}
\newcommand{\norm}[1]{\left\|#1\right\|}
\newcommand{\norminf}[1]{\left\|#1\right\|_{\infty}}
\newcommand{\abs}[1]{\left|#1\right|}
\newcommand{\sqbk}[1]{\left[ #1 \right]}
\newcommand{\sqbkcond}[2]{\left[ #1 \middle| #2 \right]}
\newcommand{\crbk}[1]{\left( #1 \right)}
\newcommand{\crbkcond}[2]{\left( #1 \middle| #2 \right)}
\newcommand{\bmx}[1]{\begin{bmatrix} #1 \end{bmatrix}}
\newcommand{\argmax}[1]{\underset{#1}{\operatorname{arg}\,\operatorname{max}}\;}
\definecolor{codegreen}{rgb}{0,0.4,0}
\definecolor{codeblue}{rgb}{0.1,0.1,0.7}
\definecolor{codegray}{rgb}{0.5,0.5,0.5}
\definecolor{codepurple}{rgb}{0.58,0,0.82}
\definecolor{backcolour}{rgb}{0.97,0.97,0.97}
\lstdefinestyle{mystyle}{
    backgroundcolor=\color{backcolour},   
    commentstyle=\color{codegreen},
    keywordstyle=\color{magenta},
    numberstyle=\tiny\color{codegray},
    stringstyle=\color{codepurple},
    basicstyle=\scriptsize\ttfamily,
    identifierstyle=\color{codeblue},
    breakatwhitespace=false,         
    breaklines=true,                 
    captionpos=b,                    
    keepspaces=true,                 
    numbers=left,                    
    numbersep=4pt,                  
    showspaces=false,                
    showstringspaces=false,
    showtabs=true,                  
    tabsize=3
}
\numberwithin{equation}{section}
\numberwithin{equation}{section}
\begin{document}

\title{Sample Complexity of \\
Variance-Reduced Distributionally Robust Q-Learning }
\author[1]{Shengbo Wang}
\author[2]{Nian Si}
\author[1]{Jose Blanchet}
\author[3]{Zhengyuan Zhou}
\affil[1]{MS\&E, Stanford University}
\affil[2]{IEDA, HKUST}
\affil[3]{Stern School of Business, New York University}
\date{August, 2024}

\maketitle

\begin{abstract}
Dynamic decision-making under distributional shifts is of fundamental interest in theory and applications of reinforcement learning:  The distribution of the environment in which the data is collected can differ from that of the environment in which the model is deployed. This paper presents two novel model-free algorithms, namely the distributionally robust Q-learning and its variance-reduced counterpart, that can effectively learn a robust policy despite distributional shifts. These algorithms are designed to efficiently approximate the $q$-function of an infinite-horizon $\gamma$-discounted robust Markov decision process with Kullback-Leibler ambiguity set to an entry-wise $\epsilon$-degree of precision.  Further, the variance-reduced distributionally robust Q-learning combines the synchronous Q-learning with variance-reduction techniques to enhance its performance. Consequently, we establish that it attains a minimax sample complexity upper bound of $\tilde O(|\bd{S}||\bd{A}|(1-\gamma)^{-4}\epsilon^{-2})$, where $\bd{S}$ and $\bd{A}$ denote the state and action spaces.  This is the first complexity result that is independent of the ambiguity size $\delta$, thereby providing new complexity theoretic insights. Additionally, a series of numerical experiments confirm the theoretical findings and the efficiency of the algorithms in handling distributional shifts.
\end{abstract}

\section{Introduction}
Reinforcement learning (RL) \citep{sutton2018reinforcement} focuses on how agents can learn to make optimal decisions in uncertain and dynamic environments. It is based on the principle of trial-and-error learning, where the agent interacts with the environment, receives rewards or penalties for its actions, and adjusts its behavior to maximize the expected long-term reward.

A significant obstacle in RL is the limited interaction between the agent and the environment, often due to factors such as data-collection costs or safety constraints. To overcome this, practitioners often rely on historical datasets or simulation environments to train the agent. However, this approach can suffer from distributional shifts \citep{quinonero2008dataset} between the real-world environment and the data-collection/simulation environment, potentially leading to suboptimal learned policies when deployed in the actual environment. It is also observed in RL environments that an agent trained this way could be vulnerable to adversarial attacts \citep{lin2017tactics, pan2019characterizing}. 

To tackle these challenges, distributionally robust reinforcement learning (DR-RL) \citep{zhou21, yang2021,liu22DRQ,ShiChi2022,Wang2023MLMCDRQL} has emerged as a promising approach. DR-RL seeks to learn policies that are robust to distributional shifts in the environment by explicitly considering a family of possible distributions that the agent may encounter during deployment. This approach allows the agent to learn a policy that performs well across a range of environments, rather than just the one it was trained on. 

These benefits of distributionally robust policies motivate the exploration of a critical question: \textit{Can we construct efficient reinforcement learning algorithms that achieve the desired robustness properties while also providing provable guarantees on their sample complexity?}

A growing body of literature aims to understand the sample complexities of distributionally robust reinforcement learning. Specifically, we are interested in a robust tabular Markov Decision Process (MDP) with state space \(\bd{S}\) and action space \(\bd{A}\), in the discounted infinite-horizon setting with discount factor \(\gamma\). To account for uncertainty, we use an ambiguity set based on Kullback-Leibler (KL) divergence with ambiguity size \(\delta\), which is arguably the most natural and challenging divergence in distributionally robust literature. Previous research has mainly focused on the \textit{model-based} approach, where a specific model of the environment is estimated, and value iteration (VI) is run on the estimated model. Table \ref{tab:sample_complexity_model_based} shows the worst-case sample complexity of model-based distributionally RL, with \citet{ShiChi2022} proposing a method with state-of-the-art sample complexity in terms of \(|\bd{S}|,|\bd{A}|,1-\gamma,\epsilon\).

\begin{table}[htb]
\centering
\begin{tabular}{lll}
\toprule Algorithm & Sample Complexity & Origin \\ 
\midrule 
DRVI & $\tilde
O(|\bd{S}|^2|\bd{A}|e^{O(1-\gamma)^{-1}}(1-\gamma)^{-4}\epsilon^{-2}\delta^{-2})$ & \citet{zhou21} \\ 
REVI/DRVI & $\tilde
O(|\bd{S}|^2|\bd{A}|e^{O(1-\gamma)^{-1}}(1-\gamma)^{-4}\epsilon^{-2}\delta^{-2})$ & \citet{Panaganti2021} \\ 
DRVI & $\tilde
O(|\bd{S}|^2|\bd{A}|(1-\gamma)^{-4}\epsilon^{-2}\fr{p}_{\wedge}^{-2}\delta^{-2})$ & \citet{yang2021} \\ 
DRVI-LCB & $\tilde O(|\bd{S}||\bd{A}|(1-\gamma)^{-4}\epsilon^{-2}\fr{p}_{\wedge}^{-1}\delta^{-2})$   & \citet{ShiChi2022} \\
\bottomrule
\end{tabular}\caption{Summary of sample complexity upper bounds for finding an $\epsilon$-optimal robust  policy in  \textit{model-based} distributionally robust RL ($\fr{p}_{\wedge}$ is the minimal support probability of the nominal MDP; see, Def. \ref{def:min_supp_prob}).}
\label{tab:sample_complexity_model_based}
\end{table}

\subsection{Our Motivation}

The emerging line of work mentioned above reflects the growing interest and fruitful results in the pursuit of sample-efficient distributionally robust reinforcement learning. At the same time, a closer scrutiny of the results suggests that two fundamental aspects of the problem are inadequately addressed. 

For one thing, the complexity bounds of existing results exhibit $\tilde O(\delta^{-2})$ dependence as $\delta\da 0$.  This increase in the complexity bounds appears to reflect an increased need for learning the training environment as the training and adversarial environments become more alike. At the surface level, this makes sense: in the extreme case where $\delta$ is approaching $\infty$, then (assuming known support of the distributions) no sample is needed to find an optimal distributionally robust policy. Nevertheless, such bounds have failed to align with the continuity property of the robust MDP: the robust value function should converge to the non-robust optimal cumulative reward as $\delta\da 0$. Therefore, for all sufficiently small $\delta$ that may depend on the training environment and $\epsilon$, the robust value function can be approximated by the output of a classical RL algorithm. Specifically, we expect an algorithm and analysis with a $\tilde O(1)$ dependence as $\delta\da 0$. This is presently absent in the literature.

Additionally, with the exception of~\citet{Wang2023MLMCDRQL} (discussed in more detail in the next subsection), all the existing distributionally robust policy learning algorithms that have finite-sample guarantees (such as the ones 
mentioned above~\citep{zhou21, Panaganti2021, yang2021, ShiChi2022}) are model-based, which estimates the underlying MDP first before provisioning some policy from it. Although model-based methods are often more sample-efficient and easier to analyze, their drawbacks are also well-understood~\citep{sutton2018reinforcement,franccois2018introduction}: they are computationally intensive; they require more memory to store MDP models and often do not generalize well to non-tabular RL settings. These issues limit the practical applicability of model-based algorithms,  which stand in contrast to model-free algorithms that learn to select actions without first learning an MDP model. Such methods are often more computationally efficient, have less storage overhead, and better generalize to RL with function approximation. In particular, $Q$-learning~\citep{watkins1992q}, as the prototypical model-free learning algorithm, has widely been both studied theoretically and deployed in practical applications. However, $Q$-learning is not robust (as demonstrated in our simulations), and the policy learned by $Q$-learning in one environment can perform poorly in another under a worst-case shift (with bounded magnitude). 

As such, the above discussion naturally motivates the following research question:

\textit{Can we design a variant of $Q$-Learning that is distributionally robust, where the sample complexity has the right scaling with $\delta$?}

%While sample complexities of model-based distributionally robust RL are extensively studied, results for \textit{model-free} approaches remain scarce.  Contrary to the model-based approach, a model-free approach in RL does not require the estimation of a specific model of the environment. Instead, it directly learns a policy or value function from the interaction with the environment, without making assumptions about the underlying dynamics of the environment. Model-free approaches are generally more flexible and can handle a wider range of environments.

\subsection{Our Contributions}
We answer the above question affirmatively and contribute to the existing literature on the worst-case sample complexity theory of \textit{model-free} distributionally robust RL. We propose two distributionally robust variants of the Q-learning algorithm \citep{watkins1992q}, namely DR Q-learning (Algorithm \ref{alg:q-learning}) and variance-reduced DR Q-learning (Algorithm \ref{alg:vr_q-learning}), which effectively solve the DR-RL problem under the KL ambiguity set. 

The proposed algorithms operate efficiently under the assumption of limited power of the adversary (as per Assumption \ref{assump:delta_small}), which is realistic in many real-world applications. We prove that both algorithms have near-optimal worst-case sample complexity guarantees in this regime. Additionally, the variance-reduced version exhibits superior complexity dependence on the effective horizon \((1-\gamma)^{-1}\), as shown in Table \ref{tab:sample_complexity}. To the best of our knowledge, both algorithms and their worst-case sample complexity upper bounds represent state-of-the-art results in model-free distributionally robust RL. Moreover, our sample complexity upper bound for variance-reduced DR Q-learning matches the best-known upper bound for this DR-RL problem in \citet{ShiChi2022} in terms of \(\epsilon^{-2}\) and \((1-\gamma)^{-4}\) dependence.

\begin{table}[htb]
\centering
\begin{tabular}{lll}
\toprule Algorithm & Sample Complexity & Origin \\ 
\midrule MLMC DR Q-learning & $\tilde
O(|\bd{S}||\bd{A}|(1-\gamma)^{-5}\epsilon^{-2}\fr{p}_{\wedge}^{-6}\delta^{-4})$ & \citet{Wang2023MLMCDRQL} \\ 
\midrule DR Q-learning & $\tilde
O(|\bd{S}||\bd{A}|(1-\gamma)^{-5}\epsilon^{-2}\fr{p}_{\wedge}^{-3})$ & Theorem \ref{thm:ql_sample_complexity}\\ 
Variance-reduced DR Q-learning & $\tilde
O(|\bd{S}||\bd{A}|(1-\gamma)^{-4}\epsilon^{-2}\fr{p}_{\wedge}^{-3})$ & Theorem \ref{thm:vrql_sample_complexity} \\ 
\bottomrule
\end{tabular}\caption{Summary of sample complexity upper bounds for finding an $\epsilon$-optimal robust  policy in \textit{model-free} distributionally robust RL ($\fr{p}_{\wedge}$ is the minimal support probability of the nominal MDP; see, Def. \ref{def:min_supp_prob}).}
\label{tab:sample_complexity}
\end{table}

The DR Q-learning Algorithm \ref{alg:q-learning} is a direct extension of mini-batch Q-learning. Compared to the MLMC DR Q-learning method proposed by \citet{Wang2023MLMCDRQL}, Algorithm \ref{alg:q-learning} is easier to implement in real-world applications. Additionally, this approach allows for the design of a more sophisticated variant, the variance-reduced DR Q-learning, which provides a provable enhancement of the worst-case sample complexity guarantee of DR Q-learning. To achieve this improvement, we leverage Wainwright's variance reduction technique and algorithm structure \citep{wainwright2019}, adapting it to the DR-RL context and redesigning the variance reduction scheme accordingly.

Both the DR Q-learning and its variance-reduced version use a stochastic approximation (SA) step to iteratively update the estimator of the optimal DR \( q \)-function towards the fixed point of the population DR Bellman operator. However, both algorithms involve a bias that must be controlled at the algorithmic and iterative update levels. Our contribution to the literature lies in the near-optimal analysis of the biased SA resulting from DR Q-learning and its variance-reduced version. This analysis also generalizes to settings where the biased stochastic version of the contraction mapping is a monotonic contraction.

We highlight that these are the first algorithmic complexity results showing that the worst-case complexity dependence on the uncertainty set size \(\delta\) is \(O(1)\) as \(\delta \to 0\) for the DR-RL problem with a KL ambiguity set. This resolves the issue of worst-case complexity bounds blowing up as \(\delta\) approaches 0, a problem present in all previous works, including both model-based and model-free approaches \citep{yang2021,Panaganti2021,ShiChi2022,Wang2023MLMCDRQL}. 

The significance of this characteristic lies in its theoretical illustration that as the adversary's power \(\delta\) approaches 0, not only does the solution to the DR-RL problem converge to that of the non-robust version, but so does the sample complexity required to solve it. This sheds light on the connection between robust and non-robust RL problems, indicating that in more general settings and real-world applications, DR-RL problems with function approximation may be efficiently addressed by utilizing variants of the corresponding approach for non-robust RL problems.

\subsection{Literature Review}

This section is dedicated to reviewing the literature that is relevant to
our work. The literature on RL and MDP is extensive. One major line of
research focuses on developing algorithms that can efficiently learn
policies to maximize cumulative discounted rewards. When discussing RL and
MDP problems, we will concentrate on this infinite horizon discounted reward
formulation.

\textbf{Minimax Sample Complexity of Tabular RL}: Recent years have seen significant developments in the worst-case sample complexity theory of tabular RL. Two principles, namely model-based and model-free, have motivated distinct algorithmic designs. In the model-based approach, the controller aims to gather a dataset so as to construct an empirical model of the underlying MDP and solve it using variations of the dynamic
programming principle. Research \citep{azar2013,sidford2018near_opt,
agarwal2020,li2022settling} have proposed model-based algorithms and proven
optimal upper bounds for achieving $\epsilon$, with a matching lower bound $\tilde\Omega(|\bd{S}||\bd{A}|(1-\gamma)^{-3}\epsilon^{-2})$ proven in \citet{azar2013}.
In contrast, the model-free approach involves maintaining only
lower-dimensional statistics of the transition data, which are iteratively
updated. As one of the most well-known model-free algorithms, the sample
complexity of Q-learning has been extensively studied \citep{even2003learning,wainwright2019l_infty,chen2020,li2021QL_minmax}.
However, \citet{li2021QL_minmax} have shown that the Q-learning has a minimax
sample complexity of $\tilde \Theta(|\bd{S}||\bd{A}|(1-\gamma)^{-4}\epsilon^{-2})$,
which doesn't match the lower bound $\tilde\Omega(|\bd{S}||\bd{A}|(1-\gamma)^{-3}\epsilon^{-2})$. Nevertheless, variance-reduced variants of the Q-learning,
such as the one proposed in \citet{wainwright2019}, achieve the
aforementioned sample complexity lower bound. Other algorithmic techniques such as Polyak-Ruppert averaging \citep{li2023statistical} have been shown to result in optimal sample complexity. 

\textbf{Finite Analysis of SA:} The classical theory of asymptotic
convergence for SA has been extensively studied, as seen in \citet{kushner2013SA}. Recent progress in the minimax and instant dependent sample
complexity theory of Q-learning and its variants has been aided by advances
in the finite-time analysis of SA. Traditional RL research focuses on
settings where the random operator is unbiased. \citet{wainwright2019l_infty}
demonstrated a sample path bound for the SA recursion, which enables the use
of variance reduction techniques to achieve optimal learning rates. In
contrast, \citet{chen2020,chen2022} provided finite sample guarantees for SA
only under a second moment bound on the martingale difference noise
sequence. Additionally, research has been conducted on non-asymptotic
analysis of SA procedures in the presence of bias, as documented in \citep{karimi2019biased_SA,Wang2022biased_SA}.

\textbf{Robust MDP and RL:} Our work draws upon the theoretical framework of
classical max-min control and robust MDPs, as established in previous works \citep{gonzalez-trejo2002,Iyengar2005,nilim2005robust,
wiesemann2013,huan2010,shapiro2022, wang2024foundation}. These works have established the concept of distributional robustness in dynamic decision making. In particular, \citet{gonzalez-trejo2002,Iyengar2005,nilim2005robust} established the distributionally robust dynamic programming principles for SA-rectangular adversaries under symmetric information structures, while \citet{wiesemann2013,wang2024foundation} studies asymmetric settings, leading to the same the DR Bellman equation.

Recent research has shown great interests in learning DR policies from data \citep{si2020,zhou21, yang2021,liu22DRQ,ShiChi2022,Wang2023MLMCDRQL,yang2023avoiding}. For
instance, \citep{si2020} studied the contextual bandit setting, while \citep{zhou21, Panaganti2021,yang2021, ShiChi2022} focused on the model-based
tabular RL setting. On the other hand, \citep{liu22DRQ,Wang2023MLMCDRQL,yang2023avoiding}
tackled the DR-RL problem using a model-free approach\footnote{\citet{liu22DRQ}'s algorithm is infeasible: it requires an infinite number of samples in expectation for \textit{each iteration}, and only asymptotic convergence is established with an infinite number of iterations.}. Before our work, the
best worst-case sample complexity upper bound for DR-RL under the KL
ambiguity set was established for the model-based DRVI-LCB algorithm, as
proposed and analyzed by \citet{ShiChi2022}. Their analysis showed that the
worst-case sample complexity has an upper bound of $\tilde
O(|\bd{S}||\bd{A}|(1-\gamma)^{-4}\epsilon^{-2}\delta^{-2}\fr{p}_{\wedge}^{-1})$.

\section{Distributionally Robust Reinforcement Learning}

\subsection{Classical Tabular Reinforcement Learning}

Let \(\cM_0 = \left(\bd{S}, \bd{A}, \bd{R},  P_0,  N_0, \gamma\right)\) be a Markov decision process (MDP), where \(\bd{S}\), \(\bd{A}\), and \(\bd{R}\subsetneq \R_+\) are finite state, action, and reward spaces\footnote{We assume a finite reward space for simplicity. However, our results can be extended to continuous reward spaces by imposing a minimum density assumption, as described in \citet{si2020}., respectively. Let \(\cP(\bd{U})\), where \(\bd{U} = \bd{S}, \bd{A}, \bd{R}\), denote the set of probability measures on the power set \(2^{\bd{U}}\). Then \(P_0 = \{p_{s,a} \in \cP(\bd{S}), s \in \bd{S}, a \in \bd{A}\}\) and \(N_0 = \{\nu_{s,a} \in \cP(\bd{R}), s \in \bd{S}, a \in \bd{A}\}\) are the sets of transition and reward distributions, respectively. \(\gamma \in (0,1)\) is the discount factor. Define \(r_{\max} = \max \{r \in \bd{R}\}\) as the maximum reward. 

\par At each time $t$, given the state process is at $S_t$ and the decision maker takes action $A_t$, the subsequent state is determined by the conditional distribution \(S_{t+1}\sim p_{S_t,A_t}\). Then, a randomized reward $R_t\sim \nu_{S_t,A_t}$ will be collected, independent of the history. }

\par Let $\Pi$ be the history-dependent policy class (see \citep{wang2024foundation} for a rigorous construction). For $\pi\in \Pi$, the value function $v^{\pi}(s)$ is defined
as: 
\begin{equation*}
v^{\pi}(s) := E\sqbkcond{\sum_{t=0}^\infty \gamma^{t}R_t }{S_0 = s}.
\end{equation*}
\par The optimal value function is
\begin{equation*}
v^*(s) \coloneqq \max_{\pi\in\Pi} v^\pi(s),
\end{equation*}
$\forall s\in\bd{S}$. It is well known that the optimal value function is the
unique solution of the following Bellman equation: 
\begin{equation*}
v^*(s) = \max_{a\in\bd{A}} \crbk{E_{\nu_{s,a}}[R] + \gamma E_{
p_{s,a}}[v^*(S)]}.
\end{equation*}
where the expectations are taken over $R\sim \nu_{s,a}$ and $S\sim p_{s,a}$, respectively. 
\par An important implication of the Bellman equation is that it suffices to optimize within the stationary Markovian deterministic policy class.

\par We define the optimal $q$-function as
\begin{equation*}
q^*(s,a) := E_{ \nu_{s,a}}[R] + \gamma E_{
p_{s,a}}[v^*(S)]. 
\end{equation*}
It is well-know that $q^*$ satisfies its Bellman equation 
\begin{equation*}
q^*(s,a)= E_{ \nu_{s,a}}[R] + \gamma E_{ p_{s,a}}\sqbk{\max_{b\in\bd{A}} q^*(S,b)}.
\end{equation*}
An optimal policy can be constructed as $\pi^*(s) = \arg\max_{a\in\bd{A}}q^*(s,a)$. Therefore, policy
learning in RL environments can be achieved if we can learn a good estimate
of $q^*$.

\subsection{Kullback-Leibler Divergence Constrained DR-RL}

% In real-world environments, the transition model $ P_0$ and rewards $ N_0$ are subjected to change over time or miss-specification. This motivates us to learn a policy that is robust to certain perturbation in the environment. 
We consider a DR-RL setting where the adversary is constrained to perturb both transition probabilities and rewards within a KL divergence ball of radius $\delta$. Specifically, for probability measures $Q$ is absolutely continuous w.r.t. $P$ on some measurable space $(\Omega,\cF)$, denoted by $Q\ll P$, define 
\begin{equation}\label{eqn:def_KL_div}
    D_{\text{KL}}(Q\|P) := \int_\Omega
\log\crbk{\frac{dQ}{dP}(\omega)}P(d\omega),
\end{equation}where $\frac{dP}{dQ}$ is the Radon-Nikodym derivative. 
\par For each $(s,a)\in \bd{S}\times\bd{A}$ and $\delta > 0$, we define KL ambiguity set that are centered at $p_{s,a}\in P_0$ and $\nu_{s,a}\in N_0$ of radius $\delta$ by 
\begin{equation}\label{eqn:def_KL_delta_balls}\begin{aligned}
\cP_{s,a}(\delta)&\coloneqq \left\lbrace p : D_{\text{KL}}\left(p\|p_{s,a}\right)\leq \delta\right\rbrace, \\
\cN_{s,a}(\delta)&\coloneqq \left\lbrace \nu: D_{\text{KL}}(\nu\|\nu_{s,a})\leq\delta\right\rbrace. \end{aligned}
\end{equation}
These ambiguity sets represent the possible distributional shifts from the reference model $P_0, N_0$. In particular, the parameter $\delta > 0$ controls the size of the ambiguity sets, quantifying the power of the adversary. 
\par With these definitions in mind, we define the DR optimal value function as the solution to a fixed point equation--a.k.a. the DR Bellman equation--which serves as the learning objective of this paper. 
\begin{definition}\label{def:DRvalue}
The DR Bellman operator $\cB_\delta$ for the value function is defined as the mapping
\begin{equation} \label{def:robust_Bellman_opt} \cB_\delta(v)(s):=
\max_{a\in\bd{A}} \inf_{\mbox{$\begin{subarray}{c} p\in\cP_{s,a}(\delta),\\
		\nu\in \cN_{s,a}(\delta)\end{subarray}$}}
\crbk{   E_{ \nu} [R] + \gamma E_{p}\left[v(S)\right]}.
\end{equation}
Define the DR optimal value function $v^*_\delta$ as the solution of the DR Bellman equation: 
\begin{equation}\label{def:robust_Bellman}
    v^*_\delta =  \cB_\delta(v^*_\delta)
\end{equation}
\end{definition}
Moving forward, we will suppress the explicit dependence on $\delta$.

The DR Bellman equation has a unique solution as the fixed point of $\cB$, which is a consequence of $\cB$ being a contraction operator. Furthermore, the solution is equal to the max-min control optimal value of a \textit{SA-rectangular} distributionally robust MDP (DRMDP) \citep{Iyengar2005,nilim2005robust,wiesemann2013}. Specifically, this max-min optimal value is given by 
\begin{equation}
\label{eqn:maxmin_control_val}u^*(s) := \sup_{\pi\in\Pi}\inf_{\kappa\in\mathrm{K}} E^{\pi,\kappa }
\sqbkcond{\sum_{t=0}^\infty \gamma^t R_t }{s_0 = s}
\end{equation}
where $\Pi$ is the history-dependent policy class, and the adversary chooses a policy $\kappa$ from an adversarial ambiguity set $\mathrm{K}$ that is induced by the KL ambiguity sets in \eqref{eqn:def_KL_delta_balls}. 

\par Intuitively, this value represents the optimal reward in the following adversarial environment: When the controller selects a policy \(\pi\), an adversary observes this policy and then chooses a counter-policy that determines the sequence of reward and transition distributions. The adversary's choice is constrained such that the reward and transition distributions induced by the counter-policy lie within the ambiguity set \eqref{eqn:def_KL_delta_balls} of radius $\delta$. The decisions made by both the controller and the adversary uniquely specify the law of the state-action-reward process, thereby determining the value of the policy pair \((\pi,\kappa)\). 

\par The equivalence of the max-min control optimal value \eqref{eqn:maxmin_control_val} and the solution to the DR Bellman equation \eqref{def:robust_Bellman} shows the optimality of stationary deterministic Markov control policies and stationary Markovian adversarial distribution choices. This equivalence, known as the \textit{dynamic programming principle} (DPP), is explored in detail in \citet{wang2024foundation}, where the adversary and controller can have asymmetric information structures. For those interested, we refer you to this paper.

\par We note that \citet{wang2024foundation} considers a setting where the reward is not randomized, i.e., \(\cN_{s,a} = \{\delta_{r(s,a)}\}\) for some reward function \(r: \bd{S} \times \bd{A} \to [0,1]\). However, it is straightforward to generalize the DPP to include randomized rewards in the SA-rectangular setting.

\subsection[Dual and q-Function Formulations]{Dual and $q$-Function Formulations}
The right-hand side of \eqref{def:robust_Bellman_opt} can be challenging to work with because the measure underlying the expectations is not directly accessible. To address this, we use strong duality to reveal the dependence of the value on the reference transition and reward distributions, \(P_0\) and \(N_0\). Specifically, we consider the dual representation: 
\begin{lemma}[\citet{Hu2012KLDRO}, Theorem 1] \label{lemma:strong_dual}
	Let $X$ be a random variable and $\mu_0$ be a probability measure on $(\Omega,\cF)$ s.t. $X$ has a finite moment generating function in a neighborhood of zero. Then for any $\delta >0$, 
	\begin{equation*}
	\inf_{\mu: D_{\emph{KL}}(\mu\|\mu_0)\leq \delta} E_\mu X=\sup_{\alpha\geq 0}\left\lbrace -\alpha\log E_{\mu_0}\left[e^{-X/\alpha}\right] -\alpha\delta\right\rbrace.
	\end{equation*}
\end{lemma}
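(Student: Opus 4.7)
The plan is to apply Lagrangian duality to the constrained convex program and evaluate the inner unconstrained problem via the Donsker--Varadhan variational formula. After reparametrizing $P$ through its Radon--Nikodym derivative $f = dP/dP_0$, the primal becomes a linear objective in $f$ over a convex feasible set (one KL inequality plus the normalization $E_{P_0}[f] = 1$). The reference measure $P_0$ is strictly feasible since $D_{\text{KL}}(P_0\|P_0) = 0 < \delta$, so Slater's condition holds and strong Lagrangian duality is available in principle.

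For the weak-duality direction, I would introduce a multiplier $\alpha \geq 0$ for the KL constraint and write, for every $\alpha \geq 0$,
\[\sup_{D_{\text{KL}}(P\|P_0)\leq\delta} E_P[H] \;\leq\; \alpha\delta \;+\; \sup_{P\ll P_0}\crbk{E_P[H] - \alpha D_{\text{KL}}(P\|P_0)}.\]
The Donsker--Varadhan variational identity evaluates the inner supremum as $\alpha\log E_{P_0}[e^{H/\alpha}]$, which is finite for all $\alpha$ above some threshold by the mgf hypothesis; for smaller $\alpha$ the quantity is $+\infty$ and contributes nothing to the infimum. Taking $\inf_{\alpha\geq 0}$ on the right-hand side gives the $\leq$ direction.

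For strong duality, I would exhibit an explicit saddle point. Let $\alpha^\star$ minimize $g(\alpha) \coloneqq \alpha\log E_{P_0}[e^{H/\alpha}] + \alpha\delta$ over $\alpha > 0$, and define the exponentially tilted measure $dP^\star/dP_0 = e^{H/\alpha^\star}/E_{P_0}[e^{H/\alpha^\star}]$. A direct computation yields $D_{\text{KL}}(P^\star\|P_0) = E_{P^\star}[H]/\alpha^\star - \log E_{P_0}[e^{H/\alpha^\star}]$, and the first-order condition $g'(\alpha^\star) = 0$ forces $D_{\text{KL}}(P^\star\|P_0) = \delta$, so $P^\star$ is primal-feasible. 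Substituting $(\alpha^\star, P^\star)$ into both sides of the weak-duality bound matches the inequality with equality.

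The main obstacle will be the boundary regime $\alpha^\star = 0$, which arises when $\delta$ exceeds $\sup_{\alpha > 0} D_{\text{KL}}(P^\star_\alpha\|P_0) = -\log P_0(\{H = \esssup H\})$. There the tilted family never reaches the prescribed KL level, so the primal optimum instead localizes on the argmax set of $H$ while the dual infimum is attained only as $\alpha \da 0^+$. A monotone-convergence argument, combined with the convention $\lim_{\alpha \da 0}\alpha\log E_{P_0}[e^{H/\alpha}] = \esssup H$, reconciles both sides in this edge case. Verifying continuity and monotonicity of the map $\alpha \mapsto D_{\text{KL}}(P^\star_\alpha\|P_0)$ on $(0,\infty)$, so that the intermediate-value argument used to produce $\alpha^\star$ is justified, is the remaining technical step.
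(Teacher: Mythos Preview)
The paper does not prove this lemma; it is quoted verbatim as Theorem~1 of \cite{Hu2012KLDRO} and used as a black box throughout. There is therefore no in-paper proof to compare against.

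That said, your proposal is the standard argument and is correct in outline. The Lagrangian relaxation plus Donsker--Varadhan identification of the inner supremum is exactly how the cited reference proceeds, and your treatment of the boundary case $\alpha^\star=0$ (when $\delta \geq -\log P_0(\{H=\esssup H\})$) matches the case distinction in \cite{Hu2012KLDRO}. One small point: you invoke Slater's condition for strong duality, but the problem is infinite-dimensional in $f$, so the usual finite-dimensional Slater theorem does not apply directly. Your explicit construction of the saddle point $(\alpha^\star,P^\star)$ and verification that it closes the duality gap is the cleaner route and makes the Slater appeal unnecessary; you can drop that sentence without loss.
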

Since the reward and values are bounded, directly apply Lemma \ref{lemma:strong_dual} to the r.h.s. of \eqref{def:robust_Bellman}, the DR value function $v^{*}$ in fact
satisfies the following \textit{dual form} of the DR Bellman's equation. 
$$ v^{ *}(s) = \max_{a\in\bd{A}} \left\lbrace \sup_{\alpha\geq
0}\left\lbrace -\alpha\log E_{ \nu_{s,a}}\left[e^{-R/\alpha}\right] -
\alpha\delta\right\rbrace \right. + \left. \gamma\sup_{\beta\geq
0}\left\lbrace -\beta\log E_{ p_{s,a}}\left[e^{-v^*(S)/\beta}\right]
- \beta\delta\right\rbrace \right\rbrace.$$

Similar to the traditional RL policy learning approach, we utilize the
optimal DR state-action value function, also known as the $q$-function, to
address the DR-RL problem. The $q$-function assigns real numbers to pairs of
states and actions, and can be represented as a matrix $q\in \R^{\bd{S}\times\bd{A}}$. From now on, we will assume this representation. To simplify notation, let
us define 
\begin{equation}
v(q)(s) := \max_{b\in\bd{A}}q(s,b),  \label{def:v_operator}
\end{equation}
which is the value function induced by the $q$-function $q(\cdot,\cdot)$.

We proceed to rigorously define the optimal $q$-function and its Bellman
equation. 
\begin{definition}
The optimal DR $q$-function is defined as
\begin{equation}\label{eqn:def_q*_func}
q^*(s,a) := \inf_{\mbox{$\begin{subarray}{c} p\in\cP_{s,a}(\delta),\\
		\nu\in \cN_{s,a}(\delta)\end{subarray}$}}
    \crbk{   E_{\nu} [R] + \gamma E_{p}\left[v^*(S)\right] }
\end{equation}
where $v^*$ is the DR optimal value function in Definition \ref{def:DRvalue}.
\end{definition}
Similar to the Bellman operator, we can define the DR Bellman operator for
the q-function as follows: 
\begin{definition}
    Given $\delta>0$ and $q\in\R^{ \bd{S}\times\bd{A}}$, the \textit{primal form} of the DR Bellman operator $\cT:\R^{ \bd{S}\times \bd{A}}\to \R^{ \bd{S}\times \bd{A}}$ is defined as
    \begin{equation}\label{eqn:def_bellman_op_primal}
    \cT(q)(s,a):= \inf_{\mbox{$\begin{subarray}{c} p\in\cP_{s,a}(\delta),\\
		\nu\in \cN_{s,a}(\delta)\end{subarray}$}}
    \crbk{  E_{\nu} [R] + \gamma E_{p}\left[v(q)(S)\right] }
    \end{equation}
    The \textit{dual form} of the DR Bellman operator is
    \begin{equation}\label{eqn:def_bellman_op_dual}
    \begin{aligned}\cT(q)(s,a) &= \sup_{\alpha\geq 0}\left\lbrace -\alpha\log E_{\nu_{s,a}}\left[e^{-R/\alpha}\right] - \alpha\delta\right\rbrace\\
    &\quad + \gamma\sup_{\beta\geq 0}\left\lbrace -\beta\log E_{p_{s,a}}\left[e^{-v(q)(S)/\beta}\right] - \beta\delta\right\rbrace.
    \end{aligned}
    \end{equation}
\end{definition}

The equivalence of the primal and dual form follows from Lemma \ref{lemma:strong_dual}. We remark that the dual form is usually easier to work
with, as the outer supremum is a 1-d optimization problem and the dependence
on the reference measures $\nu_{s,a}$ and $p_{s,a}$ are explicit.

Note that by definition \eqref{eqn:def_q*_func} and the Bellman equation \eqref{def:robust_Bellman}, we have $v(q^*) = v^*$. So, our definition
implies that $q^*$ is a fixed point of $\cT$ and the following Bellman
equation for the $q^*$-function holds: 
\begin{equation}  \label{eqn:dr_bellman_eqn_q}
q^* = \cT(q^*).
\end{equation}
The uniqueness of the fixed point $q^*$ of $\cT$ follows from the
contraction property of the operator $\cT$; c.f. Lemma \ref{lemma:monotone_contraction}.

The optimal DR policy can be extracted from the optimal $q$-function by $\pi^*(s) = \arg\max_{a\in\bd{A}}q^*(s,a).$ Hence the goal the DR-RL paradigm is
to learn the DR $q$-function and extract the corresponding robust policy.

\subsection{Synchronous Q-Learning and Stochastic Approximation}

The Q-learning estimates the optimal $q$-function by iteratively update the
estimator $\set{q_k:k\geq 0}$ using samples generated by the reference
measures. The classical synchronous Q-learning proceeds as follows. At
iteration $k\in \Z_{\geq 0}$ and each $(s,a)\in \bd{S}\times\bd{A}$, we draw samples $R_{k+1}\sim \nu_{s,a}$ and $S_{k+1}\sim p_{s,a}$. Then perform the Q-learning
update 
\begin{equation}  \label{eqn:non-drql}
q_{k+1}(s,a)= (1-\lambda_k)q_k(s,a) + \lambda_k(R_{k+1} +\gamma
v(q_k)(S_{k+1}))
\end{equation}
for some chosen step-size sequence $\set{\lambda_k}$.

The synchronous Q-learning can be analyzed as a stochastic approximation
(SA) algorithm. SA for the fixed point of a contraction operator $\cL$
refers to the class of algorithms using the update 
\begin{equation}  \label{eqn:sa_update}
X_{k+1} = (1-\lambda_k)X_{k} + \lambda_k\cL(X_k) + W_{k+1}.
\end{equation}
$\set{W_k}$ is a sequence satisfying $E[W_{k}|W_{k-1},\ds,W_1] = 0$ and some
higher order moment conditions, thence is known as the martingale difference noise. The asymptotics of the above recursion are
well-understood in the literature, as discussed in \citet{kushner2013SA}. The
recent developments of finite-time/sample behavior of SA is discussed in the
literature review. The Q-learning recursion in \eqref{eqn:non-drql} can be represented as an SA update if we notice that given any $q$-function, $R + \gamma v(q)(S) $ is an \textit{unbiased} estimator of the population Bellman operator applied to $q$. However, the DR Q-learning and the variance-reduced version
cannot be formulated in the same way as \eqref{eqn:sa_update} with
martingale difference noise, as there is bias present in the former
algorithms. Consequently, to achieve the nearly optimal sample complexity
bounds, we must conduct a tight analysis of these algorithms as biased SA,
as we will explain in the subsequent sections.

\section{The DR Q-Learning and Variance Reduction}

This section introduces two model-free algorithms, the DR Q-learning
(Section \ref{subsec:dr_ql}) and its variance-reduced version (Section \ref{subsec:dr_vrql}), for learning the optimal $q$-function of a robust MDP. We
also present the upper bounds on their worst-case sample
complexity. In addition, we outline the fundamental ideas behind the proof
of the sample complexity results in Section \ref{sec:analysis_of_algo_overview}. 

Prior to presenting the algorithms, we introduce several notations. Let $\nu_{s,a,n}$ and $p_{s,a,n}$ denote the empirical measure of $\mu_{s,a}$ and 
$p_{s,a}$ formed by $n$ i.i.d. samples respectively; i.e. for $f:\bd{U}\ra \R$,
where $\bd{U}$ could be the $\bd{S}$ or $\bd{R}$, 
\begin{equation}\label{eqn:empirical_meas_exp}
E_{\mu_{s,a,n}}f(U) := \frac{1}{n}\sum_{j=1}^nf(U_i)
\end{equation}
for $\mu = \nu,p$ and $U_i = R_i,S_i$ are i.i.d. across $i$.

Assuming access to a simulator, we are able to draw samples and
construct an empirical version of the DR Bellman operator. 
\begin{definition}\label{def:empirical_Bellman_operator}
Define the \textit{empirical DR Bellman operator} on $n$ i.i.d. samples by
\begin{equation}\label{eqn:empirical_Bellman_operator}
\begin{aligned}
\mathbf{T}(q)(s,a)&\coloneqq \sup_{\alpha\geq 0}\left\lbrace -\alpha\log E_{\nu_{s,a,n}}\left[e^{-R/\alpha}\right] - \alpha\delta\right\rbrace\\
&\quad +\gamma\sup_{\beta\geq 0}\left\lbrace -\beta\log E_{p_{s,a,n}}\left[e^{-v(q)(S)/\beta}\right] - \beta\delta\right\rbrace.
\end{aligned}
\end{equation}
\end{definition}

Note that $\mathbf{T}$ is a random operator whose randomness is coming
from on the samples that we used to construct $\set{\nu_{s,a,n},p_{s,a,n}:(s,a)\in \bd{S}\times\bd{A}}$.

Definition \ref{def:empirical_Bellman_operator} presents the empirical DR
Bellman operator in its dual form. Lemma \ref{lemma:strong_dual} establishes
that this definition is equivalent to the DR Bellman operator $\cT$ in \eqref{eqn:def_bellman_op_primal} where the sets $\cP_{s,a}(\delta)$ and $\cN_{s,a}(\delta)$ are replaced with their empirical counterparts: $\{p: D_{\text{KL}}(p\|p_{s,a,n})\leq \delta\}$ and $\{\nu: D_{\text{KL}}(\nu\|\nu_{s,a,n})\leq \delta\}$.

The dual formulation of the empirical DR Bellman operator implies that it is
generally a biased estimator of the population DR Bellman operator $\cT$ in
the sense that $E\left[\mathbf{T}(q)\right]\neq \cT(q)$ for a generic $q\in \R^{\bd{S}\times\bd{A}}$. This bias poses a significant challenge in the design of model-free
algorithms and the analysis of sample complexities. Previous works \citet{liu22DRQ} and \citet{Wang2023MLMCDRQL}
eliminates this bias by using a randomized multilevel Monte Carlo estimator.
However, the randomization procedure requires a random (and heavy-tailed) sample size.
Therefore, the complexity bound is stated in terms of the expected number of
samples. Also, this complex algorithmic design limits its generalizability.
In contrast, this paper takes a different approach by directly analyzing the
DR Q-learning and its variance-reduced version as biased SA. To achieve
near-optimal sample complexity guarantees, the bias of the empirical DR
Bellman operator and the propagation of the systematic error it causes are
tightly controlled, and samples are optimally allocated so that the
stochasticity is in balance with the cumulative bias.  A detailed discussion
of this approach is provided in Section \ref{sec:analysis_of_algo_overview}.

To state the key assumption which constraint the operating regime of our
algorithm, we introduce the following complexity metric parameter: 
\begin{definition}\label{def:min_supp_prob}
    Define the \textit{minimum support probability} as
    \begin{equation}\label{eqn:min_supp_prob}
    \fr{p}_{\wedge} := \min_{s,a\in \bd{S}\times\bd{A}}\min\set{\min_{r\in\bd{R}:\nu_{s,a}(r) > 0}\nu_{s,a}(r),\min_{s'\in \bd{S}:p_{s,a}(s')>0} p_{s,a}(s')}.
    \end{equation}
\end{definition}

The intuition behind the dependence of the MDP complexity on the minimal
support probability is that in order to estimate the DR Bellman operator
with high accuracy in the worst case, it is necessary to know the entire
support of the transition and reward distributions. As a result, at least $1/\fr{p}_{\wedge}$ samples are required, as discussed in \citet{Wang2023MLMCDRQL}.

We are now prepared to present the main assumption that defines the
operating regime for which our algorithms are optimized. 

\begin{assumption}[Limited Adversarial Power] \label{assump:delta_small}
Suppose the adversary's power $\delta$ satisfies $\delta < \frac{1}{24}\fr{p}_{\wedge}$.
\end{assumption}

It should be noted that the constant $1/24$ is only for mathematical convenience and can potentially be improved.

Under this assumption, the adversary cannot collapse the support of the transition or reward distributions to a singleton, preventing them from completely restricting possible transition events under \(P_0\). This assumption regime is of practical significance because overly conservative policies can be produced if \(\delta\) is large. Furthermore, the support of the reward and transition measures often encode physical constraints intrinsic to the real environment, which the adversary should not be allowed to violate. 

We also make the following simplifying assumption. 
\begin{assumption}[Reward Bound] \label{assump:max_rwd}
The reward $\bd{R}\subset [0,1]$. 
\end{assumption}
This assumption is straightforward to remove given that the results of the empirical Bellman operator hold for $\bd{R}\subset \R_{\geq 0}$. We assume it so as to clarify our presentation. 

\subsection{The Distributionally Robust Q-learning}

\label{subsec:dr_ql} First, we proposed the DR Q-learning Algorithm \ref{alg:q-learning}, a robust version of the classical Q-learning that is based
on iteratively update the $q$-function by applying the $n$-sample empirical
Bellman operator. 
\begin{algorithm}[ht]
   \caption{Distributionally Robust  Q-Learning}
   \label{alg:q-learning}
\begin{algorithmic}
	\STATE {\bfseries Input:} the total times of iteration $k_0$ and a batch size $n_0$.
   \STATE {\bfseries Initialization:} $q_1 \equiv 0$; $k = 1$. 
   \FOR{$1\leq k\leq k_0$}
   \STATE Sample $\mathbf{T}_{k+1}$ the $n_0$-sample empirical DR Bellman operator as in Definition \ref{def:empirical_Bellman_operator}. 
   \STATE Compute the Q-learning update
   \begin{gather}\label{eqn:q-learning_update}
   q_{k+1} = (1-\lambda_{k})q_{k} + \lambda_k\mathbf{T}_{k+1}(q_{k})
   \end{gather}
   with stepsize $\lambda_k = 1/(1+(1-\gamma)k)$. 
   \ENDFOR
   \RETURN $ q_{k_0+1}$.
\end{algorithmic}
\end{algorithm}

Algorithm \ref{alg:q-learning} can be viewed as a \textit{biased} SA: We can rewrite the update \eqref{eqn:q-learning_update} as
\begin{equation*}
q_{k+1} = (1-\lambda_k)q_{k} + \lambda_k\cT(q_k) + \lambda_k(\mathbf{T}_{k+1}(q_k) -\cT(q_k) ).
\end{equation*}
This is in the form
of \eqref{eqn:sa_update}. However, notice that $E[\mathbf{T}_{k+1}(q_k) -\cT(q_k)|q_k]\neq 0$. Moreover, we note that the update \eqref{eqn:q-learning_update} involves computing \(\bd{T}_{k+1}(q_k)(s,a)\) for all \((s,a) \in \bd S \times \bd A\). Unlike a model-based algorithm, which requires storing the entire empirical kernel and reward distributions \(\{p_{s,a,n}, \nu_{s,a,n} : (s,a) \in \bd S \times \bd A\}\), the update rule \eqref{eqn:q-learning_update} can be implemented separately for each state-action pair. This allows \(p_{s,a,n}\) and \(\nu_{s,a,n}\) to be discarded immediately after the update, significantly reducing the memory requirements for running Algorithm \ref{alg:q-learning} when the state space is large. 

It turns out that, by leveraging the fact that the empirical Bellman
operators are monotone contractions w.p.1 (as proven in Lemma \ref{lemma:monotone_contraction}), we can perform a stronger pathwise analysis
of Algorithm \ref{alg:q-learning} instead of treating it as a variant of the
SA update in \eqref{eqn:sa_update}. As a result, we will prove in Section \ref{subsec:proof_drql} that the DR Q-learning algorithm satisfies the following error bound in Proposition \ref{prop:q-learning_err_high_prob_bd}. 
\par To simplify notation, we define the dimensionality parameter $d:= |\bd{S}||\bd{A}|(|\bd{S}|\vee|\bd{R}| )$. It will only show up inside the $\log(\cdot)$ term in our complexity bounds because of the use of union bound techniques.

\begin{proposition}\label{prop:q-learning_err_high_prob_bd}
Suppose that Assumptions \ref{assump:delta_small} and \ref{assump:max_rwd} are satisfied. The output $q_{k_0+1}$ of the distributionally robust Q-learning satisfies
\[
\|q_{k_0+1}-q^*\|_\infty \leq c\crbk{\frac{1}{(1-\gamma)^3k_0} + \frac{1}{\fr{p}_{\wedge}^3(1-\gamma)^2 n_0} + \frac{1}{\fr{p}_{\wedge}(1-\gamma)^{5/2}\sqrt{n_0k_0}}}(\log\crbk{3dk_0/\eta})^2.
\]
with probability at least $1-\eta$, where $c$ is an absolute constant.
\end{proposition}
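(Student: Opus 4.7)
The plan is to view Algorithm \ref{alg:q-learning} as a biased stochastic approximation and to exploit the fact (Lemma \ref{lemma:monotone_contraction}) that each empirical operator $\mathbf{T}_{k+1}$ is almost surely a monotone $\gamma$-contraction in $\|\cdot\|_\infty$. Writing the update as
\begin{equation*}
q_{k+1} - q^* = (1-\lambda_k)(q_k - q^*) + \lambda_k\bigl(\cT(q_k) - q^*\bigr) + \lambda_k\bigl(\mathbf{T}_{k+1}(q_k) - \cT(q_k)\bigr),
\end{equation*}
I would decompose the perturbation $\mathbf{T}_{k+1}(q_k) - \cT(q_k)$ into a conditional bias $b_{k+1} := E[\mathbf{T}_{k+1}(q_k)\mid q_k] - \cT(q_k)$ and a martingale-difference fluctuation $m_{k+1} := \mathbf{T}_{k+1}(q_k) - E[\mathbf{T}_{k+1}(q_k)\mid q_k]$, and push each through the SA recursion. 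Rather than rely on sup-norm contraction alone, I would mirror Wainwright's pathwise Q-learning analysis and sandwich $q_k$ between two scalar-indexed envelope sequences $\underline{q}_k \leq q_k \leq \overline{q}_k$; this is justified here by the almost-sure monotonicity of $\mathbf{T}_{k+1}$ stated in Lemma \ref{lemma:monotone_contraction}.

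The key intermediate estimate is uniform control of $b_{k+1}$ and of the conditional second moments of $m_{k+1}$, simultaneously over $(s,a)$ and over all iterates along the path. I would work with the dual form \eqref{eqn:def_bellman_op_dual}: under Assumption \ref{assump:delta_small} the dual optimizers $(\alpha^*,\beta^*)$ can be confined to an interval whose endpoints depend polynomially on $p_\wedge$ and $(1-\gamma)^{-1}$, after which $\alpha\log E_\nu e^{-r/\alpha}$ and $\beta\log E_p e^{-v(q)(s')/\beta}$ become smooth functions of the empirical exponential moments with controllable first and second derivatives. A second-order Taylor expansion of the log-moment-generating-function functional around its population value then yields an $O(1/n_0)$ bias whose constant tracks the curvature of $\log$ under exponentially-tilted measures (hence the $p_\wedge^{-3}$) and the scale $\|v(q_k)\|_\infty \leq 1/(1-\gamma)$, while a Bernstein-type concentration for the empirical MGF gives sub-Gaussian fluctuation whose standard deviation scales as the Lipschitz constant of the dual times $n_0^{-1/2}$. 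A union bound over $k_0$ iterations and $d = |S||A|(|S|\vee|R|)$ coordinates supplies one $\log(dk_0/\eta)$ factor.

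With these per-step estimates in hand, I would solve the scalar recursion $e_{k+1} \leq (1-(1-\gamma)\lambda_k)e_k + \lambda_k|b_{k+1}| + \lambda_k|m_{k+1}|$ for $e_k := \|q_k - q^*\|_\infty$. For $\lambda_k = 1/(1+(1-\gamma)k)$ the products $\prod_{j=k+1}^{k_0}(1-(1-\gamma)\lambda_j)$ telescope in closed form, which decouples the initial-condition contribution (decaying polynomially in $k_0$, using $\|q_1-q^*\|_\infty\leq 1/(1-\gamma)$) from the accumulated bias contribution and the martingale contribution. Tracking each through the recursion produces the first two terms of the bound, while applying a Freedman-type inequality to the weighted martingale sum $\sum_k w_k m_{k+1}$ with $w_k := \lambda_k\prod_{j>k}(1-(1-\gamma)\lambda_j)$ produces the cross-term in $\sqrt{n_0 k_0}$; the squared-log factor emerges from combining the iteration-level union bound with a peeling argument needed to control the data-dependent predictable quadratic variation.

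The step I expect to be the main obstacle is pinning down the bias constants sharply and uniformly along the adaptive iterates. Generic KL-DRO analyses give $O(1/n_0)$ bias with constants that diverge as the dual optimizer approaches the boundary; Assumption \ref{assump:delta_small} is exactly what keeps $\alpha^*,\beta^*$ bounded away from $0$ and $\infty$ uniformly in the random sequence $\{q_k\}$, but propagating this uniformity through the adapted filtration, and ensuring that the resulting bias does not accumulate faster than the SA rate, requires careful bookkeeping of the interplay between $p_\wedge$, $(1-\gamma)$, and the scale of $v(q_k)$.
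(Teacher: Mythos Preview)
Your high-level plan---treat the recursion as biased SA, use Wainwright's monotone-contraction sandwich, separate bias from fluctuation, then concentrate---is the right skeleton and would eventually produce the bound. But the paper's decomposition differs from yours in one structural way that makes the analysis substantially lighter and, in particular, dissolves the obstacle you flag at the end.

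You write the update around the \emph{population} operator,
\[
q_{k+1}-q^* = (1-\lambda_k)(q_k-q^*) + \lambda_k(\cT(q_k)-q^*) + \lambda_k\bigl(\mathbf{T}_{k+1}(q_k)-\cT(q_k)\bigr),
\]
so your noise $\mathbf{T}_{k+1}(q_k)-\cT(q_k)$ is evaluated at the adaptive iterate $q_k$, giving a $q_k$-dependent bias $b_{k+1}$ and only a martingale-difference fluctuation $m_{k+1}$. The paper instead uses the \emph{random} empirical operator as the contraction and recenters at the fixed point:
\[
q_{k+1}-q^* = (1-\lambda_k)(q_k-q^*) + \lambda_k\bigl(\mathbf{T}_{k+1}(q_k)-\mathbf{T}_{k+1}(q^*)\bigr) + \lambda_k W_{k+1},\qquad W_{k+1}:=\mathbf{T}_{k+1}(q^*)-\cT(q^*).
\]
Because $\mathbf{T}_{k+1}$ is a.s.\ a monotone $\gamma$-contraction, Proposition~\ref{prop:contraction_err_bd} applies with $\cL_k=\mathbf{T}_{k+1}$, and now $W_{k+1}$ is evaluated at the \emph{fixed, non-random} $q^*$. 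This makes $\{W_k\}$ i.i.d., so the bias $\beta:=EW_{k+1}$ is a single constant vector bounded once by Proposition~\ref{prop:bias_empirical_Bellman}, and the zero-mean part $U_{k+1}=W_{k+1}-\beta$ is an i.i.d.\ bounded sequence to which the elementary Bernstein-type Lemma~\ref{lemma:AR_seq_bd} applies directly---no Freedman, no peeling, no uniform control along the random trajectory. The auxiliary process from Proposition~\ref{prop:contraction_err_bd} then satisfies $P_k=Q_k+\beta$ exactly, with $Q_k$ the AR process driven by $U_k$; the three terms in the statement fall out of the initial condition, the constant bias $\beta$, and the concentration of $Q_k$, respectively.

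Your route can be pushed through (apply Proposition~\ref{prop:contraction_err_bd} with $\cL_k=\cT$, split $P_k$ into a bias part bounded uniformly via $\spnorm{q_k}\le 2/(1-\gamma)$ and a martingale part handled by Freedman with a union bound over $j\le k_0$), but it is strictly harder, and the scalar recursion $e_{k+1}\le(1-(1-\gamma)\lambda_k)e_k+\lambda_k\|b_{k+1}\|_\infty+\lambda_k\|m_{k+1}\|_\infty$ you also describe cannot by itself recover the $1/\sqrt{n_0k_0}$ term, since taking $\|m_{k+1}\|_\infty$ inside the recursion destroys the cancellation. One correction of intuition: Assumption~\ref{assump:delta_small} does not keep the dual optimizers away from zero (indeed $\alpha^*=0$ whenever $v(q)$ is $p_{s,a}$-essentially constant); its role is to guarantee that the worst-case tilted measure retains mass at least $p_\wedge/2$ on every support point (Lemma~\ref{lemma:worst_case_meas_bound}), which is what drives the $p_\wedge^{-3}$ in the $O(1/n_0)$ bias bound.
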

By ``absolute constant", we mean a constant that does not depend on the complexity metric parameters \(\epsilon, \fr{p}_\wedge, (1-\gamma)^{-1}, \eta, d\). Although the logarithmic term in the above proposition can be further improved, we will not focus on optimizing the logarithmic dependence in this paper. For clarity, we adjust the constant in the logarithmic factor using the inequality for \(C_1 \geq 1, C_2 \geq e\), \(\log(C_1C_2) = \log(C_1) + \log(C_2) \leq C_1 \log(C_2)\), and incorporate \(C_1\) into \(c\). These adjustments are applied to all subsequent convergence results.

\par The proof of this Proposition, which is outlined in Section \ref
{sec:analysis_of_algo_overview}, will be postponed to Section \ref
{subsec:proof_drql}.

Proposition \ref{prop:q-learning_err_high_prob_bd} provides an upper bound
on the terminal error in the estimator after $k_0$ iterations of Algorithm 
\ref{alg:q-learning}. This bound is given by three terms that decay with
rate $\tilde O(k_0^{-1})$, $\tilde O(n_0^{-1})$, and $\tilde
O((k_0n_0)^{-1/2})$, respectively, where the first and third terms resemble the upper bounds for standard Q-learning and the second term arises because of the bias. We optimize the algorithm parameters to
balance these three terms and ensure that the right-hand side of the
probability bound in Proposition \ref{prop:q-learning_err_high_prob_bd} is
less than $\epsilon$. One way to achieve this is by selecting the parameters 
$n_0$ and $k_0$ as follows: 
\begin{corollary}
\label{cor:ql_param_err_high_prob_bd}
Assume Assumptions \ref{assump:delta_small} and \ref{assump:max_rwd}. Running Algorithm \ref{alg:q-learning} with parameters
\begin{align*}
k_0 = c_0\frac{1}{(1-\gamma)^3\epsilon}\log\crbk{\frac{3d}{(1-\gamma)\eta \epsilon}}^2 ~\text{ and }~
n_0 =  c_0\frac{1}{\fr{p}_{\wedge}^3(1-\gamma)^2\epsilon}\log\crbk{3dk_0/\eta}^2
\end{align*}
will produce an output $q_{k_0+1}$ s.t. $\| q_{k_0+1}-q^*\|_\infty\leq \epsilon$ w.p. at least $1-\eta$, where $c$ is an absolute constant.
\end{corollary}
\par An immediate consequence of Corollary \ref{cor:ql_param_err_high_prob_bd} is
the following the worst-case sample complexity upper bound of the robust
Q-learning. 
\begin{theorem}\label{thm:ql_sample_complexity}
Assume Assumptions \ref{assump:delta_small} and \ref{assump:max_rwd}. Then the distributionally robust Q-learning Algorithm \ref{alg:q-learning} with parameters specified in Corollary \ref{cor:ql_param_err_high_prob_bd} computes a solution $ q_{k_0+1}$ s.t. $\| q_{k_0+1} - q^*\|_\infty$ w.p. at least $1-\eta$ using 
\[
\tilde O\crbk{\frac{|\bd{S}||\bd{A}|}{\fr{p}_{\wedge}^3(1-\gamma)^5\epsilon^2}}
\]
number of samples. 
\end{theorem}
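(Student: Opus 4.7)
The theorem is essentially a bookkeeping corollary of Corollary~\ref{cor:ql_param_err_high_prob_bd}: with the choice of $k_0$ and $n_0$ made there, the output already satisfies $\|q_{k_0+1}-q^*\|_\infty \le \epsilon$ with probability at least $1-\eta$, so the only remaining task is to count how many samples Algorithm~\ref{alg:q-learning} actually consumes and verify that the product matches the claimed rate.

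The plan is as follows. First I would observe that in each of the $k_0$ iterations the algorithm constructs the empirical DR Bellman operator $\mathbf{T}_{k+1}$ from Definition~\ref{def:empirical_Bellman_operator}, which requires $n_0$ reward samples and $n_0$ next-state samples for \emph{every} state-action pair $(s,a)\in S\times A$. Hence the total number of samples used over the whole run is exactly of order $|S||A|\, k_0\, n_0$ (up to the factor $2$ arising from drawing from both $\nu_{s,a}$ and $p_{s,a}$, which is absorbed in the $\tilde O$). I would then apply Corollary~\ref{cor:ql_param_err_high_prob_bd} directly to invoke the high-probability guarantee.

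Next I would substitute the explicit forms
\[
k_0 \;=\; c_1\,\frac{1}{(1-\gamma)^3\epsilon}\,\log\!\crbk{\frac{4d}{(1-\gamma)\eta\epsilon}}^3, \qquad
n_0 \;=\; c_2\,\frac{\log(4dk_0/\eta)^2}{p_\wedge^3(1-\gamma)^2\epsilon},
\]
and multiply to obtain
\[
|S||A|\, k_0\, n_0 \;=\; \tilde O\!\crbk{\frac{|S||A|}{p_\wedge^3(1-\gamma)^5\epsilon^2}}.
\]
Here I would be careful that $\log(4dk_0/\eta)$ is itself $O(\log(d/((1-\gamma)\eta\epsilon)))$ after plugging in $k_0$, so all logarithmic factors can be safely hidden inside the $\tilde O$ notation. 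Combining the bound on the number of samples with the $\epsilon$-accuracy guarantee from Corollary~\ref{cor:ql_param_err_high_prob_bd} completes the proof.

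The main obstacle in this particular statement is essentially nonexistent, since the heavy lifting has been done upstream in Proposition~\ref{prop:q-learning_err_high_prob_bd}, where the three-term decomposition of the error (the $\tilde O(k_0^{-1})$, $\tilde O(n_0^{-1})$, and $\tilde O((k_0n_0)^{-1/2})$ terms) is balanced; the scaling $k_0\sim (1-\gamma)^{-3}\epsilon^{-1}$ and $n_0\sim p_\wedge^{-3}(1-\gamma)^{-2}\epsilon^{-1}$ in Corollary~\ref{cor:ql_param_err_high_prob_bd} is precisely what arises from that balance. The only subtlety worth pointing out in the proof is why the sample count is $|S||A|\,k_0\,n_0$ rather than $k_0\,n_0$: the synchronous nature of the update means that at every iteration the empirical operator must be sampled at \emph{every} $(s,a)$ pair in order to update the full matrix $q_k\in\R^{S\times A}$. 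Once this is stated, the conclusion follows by direct multiplication.
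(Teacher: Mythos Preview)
Your proposal is correct and matches the paper's own proof, which is the one-liner ``The total number of samples used is $|S||A|n_0k_0$, implying the sample complexity upper bound.'' You have simply spelled out the multiplication and the justification for the $|S||A|$ factor in more detail than the paper does.
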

\begin{proof}
The total number of samples used is $|\bd{S}||\bd{A}|n_0k_0$, implying the sample complexity upper bound. 
\end{proof}
Theorem \ref{thm:ql_sample_complexity} provides a near-optimal worst-case
sample complexity guarantee that matches and beats the expected sample
complexity upper bound in \citet{Wang2023MLMCDRQL} in all parameter
dependence. In particular, we have shown that the dependence on $\delta$ is $O(1)$ as $\delta\da 0$. This resolves the issue of the worst-case complexity
bound blowing up as $\delta\da 0$ for KL divergence based DR-RL that present in all prior works \citep{yang2021,Panaganti2021,ShiChi2022,Wang2023MLMCDRQL}.

\subsection{The Variance-Reduced Distributionally Robust Q-learning}

\label{subsec:dr_vrql} We adapt Wainwright's variance-reduced Q-learning \citep{wainwright2019} to
the robust RL setting.  This is outlined in Algorithm \ref{alg:vr_q-learning}. 
\begin{algorithm}[ht]
   \caption{Variance-Reduced Distributionally Robust Q-Learning}
   \label{alg:vr_q-learning}
\begin{algorithmic}
	\STATE {\bfseries Input:} the number of epochs $l_\mathrm{vr}$,  a sequence of recentering sample size $\{m_l\}_{l=1}^{l_\mathrm{vr}}$, an epoch length $k_\mathrm{vr}$  and a batch size $n_\mathrm{vr}$. 
   \STATE {\bfseries Initialization:}  $\hat q_0\equiv0$; $l = 1$; $k = 1$.
   \FOR{$1\leq l\leq l_\mathrm{vr}$}
   \STATE Compute $\widetilde{\mathbf{T}}_{l}$, $m_l$-sample empirical DR Bellman operator as in Definition \ref{def:empirical_Bellman_operator}.  
   \STATE Set $q_{l,1} = \hat q_{l-1}$.
   \FOR{$1\leq k\leq k_\mathrm{vr}$}
   \STATE Sample $\mathbf{T}_{l,k+1}$ an $n_\mathrm{vr}$-sample empirical Bellman operator. 
   \STATE Compute the recentered Q-learning update
   \begin{equation}\label{eqn:vr_q-learning_update}
   q_{l,k+1} = (1-\lambda_{k})q_{l,k} + \lambda_k\crbk{\mathbf{T}_{l,k+1}(q_{l,k}) - \mathbf{T}_{l,k+1}(\hat q_{l-1}) + \widetilde {\mathbf{T}}_{l}(\hat q_{l-1})}
   \end{equation}
   with stepsize $\lambda_k = 1/(1+(1-\gamma)k)$. 
   \ENDFOR
   \STATE Set $\hat q_{l} = q_{l,k_\mathrm{vr}+1}$.
   \ENDFOR
   \RETURN $\hat q_{l_\mathrm{vr}}$
\end{algorithmic}
\end{algorithm}

As in the Q-learning case, the update rule \eqref{eqn:vr_q-learning_update} can be implemented separately for each state-action pair. Thus, Algorithm \ref{alg:vr_q-learning} does not require storing or performing computations using the entire empirical kernel and reward distribution.

Before delving into the convergence rate theory of the DR variance-reduced
Q-learning, we provide an intuitive description of this variance reduction
scheme. The basic idea is to partition the algorithm into epochs. During each epoch, we perform a ``recentered'' version of stochastic approximation recursions with the aim of eliminating the variance component in the SA iteration (\eqref{eqn:sa_update}). Specifically, instead of approximating $q^*$ by one stochastic approximation, in each epoch, starting with an estimator $\hat q_{l-1}$, we recenter the SA procedure so that it approximates $\cT(q_{l-1})$. However, since $\cT$ is not known, we use $\widetilde {\mathbf{T}}_l(q_{l-1})$ as an natural estimator. By choosing a sequence of empirical DR Bellman operators with exponentially increasing sample sizes, we expect that the errors $\|\hat q_{l}-q^*\|_\infty$  decrease exponentially with high probability.

This indeed holds true for Algorithm \ref{alg:vr_q-learning}. The outer loop
produces a sequence of estimators ${\hat{q}_l,l\geq 1}$. We will show that if 
$\hat{q}_{l-1}$ is within some error from the optimal $q^*$, then $\hat{q}_{l} $ will satisfy a better concentration bound by a geometric factor. This
result is summarized in Proposition \ref{prop:one_vr_iter_high_prob_bd}. 
\par Denote the $\sigma$-field generated by the random samples used until the end of epoch $l$ by $\cF_{l}$. We define the conditional expectation $E_{l-1}[\cd] := E[\cd|\cF_{l-1}]$ and probability measure $P_{l-1}(\cd) := E_{l-1}[\1\set{\cd}]$.
\begin{proposition}\label{prop:one_vr_iter_high_prob_bd}
Assuming that Assumptions \ref{assump:delta_small} and \ref{assump:max_rwd} are satisfied. On $\set{\omega:\|\hat q_{l-1}-q^*\|_\infty\leq b}$ for some  $b\leq 1/(1-\gamma)$, under measure $P_{l-1}(\cd)(\omega)$, we have that there exists numerical constant $c$ s.t.
\begin{align*}
\|\hat q_{l} - q^*\|_\infty &\leq c\crbk{\frac{b}{(1-\gamma)^2k_\mathrm{vr}} +\frac{b}{\fr{p}_{\wedge}^{3/2}(1-\gamma)^{3/2}\sqrt{n_\mathrm{vr}k_\mathrm{vr}}} + 
\frac{b}{\fr{p}_{\wedge}^{3/2}(1-\gamma)\sqrt{n_\mathrm{vr}}} }\log\crbk{3 d k_\mathrm{vr}/\eta}^2 \\
&\quad +  c\frac{1}{\fr{p}_{\wedge}^{3/2} (1-\gamma)^2\sqrt{m_l}}\sqrt{\log(3 d/\eta)}
\end{align*}
w.p. at least $1-\eta$, provided that $m_l\geq 8\fr{p}_{\wedge}^{-2}\log(24d/\eta)$ and $n_\mrm{vr}\geq \fr{p}_\wedge\inv$. 
\end{proposition}

\par Proposition \ref{prop:one_vr_iter_high_prob_bd} implies that if the variance
reduced algorithm finds an approximation of $q^*$ with infinity norm $b$,
then the error after one epoch is improved accordingly with high
probability. This and the Markovian nature of the sequence $\set{\hat q_{l}}$
would imply a high probability bound for trajectories satisfying the
pathwise property $\set{\omega:\forall{l}\leq l_\mathrm{vr}:\|\hat q_{l}-q^*\|\leq b_l}
$. This is formalized by the next theorem where we use $b_l = 2^{-l}(1-\gamma)^{-1}$. 

Let us define the parameter choice: for sufficiently large $c_{\mrm{vr}}$ absolute constant that doesn't depend on the complexity metric parameters $\epsilon,\fr{p}_\wedge,(1-\gamma)\inv,\eta,d$,
define 
\begin{equation}  \label{eqn:param_choice_for_vrql}
\begin{aligned} 
l_\mathrm{vr} &=
\ceil{\log_2\crbk{\frac{1}{\epsilon(1-\gamma)}}},\\ 
k_{\mathrm{vr}} &= c_{\mrm{vr}}\frac{1}{(1-\gamma)^2}\log\crbk{\frac{3dl_\mathrm{vr}}{(1-\gamma)\eta}}^2,\\ n_\mathrm{vr} &=
c_{\mrm{vr}}\frac{1}{\fr{p}_{\wedge}^3(1-\gamma)^2}\log(3 d  k_\mathrm{vr} l_\mathrm{vr}/\eta)^4,\\ m_l &=
c_{\mrm{vr}}\frac{4^l}{\fr{p}^3_\wedge(1-\gamma)^2}\log(3 dl_\mathrm{vr}/\eta)^2. \end{aligned}
\end{equation}
Notice that evidently $m_l\geq 8\fr{p}_{\wedge}^{-2}\log(24d/\eta)$ and  $n_\mrm{vr}\geq \fr{p}_\wedge\inv$, satisfying the
requirement of Proposition \ref{prop:one_vr_iter_high_prob_bd}.
% Recall that $\hat q_0 = 0$; so we let $b = 1/(1-\gamma)$. 
\begin{proposition}
  \label{prop:vr_algo_err_high_prob_bd}
Assume Assumptions \ref{assump:delta_small} and \ref{assump:max_rwd}.  For $\epsilon < (1-\gamma)^{-1} $, define parameters according to\eqref{eqn:param_choice_for_vrql}. Then, the sequence $\set{\hat q_l,0\leq l\leq l_\mathrm{vr}}$ produced by Algorithm \ref{alg:vr_q-learning} satisfies the pathwise property that $\|\hat q_l-q^*\|_\infty\leq 2^{-l}(1-\gamma)^{-1}$ for all $0\leq l\leq l_\mathrm{vr}$ w.p. at least $1-\eta$. In particular, the final estimator $\hat q_{l_\mathrm{vr}}$ satisfies $\|\hat q_{l_\mathrm{vr}}-q^*\|_\infty\leq 2^{-l_\mathrm{vr}}(1-\gamma)^{-1}$ w.p. at least $1-\eta$.   
\end{proposition}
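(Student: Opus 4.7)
The plan is to prove the pathwise bound by induction on the epoch index $l$, invoking Proposition \ref{prop:one_vr_iter_high_prob_bd} at each step with a geometrically shrinking recentering radius $b_l := 2^{-l}(1-\gamma)^{-1}$ and a uniform confidence budget $\eta/l_{\mathrm{vr}}$ per epoch. For the base case $l=0$, since $\hat q_0 \equiv 0$ and $R \subseteq [0,1]$ implies $0 \le q^* \le (1-\gamma)^{-1}\mathbf{1}$, we get $\|\hat q_0 - q^*\|_\infty \le (1-\gamma)^{-1} = b_0$ deterministically. For the inductive step, define the pathwise events $E_l := \{\|\hat q_l - q^*\|_\infty \le b_l\}$ and their intersections $A_l := \bigcap_{j=0}^l E_j$. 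On $A_{l-1} \in \cF_{l-1}$ the hypothesis of Proposition \ref{prop:one_vr_iter_high_prob_bd} holds with $b = b_{l-1}$, so applying it with confidence parameter $\eta/l_{\mathrm{vr}}$ yields
\begin{equation*}
P_{l-1}\!\left(\|\hat q_l - q^*\|_\infty > c\crbk{\frac{b_{l-1}}{(1-\gamma)^2 k_{\mathrm{vr}}} + \frac{b_{l-1}}{p_\wedge^{3/2}(1-\gamma)^{3/2}\sqrt{n_{\mathrm{vr}} k_{\mathrm{vr}}}}}\log(8dk_{\mathrm{vr}} l_{\mathrm{vr}}/\eta)^2 + \frac{c\,\log(24dl_{\mathrm{vr}}/\eta)}{p_\wedge^{3/2}(1-\gamma)^2 \sqrt{m_l}}\right) \le \frac{\eta}{l_{\mathrm{vr}}}.
\end{equation*}

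The next step is to verify that the parameter choices in \eqref{eqn:parm_choice_for_vr_q} make the right-hand-side error above at most $b_l = b_{l-1}/2$. I would allocate a budget of $b_l/3$ to each of the three terms. The first term becomes $(c/c_1)\log(\cdots)^2 b_{l-1}$, which is bounded by $b_{l-1}/6$ for $c_1$ chosen larger than a constant multiple of $\log(\cdots)^2$ (absorbed into $c_1$, or stated as holding for sufficiently large $c_1$). The second term becomes of order $b_{l-1}/\sqrt{c_1 c_2}$ up to logarithmic factors that are cancelled by the $\log^4$ in $n_{\mathrm{vr}}$, again taking $c_2$ large. The third term, which does not shrink with $b_{l-1}$, is where the epoch-dependent choice $m_l \propto 4^l$ enters: plugging it in yields $c/\sqrt{c_3}\cdot 2^{-l}(1-\gamma)^{-1} = (c/\sqrt{c_3}) b_l$, bounded by $b_l/3$ for $c_3$ sufficiently large. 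Hence on $A_{l-1}$, we have $P_{l-1}(E_l^c) \le \eta/l_{\mathrm{vr}}$.

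To conclude, chain the events by a standard union bound. Writing
\begin{equation*}
P(A_{l_{\mathrm{vr}}}^c) \le \sum_{l=1}^{l_{\mathrm{vr}}} P(E_l^c \cap A_{l-1}) = \sum_{l=1}^{l_{\mathrm{vr}}} E\!\left[\mathbf{1}_{A_{l-1}}\, P_{l-1}(E_l^c)\right] \le l_{\mathrm{vr}} \cdot \frac{\eta}{l_{\mathrm{vr}}} = \eta,
\end{equation*}
we obtain the pathwise statement, and in particular the final estimator bound $\|\hat q_{l_{\mathrm{vr}}} - q^*\|_\infty \le 2^{-l_{\mathrm{vr}}}(1-\gamma)^{-1}$ with probability at least $1-\eta$.

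The routine part is the induction and union bound. The main obstacle, which requires some care, is the bookkeeping in step two: simultaneously matching the three error components in Proposition \ref{prop:one_vr_iter_high_prob_bd} against $b_l = b_{l-1}/2$, while verifying that the logarithmic arguments used in the per-epoch bound (which involve $\eta/l_{\mathrm{vr}}$ rather than $\eta$) are absorbed by the $\log$ factors appearing in $n_{\mathrm{vr}}$ and $m_l$. The crucial structural point is that only $m_l$ needs to grow with $l$ (since the first two terms already contain the contraction factor $b_{l-1}$), which is why an exponentially increasing recentering sample size $m_l \propto 4^l$ suffices while $k_{\mathrm{vr}}$ and $n_{\mathrm{vr}}$ remain constant across epochs.
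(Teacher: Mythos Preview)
Your proposal is correct and follows essentially the same approach as the paper: both apply Proposition \ref{prop:one_vr_iter_high_prob_bd} at each epoch with per-epoch confidence $\eta/l_{\mathrm{vr}}$ and radius $b_{l-1}=2^{-(l-1)}(1-\gamma)^{-1}$, verify that the parameter choice \eqref{eqn:parm_choice_for_vr_q} forces the resulting bound to halve to $b_l$, and then chain across epochs. The only cosmetic difference is the final aggregation step: the paper multiplies conditional success probabilities and uses the elementary convexity bound $(1-\eta/l_{\mathrm{vr}})^{l_{\mathrm{vr}}}\ge 1-\eta$, whereas you use the equivalent first-failure decomposition $P(A_{l_{\mathrm{vr}}}^c)=\sum_l P(E_l^c\cap A_{l-1})$ together with $A_{l-1}\in\cF_{l-1}$, which is slightly cleaner and avoids that convexity estimate.
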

\begin{remark}\label{remark:geometric_stepsize}
The base of geometric growth in our choice of \(m_l\) in \eqref{eqn:param_choice_for_vrql} can be modified. The same proof as in Proposition \ref{prop:vr_algo_err_high_prob_bd} suggests that with \(m_l = \alpha^{2l}\tilde\Theta(\fr{p}^{-3}_\wedge(1-\gamma)^{-2})\) and \(l_{\mrm{vr}} = \lceil \log_{\alpha}\left(\epsilon^{-1}(1-\gamma)^{-1}\right) \rceil\) for some \(\alpha > 1\), we have \(\|\hat q_l - q^*\|_\infty \leq \alpha^{-l}(1-\gamma)^{-1}\) for all \(0 \leq l \leq l_\mathrm{vr}\) with probability at least \(1-\eta\). Running Algorithm \ref{alg:vr_q-learning} with this new parameter choice will yield the same sample complexity as in Theorem \ref{thm:vrql_sample_complexity}. The choice of base 4 in \eqref{eqn:param_choice_for_vrql} was made only for clarity in our presentation.
\end{remark}
\par Proposition \ref{prop:vr_algo_err_high_prob_bd} immediately implies the
following worst-case sample complexity upper bound. 
\begin{theorem}\label{thm:vrql_sample_complexity}
Assume Assumptions \ref{assump:delta_small} and \ref{assump:max_rwd}. For $\epsilon < (1-\gamma)^{-1} $, the variance-reduced DR Q-learning Algorithm \ref{alg:vr_q-learning} with parameters specified in \eqref{eqn:param_choice_for_vrql} computes a solution $\hat q_{l_\mathrm{vr}}$ s.t. $\|\hat q_{l_\mathrm{vr}} - q^*\|_\infty\leq \epsilon$ w.p. at least $1-\eta$ using 
\[
\tilde O\crbk{\frac{|\bd{S}||\bd{A}|}{\fr{p}_{\wedge}^3(1-\gamma)^4\min(1,\epsilon^2)}}
\]
number of samples.
\end{theorem}
\begin{proof}
Given the specified parameters, the total number of samples used is
    \begin{align*}
     |\bd{S}||\bd{A}|\crbk{l_\mathrm{vr}n_\mathrm{vr}k_\mathrm{vr} + \sum_{l=1}^{l_\mathrm{vr}}m_l}= \tilde O\crbk{|\bd{S}||\bd{A}|\crbk{\frac{1}{\fr{p}_{\wedge}^3(1-\gamma)^4} + \frac{4^{l_\mathrm{vr}}}{\fr{p}_{\wedge}^3(1-\gamma)^2}}}
    \end{align*}
This simplifies to the claimed result. 
\end{proof}
Theorem \ref{thm:vrql_sample_complexity} establishes an upper bound of \(\tilde{O}\left(|\mathbf{S}||\mathbf{A}|(1-\gamma)^{-4}\epsilon^{-2}\fr{p}_{\wedge}^{-3}\right)\) when \(\epsilon \leq 1\), which is superior to the upper bound \(\tilde{O}\left(|\mathbf{S}||\mathbf{A}|(1-\gamma)^{-5}\epsilon^{-2}\fr{p}_{\wedge}^{-3}\right)\) for Algorithm \ref{alg:q-learning} (see Theorem \ref{thm:ql_sample_complexity}) in terms of \(1-\gamma\). This represents the best-known upper bound for DR-RL problems in the KL case, including both model-free and model-based algorithms \citep{ShiChi2022}. Although \citet{ShiChi2022} achieve a similar rate of \(\tilde{O}\left((1-\gamma)^{-4}\right)\), their result suffers from a \(\tilde{O}\left(\delta^{-2}\right)\) dependence, which becomes problematic as \(\delta \to 0\). In contrast, our upper bound is free from \(\delta\)-dependence.

\par We recall that the information-theoretical lower bound for the sample complexity of the classical tabular RL problem is \(\tilde{\Omega}\left(|\mathbf{S}||\mathbf{A}|(1-\gamma)^{-3}\epsilon^{-2}\right)\) \citep{azar2013}. In this setting, the variance-reduced Q-learning algorithm in \citet{wainwright2019} is minimax optimal.  For distributionally robust RL, \citet{ShiChi2022} recently showed that the minimax lower bound dependence on $|\mathbf{S}||\mathbf{A}|$, $(1-\gamma)\inv$, and $\epsilon$ remains \(\tilde{\Omega}\left(|\mathbf{S}||\mathbf{A}|(1-\gamma)^{-3}\epsilon^{-2}\right)\) when $\delta$ is small. Furthermore, \citet{shi2024curious} showed the information-theoretical lower bound may be \(\tilde{\Omega}\left(|\mathbf{S}||\mathbf{A}|(1-\gamma)^{-4}\epsilon^{-2}\right)\) when $\delta = O(1)$ for $\chi^2$-divergence uncertainty sets. However, their construction of hard instances violates our Assumption \ref{assump:delta_small}.  It is currently unknown whether variance-reduced DR Q-learning can achieve those rates. Further refinement of this bound is left for future research.

\par Notice that the variance-reduced Algorithm \ref{alg:vr_q-learning} has the property that \(k_{\mathrm{vr}}, n_{\mathrm{vr}},\) and \(m_l\) only depend on \(\frac{1}{\epsilon}\) through \(\log(l_{\mathrm{vr}}) = \Theta(\log \log \frac{1}{\epsilon})\). Therefore, within a reasonable range of \(\epsilon\), the algorithm can operate with the sample complexity guarantee in Theorem \ref{thm:vrql_sample_complexity} without needing to tune \(k_{\mathrm{vr}}, n_{\mathrm{vr}},\) and \(m_l\) based on \(\epsilon\). This introduces significant versatility in application: for example, we can continue to run the algorithm beyond termination epoch \(l_{\mathrm{vr}}\) without losing sample efficiency.
\subsection{Overview of the Analysis of Algorithms}

\label{sec:analysis_of_algo_overview} In this section, we provide a road map
to proving the key results, Proposition \ref{prop:q-learning_err_high_prob_bd}
and \ref{prop:one_vr_iter_high_prob_bd}. 
\begin{definition}
We say that $\cL$ is a monotonic $\gamma$-quasi-contraction with center $q'$ if
\begin{equation}
\|\cL(q)-\cL(q')\|_\infty \leq \gamma \|q-q'\|_\infty,
\label{Def:contraction:quasi}
\end{equation}
and entrywise
\begin{equation}
q_1\geq q_2 \implies \cL(q_1)\geq\cL(q_2)
\end{equation}
for all $q,q_1,q_2\in\R^{|\bd{S}|\times |\bd{A}|}$. Moreover, a monotonic $\gamma$-contraction is such that the above identities hold for all $q'\in\R^{|\bd{S}|\times |\bd{A}|}$. 
\end{definition}
The term \textit{quasi} refers to the fact that the relation \ref{Def:contraction:quasi} is only required for a single $q'$ \citep{wainwright2019l_infty}.
Therefore, a monotonic $\gamma$-contraction is a quasi-contraction with
center $q^{\prime }$ for any $q^{\prime }\in\R^{\bd{S}\times\bd{A}}$.

The successive application of monotonic $\gamma$-contractions under the
rescaled linear stepsize $\lambda_k = \frac{1}{1+(1-\gamma)k}$ will satisfy
the following deterministic bound: 
\begin{proposition}[Corollary 1, \citet{wainwright2019l_infty}]\label{prop:contraction_err_bd}
Let $\set{\cL_k,k\geq 2}$ be a family of monotonic $\gamma$-quasi-contractions with center $q'$. Let $\cH_k(q) = \cL_k(q)-\cL_k(q')$ the recentered operator. Then, for the sequence of step sizes $\set{\lambda_k,k\geq 1}$ the iterates of 
\begin{equation}\label{eqn:recentered_q_update}
q_{k+1} - q' = (1-\lambda_k)(q_{k}-q') + \lambda_{k}\sqbk{ \cH_{k+1}(q_{k}) + w_{k+1}}
\end{equation}
satisfies 
\[
\|q_{k+1} - q'\|_{\infty}\leq \lambda_k \sqbk{\frac{\|q_1-q'\|_\infty}{\lambda_1} + \gamma\sum_{j = 1}^{k} \|p_j\|_\infty } + \|p_{k+1}\|_\infty
\]
for all $k\geq 1$, where the sequence $\set{p_k,k\geq 1}$ is defined by $p_1= 0$ and
\[
p_{k+1}:= (1-\lambda_{k})p_k +\lambda_kw_{k+1}. 
\]
\end{proposition}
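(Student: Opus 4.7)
The plan is to introduce the error decomposition $\Delta_k := q_k - q'$ and $e_k := \Delta_k - p_k$, so that the martingale-like noise term is completely absorbed into the auxiliary sequence $p_k$. Substituting into the recursion \eqref{eqn:recentered_q_update} and using the definition of $p_{k+1}$, the $\lambda_k w_{k+1}$ contributions cancel exactly, yielding the cleaner deterministic-looking recursion
\[
e_{k+1} = (1-\lambda_k)\,e_k + \lambda_k\,\cH_{k+1}(q_k).
\]
The virtue of this change of variables is that $e_k$ now evolves under the quasi-contractive operator $\cH_{k+1}$ alone, while all the stochasticity is packaged into $p_k$, whose norm will appear as a forcing term.

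Next I would bound $\|\cH_{k+1}(q_k)\|_\infty$. By the quasi-contraction hypothesis \eqref{Def:contraction:quasi} centered at $q'$, we have $\|\cH_{k+1}(q_k)\|_\infty = \|\cL_{k+1}(q_k) - \cL_{k+1}(q')\|_\infty \leq \gamma\|\Delta_k\|_\infty \leq \gamma(\|e_k\|_\infty + \|p_k\|_\infty)$, where the last step is the triangle inequality. Taking $\|\cdot\|_\infty$ in the recursion for $e_{k+1}$ then produces
\[
\|e_{k+1}\|_\infty \;\leq\; \bigl(1 - (1-\gamma)\lambda_k\bigr)\|e_k\|_\infty + \gamma\lambda_k\|p_k\|_\infty.
\]
This is the key scalar inequality I will unroll. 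The role of monotonicity in the hypothesis is mostly ensuring that these pointwise/infinity-norm arguments are the sharp ones (one can obtain the same bound via a sandwich argument using monotonicity and the quasi-contraction on $q'\pm\|\Delta_k\|_\infty\mathbf{1}$).

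The main technical step is exploiting the rescaled linear stepsize $\lambda_k = 1/(1+(1-\gamma)k)$, which satisfies the crucial identity $1-(1-\gamma)\lambda_k = \lambda_k/\lambda_{k-1}$ (with $\lambda_0 := 1$). Dividing both sides of the scalar inequality by $\lambda_k$ and setting $u_k := \|e_k\|_\infty/\lambda_{k-1}$ collapses the multiplicative factor to 1, giving the telescoping relation $u_{k+1} \leq u_k + \gamma\|p_k\|_\infty$. Iterating from $k=1$ (with $p_1 = 0$ and $e_1 = q_1 - q'$) yields
\[
\|e_{k+1}\|_\infty \;\leq\; \lambda_k\!\left[\frac{\|q_1-q'\|_\infty}{\lambda_1} + \gamma\sum_{j=1}^k\|p_j\|_\infty\right].
\]
Finally, reversing the substitution via $\|q_{k+1}-q'\|_\infty = \|\Delta_{k+1}\|_\infty \leq \|e_{k+1}\|_\infty + \|p_{k+1}\|_\infty$ delivers the claimed bound.

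I expect the main obstacle to be bookkeeping rather than any conceptual difficulty: getting the indices and the initial $\lambda_1$-versus-$\lambda_0$ factor exactly right in the telescoping step, and confirming that the quasi-contraction centered at the single point $q'$ is indeed sufficient (since all applications of the contraction are between $q_k$ and the fixed $q'$). Once the identity $1-(1-\gamma)\lambda_k = \lambda_k/\lambda_{k-1}$ is invoked, the telescoping is essentially forced and the rest is routine.
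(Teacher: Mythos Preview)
Your argument is correct and is essentially the standard proof from \cite{wainwright2019l_infty}, which the paper simply cites without reproducing. The decomposition $e_k=\Delta_k-p_k$, the scalar recursion $\|e_{k+1}\|_\infty\le(1-(1-\gamma)\lambda_k)\|e_k\|_\infty+\gamma\lambda_k\|p_k\|_\infty$, and the telescoping via $1-(1-\gamma)\lambda_k=\lambda_k/\lambda_{k-1}$ are exactly Wainwright's steps; in fact your bookkeeping yields the slightly sharper constant $\|q_1-q'\|_\infty/\lambda_0=\|q_1-q'\|_\infty$ in place of the paper's $\|q_1-q'\|_\infty/\lambda_1$, and you correctly observe that for this $\|\cdot\|_\infty$ statement the quasi-contraction at $q'$ alone suffices, with monotonicity playing no essential role.
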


A key observation is that the empirical robust Bellman operators $\mathbf{T}_{k}, \widetilde{\mathbf{T}}_{l,k}$ used in the iterative updates of Algorithms \ref{alg:q-learning} and \ref{alg:vr_q-learning} are monotonic $\gamma$-contractions (see Lemma \ref{lemma:monotone_contraction}).

In the proof of the main results, we apply the deterministic bound for contraction mappings from Proposition \ref{prop:contraction_err_bd} to each sample path of the distributionally robust Q-learning and the inner loop of the variance-reduced version. We illustrate this by considering the distributionally robust Q-learning. Since \(\{\mathbf{T}_{k+1}, k \geq 0\}\) are monotonic \(\gamma\)-contractions, they are quasi-contractions with center \(q^*\). We can define $\mathbf{H}_{k+1}(q) := \mathbf{T}_{k+1}(q)-\mathbf{T}_{k+1}(q^*)$ for all $q\in\R^{\bd{S}\times\bd{A}}$. Then, the update rule of Algorithm \ref{alg:q-learning} can be written as 
\begin{align*}
q_{k+1}-q^* &= (1-\lambda_k)(q_k-q^*) + \lambda_k\sqbk{\crbk{\mathbf{T}_{k+1}(q_k) - \mathbf{T}_{k+1}(q^*) }+ \crbk{\mathbf{T}_{k+1}(q^*)-
\cT(q^*)}} \\
&= (1-\lambda_k)(q_k-q^*) + \lambda_k\sqbk{\mathbf{H}_{k+1}(q_k)+ W_{k+1}}.
\end{align*}
where $W_{k+1} := \mathbf{T}_{k+1}(q^*)- \cT(q^*)$ and we used the Bellman equation \eqref{eqn:dr_bellman_eqn_q} that $q^* = \cT(q^*)$.
\par This representation allow as to apply Proposition \ref{prop:contraction_err_bd} to bound the error of the $q$-function estimation using the sequence $P_1 = 0 $ and 
\begin{equation*}
P_{k+1}:= (1-\lambda_k)P_{k} + \lambda_k W_{k+1}.
\end{equation*}
Note that the only source of randomness in $W_{k}$ is from $\mathbf{T}_{k+1}(q^*)$, which are i.i.d.. Therefore, the process $P$ is a non-stationary auto-regressive (AR) process. It follows that the concentration properties of $P_k$ can be derived from that of $\mathbf{T}_{k+1}(q^*)$.

\par While standard Q-learning updates utilize an unbiased empirical Bellman operator, the DR empirical Bellman operator is biased due to its non-linearity in the empirical measure (c.f. \eqref{eqn:empirical_Bellman_operator}), resulting in \(E[W_{k}] \neq 0\). To achieve a canonical error rate of \(O(n^{-1/2})\), it is necessary that both the bias and standard deviation of the \(n\)-sample DR empirical Bellman operator are \(O(n^{-1/2})\). However, our DR Q-learning algorithms require an additional condition: the one-step bias must be of the order \(O(n^{-1})\). This is because the final bias, which is the systematic error resulting from the repeated use of the DR Bellman estimator, is compounded by the one-step bias through the model-free Q-learning updates. This imposes significant challenges on the design and analysis of our model-free algorithms.

\par Fortunately, we are able to establish tight bounds (in \(n_0\) and \(\delta\)) on the bias, c.f. Proposition \ref{prop:bias_empirical_Bellman}, in the important regime when \(\delta\) is small, as stated in Assumption \ref{assump:delta_small}. These bounds are central to our sample complexity analysis. We summarize the relevant bounds on the variance and bias of the empirical DR Bellman operator in Section \ref{sec:empirical_Bellman_operator}. By utilizing these variance and bias bounds, we can efficiently allocate samples such that the systematic error due to bias is balanced with the stochasticity in the estimator at the termination of the algorithm. With this optimal sample allocation, we can establish the worst-case sample complexity bounds as claimed.

The theory for the convergence rate of the variance-reduced DR Q-learning is more complex. In order to achieve the geometric convergence in Proposition \ref{prop:vr_algo_err_high_prob_bd}, an $O(n\inv)$ bias bound of the empirical DR Bellman operator is not enough. However, by introducing a recentered dynamics, a similar recursion can be derived in this context if we consider the \textit{conditionally recentered noise} \(\mathbf{H}_{l,k+1}(\hat q_{l-1}) - E[\mathbf{H}_{l,k+1}(\hat q_{l-1}) | \hat q_{l-1}]\) and a ``random bias" (denoted by $D_l$ in Appendix \ref{a_sec:proof_drvrql}). For details, please refer to Appendix \ref{a_sec:proof_drvrql}.

\section{Numerical Experiments}\label{sec:numerical_experiments}

This section presents a numerical validation of our theoretical findings regarding the convergence properties of the proposed algorithms. We conduct a comparative analysis between our algorithms and MLMC DR Q-learning, as studied in \citet{Wang2023MLMCDRQL}. Additionally, we investigate the complexity of Algorithm \ref{alg:vr_q-learning} as the adversary's power $\delta\da 0$.

Section \ref{sec:hard_MDP} demonstrates convergence and compares the proposed algorithms with multilevel Monte Carlo distributionally robust (MLMC DR) Q-learning. We use the hard MDP instances constructed in \citet{li2021QL_minmax}, where standard Q-learning performs at its worst-case complexity dependence of \(\tilde \Omega((1-\gamma)^{-4}\epsilon^{-2})\). Both algorithms in this paper show the canonical convergence rate of \(O(\epsilon^{-2})\), with the variance-reduced version displaying superior performance.

In Section \ref{sec:mixing_MDP}, we test the stability of sample complexity of the variance-reduced DR Q-learning Algorithm \ref{alg:vr_q-learning} as \(\delta \da 0\) using a simple DRMDP instance.

\par In the subsequent developments, we use \(m_l = 2^{l}(1-\gamma)^{-2}\) for the variance-reduced Algorithm \ref{alg:vr_q-learning}. As explained in Remark \ref{remark:geometric_stepsize}, this choice (up to a log factor) yields the same complexity guarantee as stated in Theorem \ref{thm:vrql_sample_complexity}. An advantage of this parameter choice is that it allows us to run more epochs for the plots, thereby clarifying the convergence behavior.

\subsection{Hard MDPs for the Q-learning}

\label{sec:hard_MDP} 
\begin{figure}[ht]
\centering
\includegraphics[width = 0.35\linewidth]{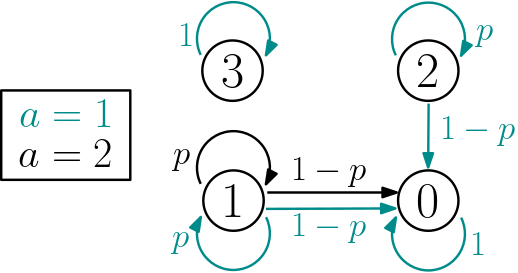}
\caption{Hard MDP for the Q-learning transition diagram.}
\label{fig:hard_mdp_instance}
\end{figure}
First, we demonstrate the convergence of the proposed algorithms using the MDP instance shown in Figure \ref{fig:hard_mdp_instance}. This MDP has 4 states and 2 actions, with transition probabilities given for actions 1 and 2 labeled on the arrows between states. Constructed in \citet{li2021QL_minmax}, it is shown in that when \( p = \frac{4\gamma - 1}{3\gamma} \), standard non-robust Q-learning will have a sample complexity of \(\tilde \Theta((1-\gamma)^{-4}\epsilon^{-2})\). 
\begin{figure}[ht]
\label{fig:hard_conv_test} \centering
\begin{subfigure}{0.48\linewidth}
    \centering
    \includegraphics[width =\linewidth]{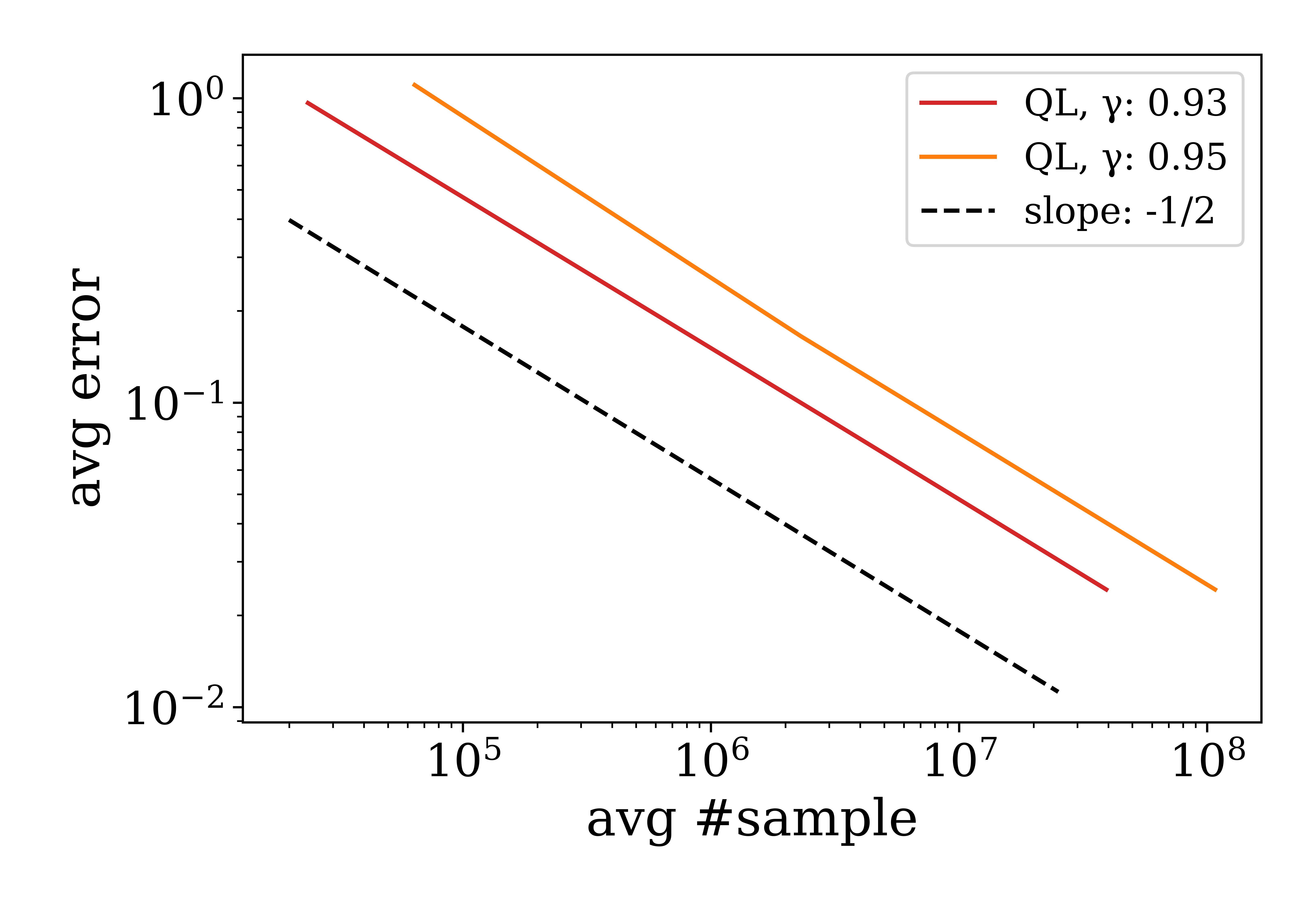}
    \caption{DR Q-learning}
    \label{fig:ql_hard_mdp_convergence}
\end{subfigure}  
\begin{subfigure}{0.48\linewidth}
    \centering
    \includegraphics[width = \linewidth]{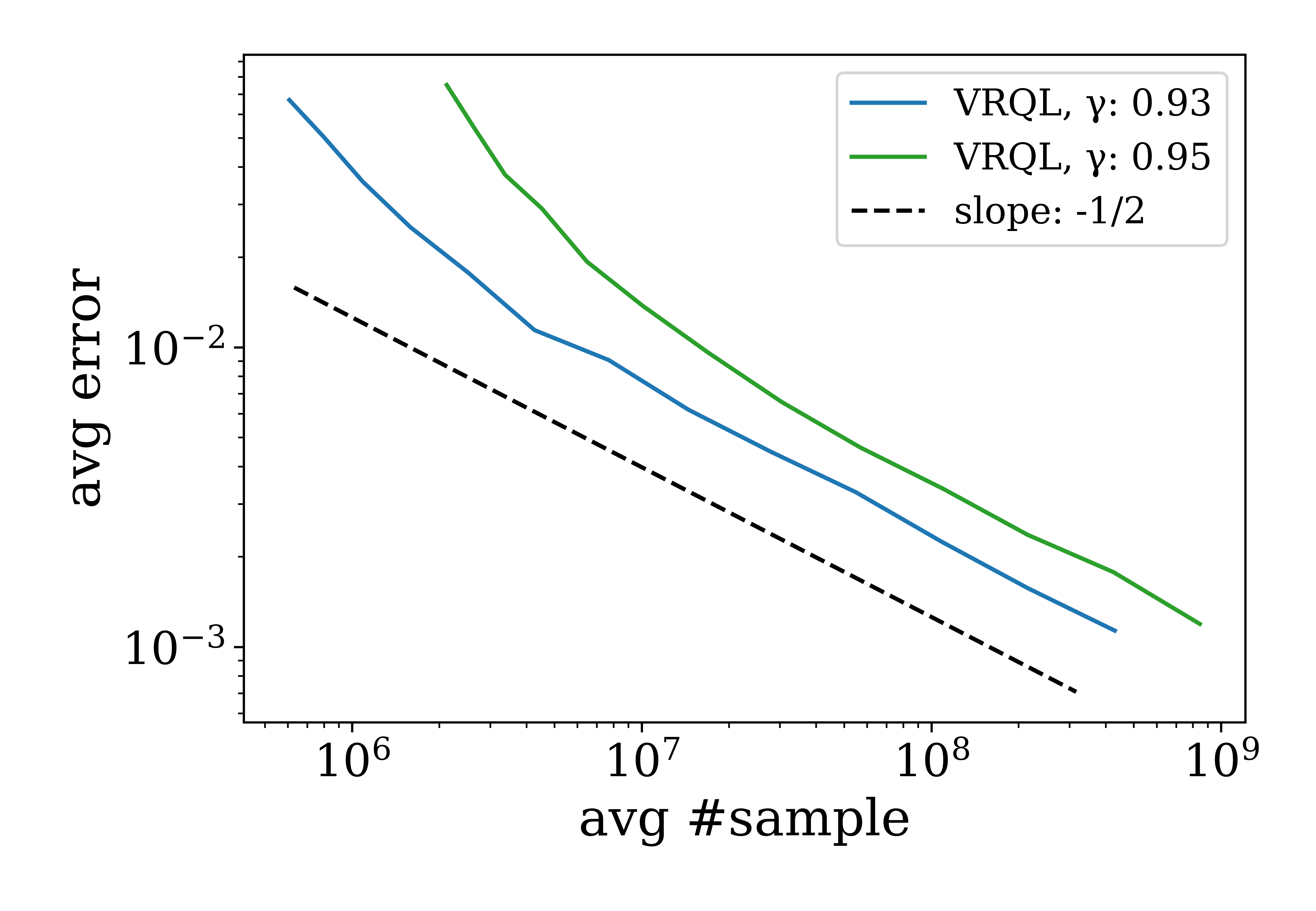}
    \caption{ variance-reduced DR Q-learning}
    \label{fig:vrql_hard_mdp_convergence}
\end{subfigure}
\caption{Convergence of Algorithm \protect\ref{alg:q-learning} and \protect
\ref{alg:vr_q-learning} on the MDP instance \protect\ref{fig:hard_mdp_instance}}
\end{figure}

\par Figures \ref{fig:ql_hard_mdp_convergence} and \ref{fig:vrql_hard_mdp_convergence} depict the convergence properties of the two algorithms for $\gamma = \set{0.93, 0.95}$ and $\delta = 0.1$. These figures show the (4000 samples) averaged error of the output $q$-function in the infinity norm plotted against the (4000 samples) averaged number of samples used, both on a log-log scale. The parameters for DR Q-learning in Figure \ref{fig:ql_hard_mdp_convergence} are set according to \ref{cor:ql_param_err_high_prob_bd}. On the other hand, Figure \ref{fig:vrql_hard_mdp_convergence} plots the averaged error achieved by the variance-reduced algorithm after each epoch against the total number of samples used. 
\par The figures indicate that both algorithms converge to the optimal robust \( q^* \), with the variance-reduced algorithm outperforming DR Q-learning. Additionally, when comparing the log-log error plot with a reference line having a slope of \(-1/2\), we observe that the log error for both algorithms decays at a rate of \(-1/2\) as the log of the samples increases. This behavior aligns with the \(\epsilon^{-2}\) dependence of the sample complexity bounds in Theorems \ref{thm:ql_sample_complexity} and \ref{thm:vrql_sample_complexity}, corresponding to the canonical convergence rate of Monte Carlo estimations, which is \(O(n^{-1/2})\). 

\begin{remark}
    With \(\delta = 0.1\) and \(\gamma = 0.93\) or \(0.95\), the DRMDP instances do not satisfy Assumption \ref{assump:delta_small}. However, the figures still show the canonical \(n^{-1/2}\) convergence rate, suggesting that our proposed algorithms might perform well even outside the regime prescribed by Assumption \ref{assump:delta_small}. 
\end{remark}

\begin{figure}[ht]
\includegraphics[width = \textwidth]{compare_conv_gamma.png}
\caption{Comparing the performance of Algorithm \protect\ref{alg:q-learning}, \protect\ref{alg:vr_q-learning} and the MLMC DR Q-learning on the MDP \protect
\ref{fig:hard_mdp_instance}. }
\label{fig:compare_conv}
\end{figure}
Figure \ref{fig:compare_conv} compares the performance of the algorithms proposed in this paper with the MLMC DR Q-learning in \citet{Wang2023MLMCDRQL}. We observe the performance comparison of three Q-learning methods: MLMC DR, DR, and DR-VR, for \(\gamma \in \{0.6, 0.7\}\). The results indicate that the distributionally robust variance-reduced Q-learning approach achieves the smallest errors. Although our DR Q-learning method shows slightly lower expected performance than the MLMC DR Q-learning, it is worth noting that the line corresponding to MLMC DR Q-learning is considerably rougher. This suggests that the MLMC DR Q-learning approach has a higher degree of variability in terms of performance.

\subsection[Testing the Small delta Regime]{Testing the Small $\delta$ Regime}\label{sec:mixing_MDP}

\par We proceed to empirically demonstrate the stability of the sample complexity of Algorithm \ref{alg:vr_q-learning} as \(\delta \da 0\). 
\par First, we introduce a family of MDPs instance. Define reference MPDs with $S = \set{1,2}$, $A = \set{a_1,a_2}$, transition kernel
\begin{equation}\label{eqn:simple_MDP}
P_{0,a_1} = P_{0,a_2} = \bmx{1/2 & 1/2\\ 1/2 & 1/2},
\end{equation}
and deterministic reward function $r(1,\cd) = 1$ and $r(2,\cd) = 0$. For any positive adversarial power level $\delta$, the worst-case transition kernel chosen by the adversary is
\begin{equation*}
P_{\delta,a_1} = P_{\delta,a_2} = \bmx{q(\delta) & 1-q(\delta)\\ q(\delta) & 1-q(\delta)}
\end{equation*}
where $q(\delta) < 1/2$ and $q(\delta)\ua 1/2$ as $\delta\da 0$. In a classical tabular RL setting, this worst-case MDP ($\delta > 0$) should be easier to learn compared to \eqref{eqn:simple_MDP}, c.f. \citep{khamaru2021,wang2023mixing}. 

\begin{figure}[t]
\centering
\includegraphics[width = \textwidth]{test_delta.png}
\caption{Testing the sample complexity behavior as $\delta\da 0$. }
\label{fig:test_delta}
\end{figure}
\par Using this DRMDP instance, we plot the average number of samples required to achieve a fixed error \(\epsilon\) while varying \(\delta\), as shown in Figure \ref{fig:test_delta}. We observe that the average number of samples increases as \(\delta \da 0\), because the worst-case MDP converges to the instance in \eqref{eqn:simple_MDP}, which is more challenging to learn. Additionally, the number of samples needed to reach the target error level becomes insensitive to increasingly small \(\delta\) when \(\delta \leq 10^{-2}\), confirming the theoretical results presented in this paper.

\section[Extension: chi2 Divergence Ambiguity Sets]{Extension: $\chi_2$ Divergence Ambiguity Sets}

We extend the variance-reduced version of the Q-learning Algorithm \ref{alg:vr_q-learning} to the setting where the adversary is constrained to perturbations within $\chi_2$ divergence balls of radius $\delta$. The $\chi_2$ divergence is defined for $Q\ll P$ as 
\begin{equation}\label{eqn:def_chi2}
    D_{\chi_2}(Q\|P) := \frac{1}{2} \int_\Omega\crbk{\frac{dQ}{dP}(\omega)-1}^2P(d\omega).
\end{equation} 
Note that we follow the convention in \citet{duchi2021learning} to include an $1/2$ in \eqref{eqn:def_chi2}. 
\par We reuse the notation for the KL case in the discussion of this section. In particular, for each $(s,a)\in \bd{S}\times\bd{A}$ and $\delta > 0$, we define $\chi_2$ ambiguity sets analogous to \eqref{eqn:def_KL_delta_balls} as
\begin{equation}\label{eqn:def_chi2_delta_balls}\begin{aligned}
\cP_{s,a}(\delta)&\coloneqq \left\lbrace p : D_{\chi_2}\left(p\|p_{s,a}\right)\leq \delta\right\rbrace, \\
\cN_{s,a}(\delta)&\coloneqq \left\lbrace \nu: D_{\chi_2}(\nu\|\nu_{s,a})\leq\delta\right\rbrace. \end{aligned}
\end{equation}
\par For $\chi_2$ divergence defined in \eqref{eqn:def_chi2}, we have the following strong duality. 
\begin{lemma}[\citet{duchi2021learning}, Lemma 1]\label{lemma:strong_dual_chi2_duchi}
Let $X$ be a random variable and $\mu_0$ a probability measure on $(\Omega,\cF)$.  Then, for any $\delta > 0$, 
\begin{equation}\label{eqn:strong_dual_chi2_duchi}
\inf_{\mu:D_{\chi_2}(\mu\|\mu_0)\leq \delta}E_\mu X = \sup_{\alpha\in\R}\set{\alpha-c(\delta)E_{\mu_0}\sqbk{(\alpha-X)_+^2}^{\frac{1}{2}}}
\end{equation}
where $c(\delta) =\sqrt{1+2\delta}$ and $(\cd)_+:= \max\set{\cd,0}$. 
\end{lemma}
We note that the dual variable $\alpha$ can be optimized within $\alpha \geq \essinf_{\mu_0}X$. 
\par We wish to learn the optimal $q$-function as defined in \eqref{eqn:def_q*_func}. To achieve this, we use the DR Bellman equation for the $q$-function \eqref{eqn:dr_bellman_eqn_q} where the dual form of the Bellman operator $\cT:\R^{\bd{S}\times \bd{A}}\ra \R^{\bd{S}\times \bd{A}}$ in the $\chi_2$ case is given by
\begin{equation}\label{eqn:def_bellman_op_chi2_dual}
    \cT(q)(s,a):= \sup_{\alpha\in\R }\left\lbrace \alpha-c(\delta) E_{\nu_{s,a}}\sqbk{(\alpha-R)_+^2}^{\frac{1}{2}}\right\rbrace+ \gamma\sup_{\beta\in\R }\left\lbrace \beta - c(\delta) E_{p_{s,a}}\sqbk{(\beta-v(q)(S))_+^2}^{\frac{1}{2}} \right\rbrace.
\end{equation}
Then, the empirical Bellman operator $\bd{T}$ is similarly defined as in \eqref{eqn:empirical_Bellman_operator} using this dual representation as
\begin{equation}\label{eqn:empirical_Bellman_operator_chi2}
\begin{aligned}
\mathbf{T}(q)(s,a)&\coloneqq \sup_{\alpha\in\R }\left\lbrace \alpha-c(\delta) E_{\nu_{s,a,n}}\sqbk{(\alpha-R)_+^2}^{\frac{1}{2}}\right\rbrace\\
&\quad + \gamma\sup_{\beta\in\R }\left\lbrace \beta - c(\delta) E_{p_{s,a,n}}\sqbk{(\beta-v(q)(S))_+^2}^{\frac{1}{2}} \right\rbrace,
\end{aligned}
\end{equation}
where the empirical measures and expectations are defined in \eqref{eqn:empirical_meas_exp}.

\par Recall the definition of the minimum support probability $\fr{p}_{\wedge}$ in \eqref{eqn:min_supp_prob}. As in the KL case, we also consider the regime $\delta = O(\delta)$:
\begin{assumption}[Limited Adversarial Power] \label{assump:delta_small_chi2}
Suppose the adversary's power $\delta<\frac{1}{2}\fr{p}_{\wedge}$.
\end{assumption}
\par In this context, we will apply the variance-reduced Q-learning Algorithm \ref{alg:vr_q-learning} with the following parameters.
\begin{equation}  \label{eqn:param_choice_for_vrql_chi2}
\begin{aligned} 
l_\mathrm{vr} &=
\ceil{\log_2\crbk{\frac{1}{\epsilon(1-\gamma)}}},\\ 
k_{\mathrm{vr}} &= c_{\mrm{vr}}\frac{1}{(1-\gamma)^2}\log\crbk{\frac{3dl_\mathrm{vr}}{(1-\gamma)\eta}}^2,\\ n_\mathrm{vr} &=
c_{\mrm{vr}}\frac{1}{\fr{p}_{\wedge}^2(1-\gamma)^2}\log(3 d  k_\mathrm{vr} l_\mathrm{vr}/\eta)^4,\\ m_l &=
c_{\mrm{vr}}\frac{4^l}{\fr{p}^2_\wedge(1-\gamma)^2}\log(3 dl_\mathrm{vr}/\eta)^2. \end{aligned}
\end{equation}

\par Notice that, compare to the specifications in \eqref{eqn:param_choice_for_vrql}, \eqref{eqn:param_choice_for_vrql_chi2} has a $\fr{p}_\wedge^{-2}$ dependence instead of $\fr{p}_\wedge^{-3}$. Running Algorithm \ref{alg:vr_q-learning} with these parameters will yield an estimate \(\hat{q}_{l_\mathrm{vr}}\) of \(q^*\) with an error of at most \(\epsilon\) with high probability, leading to the following theorem.
\begin{theorem}\label{thm:vrql_sample_complexity_chi2}
Assume Assumptions \ref{assump:max_rwd} and \ref{assump:delta_small_chi2}. For $\epsilon < (1-\gamma)^{-1} $, the variance-reduced DR Q-learning Algorithm \ref{alg:vr_q-learning} with parameters specified in \eqref{eqn:param_choice_for_vrql} computes a solution $\hat q_{l_\mathrm{vr}}$ s.t. $\|\hat q_{l_\mathrm{vr}} - q^*\|_\infty\leq \epsilon $ w.p. at least $1-\eta$ using 
\[
\tilde O\crbk{\frac{|\bd{S}||\bd{A}|}{\fr{p}_{\wedge}^2(1-\gamma)^4\min(1,\epsilon^2)}}
\]
number of samples.
\end{theorem}
The proof of Theorem \ref{thm:vrql_sample_complexity_chi2} closely follows the proof of Theorem \ref{thm:vrql_sample_complexity}. We first establish the analog of Proposition \ref{prop:one_vr_iter_high_prob_bd} and then apply it to achieve the statement in Proposition \ref{prop:vr_algo_err_high_prob_bd} using the parameters in \eqref{eqn:param_choice_for_vrql_chi2} for the \(\chi_2\) divergence ambiguity set case. The sample complexity is then derived by summing the number of samples used in each epoch. This procedure is carried out in Appendix \ref{a_sec:analysis_of_vrql_alg_chi2}.
\subsection*{Acknowledgement}
The work is generously supported by the funding of NSF grants 2312204 and 2312205.

Material in this paper is based upon work supported by the Air Force Office of Scientific Research under award number FA9550-20-1-0397. Additional support is gratefully acknowledged from NSF grants 1915967 and 2118199.

This work is supported in part by National
Science Foundation grant CCF-2106508. Zhengyuan Zhou acknowledges the generous
support from New York University’s 2022-2023 Center for Global Economy and Business
faculty research grant.

\bibliographystyle{apalike}
\bibliography{bibs/mdp_complexity,bibs/dr_mdp_rl,bibs/dro,bibs/stochastic_opt}
\newpage

\appendix
\appendixpage

\section{The Empirical Robust Bellman Operator: KL Case}

\label{sec:empirical_Bellman_operator}

In this section, we establish the bias and concentration properties of the
empirical DR Bellman operator. As pointed out in the previous sections, they
are the key ingredients for proving our near-optimal sample complexity
bounds. Let $\widehat{\mathbf{T}}_n$ be the empirical DR Bellman operator formed by $n$ samples defined in Definition \ref{def:empirical_Bellman_operator}. To
simplify the notation, we will omit the subscript $n$ and only keep $\widehat{\mathbf{T}}$ when there is no confusion. 

Even though the main results of this paper restrict \(\bd{R} \subset [0,1]\) to simplify notation and align with convention in the literature, in this section, we consider \(\bd{R} \subset [0,\rmax]\). This allows our results to be directly applied to contexts beyond RL, such as supervised learning where \(\rmax\) may vary.

In order to employ the analysis outlined in the previous section, the
empirical Bellman operators need to be contraction mappings. Indeed, we have 
\begin{lemma}\label{lemma:monotone_contraction}
$\widehat{\mathbf{T}}$ is a monotonic $\gamma$-contraction. 
\end{lemma}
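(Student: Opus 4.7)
The plan is to work through the primal form of the empirical operator, where the contraction and monotonicity both follow from elementary properties of $\inf$ and $\max$. Applying Lemma \ref{lemma:strong_dual} with the substitution $H \mapsto -H$ to the empirical reference measures $\nu_{s,a,n}$ and $p_{s,a,n}$, the dual representation in Definition \ref{def:empirical_Bellman_operator} becomes
\[
\widehat{\mathbf{T}}(q)(s,a) \;=\; \inf_{\nu\in\widehat{\cR}_{s,a}} E_{r\sim\nu}[r] \;+\; \gamma \inf_{p\in\widehat{\cP}_{s,a}} E_{s'\sim p}[v(q)(s')],
\]
where $\widehat{\cP}_{s,a} := \set{p: D_{\text{KL}}(p\|p_{s,a,n})\leq \delta}$ and similarly for $\widehat{\cR}_{s,a}$. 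The crucial structural point is that these empirical uncertainty sets, although random, do \emph{not} depend on the argument $q$, and the reward term is entirely independent of $q$.

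For monotonicity, if $q_1 \geq q_2$ entrywise, then $v(q_1)(s) = \max_b q_1(s,b) \geq \max_b q_2(s,b) = v(q_2)(s)$ pointwise, since the pointwise maximum preserves the entrywise order. Consequently, for any fixed $p$, $E_{s'\sim p}[v(q_1)(s')] \geq E_{s'\sim p}[v(q_2)(s')]$, and since we take infimum over the common set $\widehat{\cP}_{s,a}$, this inequality is preserved. The reward term is identical for both, so $\widehat{\mathbf{T}}(q_1) \geq \widehat{\mathbf{T}}(q_2)$.

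For the $\gamma$-contraction, I would first note the $1$-Lipschitz property of $v$ in the $\ell_\infty$-norm:
\[
|v(q_1)(s) - v(q_2)(s)| = \lr|{\max_b q_1(s,b) - \max_b q_2(s,b)}| \leq \max_b |q_1(s,b) - q_2(s,b)| \leq \|q_1-q_2\|_\infty.
\]
Then, for any nonempty set $\Delta$ of probability measures and bounded $f_1, f_2$, a standard inf-difference argument (pick $p_\veps$ nearly optimal for $f_2$ and use it to upper bound $\inf_p E_p f_1$, then symmetrize) gives $\bigl|\inf_{p\in\Delta} E_p f_1 - \inf_{p\in\Delta} E_p f_2\bigr| \leq \|f_1 - f_2\|_\infty$. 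Applying this with $f_i = v(q_i)$ and $\Delta = \widehat{\cP}_{s,a}$, and recalling that the reward infima cancel exactly, yields
\[
|\widehat{\mathbf{T}}(q_1)(s,a) - \widehat{\mathbf{T}}(q_2)(s,a)| \leq \gamma \|v(q_1) - v(q_2)\|_\infty \leq \gamma \|q_1 - q_2\|_\infty,
\]
uniformly in $(s,a)$, which is the desired contraction.

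There is no real obstacle to the argument; the whole proof is pathwise and holds for every realization of the samples defining $\nu_{s,a,n}, p_{s,a,n}$. The only step requiring mild care is invoking Lemma \ref{lemma:strong_dual} in the $\inf$ direction (via $H \mapsto -H$) to legitimately replace the $\sup_{\alpha\geq 0}, \sup_{\beta\geq 0}$ expressions with the empirical primal $\inf_\nu, \inf_p$ over the KL balls; once this is in place the argument is self-contained and independent of the concrete shape of the uncertainty sets, relying only on their being $q$-independent.
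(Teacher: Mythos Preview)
Your proof is correct. The paper does not actually include a proof of this lemma; it is stated as a basic structural fact and then used freely (the text immediately moves on to ``Direct consequences of $\widehat{\mathbf{T}}$ being a $\gamma$-contraction\ldots''). Your argument via the primal form---noting that the empirical KL balls $\widehat{\cP}_{s,a},\widehat{\cR}_{s,a}$ are $q$-independent, that $q\mapsto v(q)$ is monotone and $1$-Lipschitz in $\|\cdot\|_\infty$, and that $\inf_{p\in\Delta}E_p[\cdot]$ over a fixed $\Delta$ preserves both order and $\|\cdot\|_\infty$-Lipschitzness---is exactly the standard route and is what the paper implicitly has in mind (it remarks right after Definition~\ref{def:empirical_Bellman_operator} that the dual definition is equivalent to the primal DR Bellman operator with empirical centers). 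There is nothing to add.
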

Direct consequences of $\widehat{\mathbf{T}}$ being a $\gamma$-contraction with $\gamma <
1$ are the following bounds: 
\begin{lemma}\label{lemma:prob_1_bound_empirical_Bellman}
The following two bounds hold with probability 1:
\[
\|\widehat{\mathbf{T}}(q)(s,a) - \cT(q)(s,a)\|_{\infty} \leq 2(r_{\max} + \|q\|_{\infty});
\]
and
\[
\|q_*\|_{\infty} \leq \frac{r_{\max}}{1-\gamma}.
\]
\end{lemma}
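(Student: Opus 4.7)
Both bounds follow from the primal form of the (empirical) DR Bellman operator together with the structural assumptions on the reward; no delicate concentration argument is required. The plan is to handle the two inequalities separately via a direct range estimate and a contraction--monotonicity induction, respectively.

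For the first bound, my approach is to bound $\cT(q)(s,a)$ and $\widehat{\mathbf{T}}(q)(s,a)$ individually and then apply the triangle inequality. Using the primal form \eqref{eqn:def_bellman_op_primal}, $\cT(q)(s,a)$ is an infimum of quantities of the shape $E_{r\sim \nu}[r] + \gamma\, E_{s'\sim p}[v(q)(s')]$ over $(\nu,p)$ in the corresponding KL balls around $(\nu_{s,a},p_{s,a})$; by Lemma~\ref{lemma:strong_dual} the operator $\widehat{\mathbf{T}}$ admits the analogous primal representation with respect to the empirical reference measures. Since $r \in [0,r_{\max}]$ and $|v(q)(s')| = |\max_b q(s',b)| \leq \|q\|_\infty$, each such integrand lies in the interval $[-\gamma\|q\|_\infty,\, r_{\max}+\gamma\|q\|_\infty]$, and taking the infimum preserves this interval. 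Consequently both $\cT(q)(s,a)$ and $\widehat{\mathbf{T}}(q)(s,a)$ lie in that interval, and the triangle inequality gives $|\widehat{\mathbf{T}}(q)(s,a)-\cT(q)(s,a)| \leq r_{\max}+2\gamma\|q\|_\infty \leq 2(r_{\max}+\|q\|_\infty)$ since $\gamma<1$.

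For the second bound, I would invoke Lemma~\ref{lemma:monotone_contraction}: $\cT$ is a monotonic $\gamma$-contraction, so it has a unique fixed point $q^*$ which is the limit of the iterates defined by $q_0 \equiv 0$ and $q_{k+1} = \cT(q_k)$. Because rewards are non-negative, $q_1 = \cT(q_0) \geq 0 = q_0$, and monotonicity then propagates inductively to give $q_k \leq q_{k+1}$, hence $q_k \geq 0$ for all $k$. In particular $v(q_k)(s') = \max_b q_k(s',b) \in [0,\|q_k\|_\infty]$, so the primal form gives the pointwise estimate $q_{k+1}(s,a) \leq r_{\max} + \gamma\|q_k\|_\infty$. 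An induction starting from $\|q_0\|_\infty = 0$ then yields $\|q_k\|_\infty \leq r_{\max}/(1-\gamma)$ for every $k$, and passing to the limit $k\to\infty$ preserves the bound.

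The only conceptual point that requires care is the non-negativity (equivalently, the monotone increase from $0$) of the iterates $q_k$, since that is what lets me identify $\|v(q_k)\|_\infty$ with $\|q_k\|_\infty$ in the scalar recursion; once this is in place the remaining steps are elementary, and no quantitative control of the uncertainty set is needed beyond what is already built into Lemma~\ref{lemma:monotone_contraction}.
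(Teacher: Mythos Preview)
Your proof is correct. The paper does not give an explicit proof of this lemma; it simply records the two bounds as ``direct consequences of $\widehat{\mathbf{T}}$ being a $\gamma$-contraction with $\gamma<1$,'' so your argument is in fact more detailed than what the paper supplies.

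One minor remark: for the second inequality you take the scenic route through monotonicity and value iteration. A shorter argument, and the one the paper's phrasing points to, is to combine the range estimate you already proved for the first part with the fixed-point equation. From the primal form (or from $\|\cT(q)-\cT(0)\|_\infty\le\gamma\|q\|_\infty$ together with $\|\cT(0)\|_\infty\le r_{\max}$) you get $\|\cT(q)\|_\infty\le r_{\max}+\gamma\|q\|_\infty$ for every $q$; plugging in $q=q^*=\cT(q^*)$ yields $\|q^*\|_\infty\le r_{\max}+\gamma\|q^*\|_\infty$, hence $\|q^*\|_\infty\le r_{\max}/(1-\gamma)$ in one line, with no need for non-negativity or the monotone-iterate argument. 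Your approach is not wrong, just longer than necessary.
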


As motivated in the paper, to obtain a desired
complexity dependence on problem primitives, we need to develop good bounds
on the bias and the variance of the empirical Bellman operator. We define
the span seminorm of the $q$-function as $\spnorm{q} := \max_{s,a}q(s,a) -
\min_{s,a}q(s,a)$ and $\spnorm{q}\leq (1-\gamma)^{-1}$. 
The proofs of the following propositions are in Appendix \ref{proof:propostions}.
\begin{proposition}\label{prop:variance_empirical_Bellman}  The empirical DR Bellman operator satisfies the following variance bound:
\[
\var(\widehat{\mathbf{T}}(q)(s,a))\leq 104\frac{\rmax^2 + \gamma^2\spnorm{q}^2}{\fr{p}_{\wedge}^2n}(\log(e(|\bd{R}|\vee|\bd{S}|))).
\]
\end{proposition}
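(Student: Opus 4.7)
The plan is to decompose the variance into reward and transition contributions using sample independence, reduce each piece to a canonical KL-DRO variance problem, and control it via Efron--Stein applied to an envelope inequality in the dual variable $\alpha$.

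Because the reward samples and transition samples are independent and the two suprema in Definition \ref{def:empirical_Bellman_operator} depend on disjoint samples, $\var(\widehat{\mathbf{T}}(q)(s,a))=\var(\widehat{T}_r)+\gamma^2\var(\widehat{T}_v)$, where $\widehat{T}_r,\widehat{T}_v$ denote the two terms in the dual form \eqref{eqn:def_bellman_op_dual}. Subtracting off the minima of $v(q)$ and of $r$ leaves variances invariant and reduces the effective cost functions to values in $[0,\spnorm{q}]$ and $[0,r_{\max}]$ respectively. It thus suffices to show that for $H:U\to[0,M]$ on a finite support and $\hat\mu_n$ the empirical measure of $\mu$ with minimum atomic mass $\geq p_\wedge$,
\[
\var(g(\hat\mu_n))\;\leq\; C\,\frac{M^2\log(e|U|)}{n\,p_\wedge^2},\qquad g(\hat\mu)\coloneqq\sup_{\alpha\geq 0}\crbk{-\alpha\log E_{\hat\mu}\sqbk{e^{-H/\alpha}}-\alpha\delta}.
\]

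I would then apply Efron--Stein, $\var(g(\hat\mu_n))\leq \tfrac{n}{2}E\sqbk{(g(\hat\mu_n)-g(\hat\mu_n^{(1)}))^2}$ where $\hat\mu_n^{(1)}$ swaps one sample, and bound the leave-one-out difference by an envelope argument. Writing $\hat M(\alpha,\hat\mu)\coloneqq E_{\hat\mu}\sqbk{e^{-H/\alpha}}$ and letting $\alpha^\ast$ attain the sup at $\hat\mu_n$, the envelope step yields $g(\hat\mu_n)-g(\hat\mu_n^{(1)})\leq \alpha^\ast\log\crbk{\hat M(\alpha^\ast,\hat\mu_n^{(1)})/\hat M(\alpha^\ast,\hat\mu_n)}$, with a symmetric inequality in the reverse direction. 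Swapping one sample moves $\hat M(\alpha^\ast,\cd)$ by at most $\tfrac{1}{n}(1-e^{-M/\alpha^\ast})$, so the linearized estimate $|\log(\cd)|\leq 2(1-e^{-M/\alpha^\ast})/(n\hat M)$ combined with the elementary identity $\alpha^\ast(1-e^{-M/\alpha^\ast})\leq M$ gives
\[
|g(\hat\mu_n)-g(\hat\mu_n^{(1)})|\;\leq\; \frac{2M}{n\,\hat M(\alpha^\ast,\hat\mu_n)\wedge \hat M(\alpha^\ast,\hat\mu_n^{(1)})}.
\]
The crucial feature is that the factor of $\alpha^\ast$, which may diverge as $\delta\da 0$, cancels exactly, producing a $\delta$-free sensitivity bound.

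A uniform lower bound on $\hat M$ is now easy after the shift $\min H=0$: picking a minimum atom $x_0$ with $\mu(x_0)\geq p_\wedge$ gives $\hat M(\alpha,\hat\mu_n)\geq \hat\mu_n(x_0)$ for every $\alpha$. Letting $\cG\coloneqq\set{\hat\mu_n(x)\geq\tfrac{1}{2}\mu(x)\text{ for every atom }x\text{ of }\mu}$, Chernoff and a union bound over at most $|U|$ atoms give $P(\cG^c)\leq|U|e^{-cnp_\wedge}$; on $\cG$, $\hat M\geq p_\wedge/2$ uniformly in $\alpha$, and hence $|g-g^{(1)}|\leq 4M/(np_\wedge)$, while on $\cG^c$ the deterministic bound $|g-g^{(1)}|\leq M$ from Lemma \ref{lemma:prob_1_bound_empirical_Bellman} applies. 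Feeding both into Efron--Stein and handling the tail via a case split on whether $np_\wedge\lesssim \log(e|U|)$ (in which regime the trivial $\var\leq M^2$ already implies the conclusion since $1\leq\log(e|U|)/(np_\wedge^2)$) or $np_\wedge\gtrsim \log(e|U|)$ (in which regime the tail is absorbed into the leading term) produces the canonical bound with the $\log(e|U|)$ factor. Summing the reward ($M=r_{\max}$) and transition ($M=\spnorm{q}$) pieces with the $\gamma^2$ weight and collecting absolute constants into $104$ completes the proof.

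The principal technical obstacle is the envelope step, specifically the algebraic cancellation $\alpha^\ast(1-e^{-M/\alpha^\ast})\leq M$, which yields a single-sample sensitivity bound independent of both $\delta$ and $\alpha^\ast$. Intuitively this reflects a smooth interpolation: for small $\delta$ the DR value behaves like the empirical mean with sensitivity $O(M/n)$, while for large $\delta$ the worst-case tail dominates and naturally truncates sensitivity; the identity collapses both regimes into a single clean estimate and is what ultimately enables the $\delta$-free sample complexity bounds of the paper.
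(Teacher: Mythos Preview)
Your proof is correct and takes a genuinely different route from the paper. The paper bounds $\var(\widehat{\mathbf{T}})\leq E[(\widehat{\mathbf{T}}-\cT)^2]$ and controls the deviation by a mean value theorem along the interpolation path $\mu_n(t)=t\mu+(1-t)\mu_n$: the derivative is $\alpha\,m_n[w]/\mu_n(t)[w]$, and Lemma~\ref{lemma:alpha_ratio_bound} kills the $\alpha$ via an $\alpha$-range split together with $m_n[1]=0$, landing on $\spnorm{u}\cdot\Linfnorm{dm_n/d\mu_n(t)}{\mu}$; the $\log|U|$ factor then comes from Lemma~\ref{lemma:E_RND_bounds}, which bounds the expected squared max of $|U|$ sub-Gaussians. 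You instead use Efron--Stein plus the envelope inequality, with the one-line cancellation $\alpha^\ast(1-e^{-M/\alpha^\ast})\leq M$ playing exactly the role of Lemma~\ref{lemma:alpha_ratio_bound}. Both are the same conceptual trick---trading $\alpha^\ast$ for the range $M$ of the integrand---but your packaging is more direct, and on the good event you actually get $O(M^2/(np_\wedge^2))$ without the log, the log entering only through the case split handling small $np_\wedge$. The price is that the paper's interpolation machinery is reused verbatim for the bias bound (Proposition~\ref{prop:bias_empirical_Bellman}) and the recentered-operator estimates (Propositions~\ref{prop:double_diff_E_and_Var} and~\ref{prop:double_diff_high_prob}), so their approach amortizes better across the paper; an Efron--Stein argument would need separate adaptation to each of those. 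Two minor imprecisions worth tightening: the two envelope directions use different optimizers, so the denominator should really be $\min\bigl(\hat M(\alpha^\ast(\hat\mu_n),\hat\mu_n),\hat M(\alpha^\ast(\hat\mu_n^{(1)}),\hat\mu_n^{(1)})\bigr)$, and your claim that ``the tail is absorbed'' in the large-$np_\wedge$ regime needs $C_0$ chosen large enough (e.g.\ $C_0\gtrsim 24$) so that $s\mapsto s^2|U|e^{-cs}$ is already past its peak at $s_0=C_0\log(e|U|)$; both are routine once stated.
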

We note that here $\fr{p}_{\wedge}$ can be replaced by $\min_{s'\in S} \min\set{p_{s,a}(s'),\nu_{s,a}(s')}$. In particular, the variance upper bound can depend on the state and action. However, since we are interested in a minimax complexity bound, such distinction will not make a difference if we consider an example with only $O(1)$ number of states and actions. 
\par We also have the following bound on the bias: 
\begin{proposition}\label{prop:bias_empirical_Bellman} 
Under Assumption \ref{assump:delta_small}, the empirical DR Bellman Operator satisfies the following bias bound: 
\[
|\bias(\widehat{\mathbf{T}}(q)(s,a))|:= | E[\widehat{\mathbf{T}}(q)(s,a)]-\cT(q)(s,a)|\leq 4480\frac{\rmax + \gamma\spnorm{q}}{\fr{p}_{\wedge}^3 n}\log(e|\bd{S}|\vee |\bd{R}|).
\]
\end{proposition}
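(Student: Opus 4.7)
Splitting $\widehat{\mathbf{T}}(q)(s,a)-\cT(q)(s,a)$ into the reward summand and $\gamma$ times the transition summand, both summands have the same structural form
\[
\Psi(\mu_n)-\Psi(\mu), \quad \Psi(\mu):=\sup_{\alpha\geq 0}\crbk{-\alpha\log E_\mu[e^{-X/\alpha}]-\alpha\delta},
\]
with $X=r$, $\mu=\nu_{s,a}$, $\spnorm{X}\leq \rmax$ in one case, and $X=v(q)$, $\mu=p_{s,a}$, $\spnorm{X}\leq \spnorm{q}$ in the other. So the plan is to analyze a generic such functional and substitute the two cases. Write $G_\mu(\alpha):=-\alpha\log E_\mu[e^{-X/\alpha}]-\alpha\delta$; we are asked to bound $|E[\sup_\alpha G_{\mu_n}(\alpha)]-\sup_\alpha G_\mu(\alpha)|$. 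Note the support lower bound $\mu(\{x\})\geq p_{s,a,\wedge}$ for $x$ in the support.

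\textbf{Pointwise second-order expansion.} For a fixed $\alpha$, let $R_n(\alpha):=E_{\mu_n}[e^{-X/\alpha}]/E_\mu[e^{-X/\alpha}]$, an unbiased estimator of $1$. Using the crude denominator bound $E_\mu[e^{-X/\alpha}]\geq p_{s,a,\wedge} e^{-X_{\max}/\alpha}$ and numerator bound $E_\mu[e^{-2X/\alpha}]\leq e^{-2X_{\min}/\alpha}$ gives
\[
\var(R_n(\alpha))\leq \frac{1}{n\,p_{s,a,\wedge}^2}e^{2\spnorm{X}/\alpha}.
\]
A second-order Taylor expansion $\log(1+y)=y-y^2/2+O(|y|^3)$ applied to $\log R_n(\alpha)$ then yields
\[
E[G_{\mu_n}(\alpha)]-G_\mu(\alpha)=-\alpha E[\log R_n(\alpha)]=\tfrac{\alpha}{2}E[(R_n(\alpha)-1)^2]+O\bigl(\alpha E|R_n(\alpha)-1|^3\bigr),
\]
the cube-term being absorbed through higher-moment control of $R_n$ plus truncation on the rare event $\{|R_n-1|>1/2\}$ (the latter vanishing at rate $e^{-cn}$ by Bernstein). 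Hence the pointwise bias is $O\bigl(\alpha/(np_{s,a,\wedge}^2)\bigr)\cdot e^{2\spnorm{X}/\alpha}$, which is small provided $\alpha$ is bounded below on the order of $\spnorm{X}$.

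\textbf{Dual-variable control via Assumption \ref{assump:delta_small}, and the sup bias.} The key is to confine $\alpha^*=\arg\sup G_\mu$ and, with high probability, any empirical maximizer $\hat\alpha$ of $G_{\mu_n}$ to an interval $[\underline\alpha,\overline\alpha]$ with $\underline\alpha\asymp\spnorm{X}$ and $\overline\alpha\lesssim\spnorm{X}/\sqrt{\delta}$. The upper bound follows from the large-$\alpha$ expansion $G_\mu'(\alpha)\sim\var_\mu(X)/(2\alpha^2)-\delta$. The lower bound is precisely where Assumption \ref{assump:delta_small} enters: for small $\alpha$, $-\log E_\mu[e^{-X/\alpha}]\to X_{\min}/\alpha-\log\mu(X=X_{\min})$-type behavior gives $G_\mu'(\alpha)\approx -\log p_{s,a,\wedge}-\delta>0$ as long as $\delta<-\log(1-p_{s,a,\wedge}/48)$, so the sup is pushed away from $0$ by a quantitative margin. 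Once $\alpha^*,\hat\alpha\in[\underline\alpha,\overline\alpha]$, the bias of the sup is sandwiched:
\[
\sup_\alpha E[G_{\mu_n}(\alpha)]-\sup_\alpha G_\mu(\alpha)\leq E[\sup_\alpha G_{\mu_n}(\alpha)]-\sup_\alpha G_\mu(\alpha)\leq E\Big[\sup_{\alpha\in[\underline\alpha,\overline\alpha]}(G_{\mu_n}(\alpha)-G_\mu(\alpha))\Big]+\text{(rare-event term)},
\]
and the right-hand side is bounded by the pointwise estimate plus a covering/union-bound argument over $[\underline\alpha,\overline\alpha]$, yielding a $\log(e|S|\vee|R|)$ factor from the size of the support plus the covering. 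Plugging in $\alpha\lesssim\spnorm{X}/\sqrt{\delta}$ and the variance factor $e^{2\spnorm{X}/\underline\alpha}/p_\wedge^2\lesssim p_\wedge^{-O(1)}$ gives the claimed $p_{s,a,\wedge}^{-3}n^{-1}$ scaling, and substituting $\spnorm{X}=\rmax$ resp. $\spnorm{q}$ into the two summands gives the stated $(\rmax+\gamma\spnorm{q})$ numerator.

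\textbf{Main obstacle.} The crux is the uniform-in-sample lower bound on $\hat\alpha$: the pointwise bias estimate blows up like $e^{2\spnorm{X}/\alpha}/\alpha$ as $\alpha\da 0$, so I must argue that the empirical maximizer cannot drift into this dangerous regime. The margin built into Assumption \ref{assump:delta_small} (the $1/48$ factor) is precisely what makes $G_\mu'$ bounded strictly above $0$ on $(0,\underline\alpha)$ by a constant that survives the empirical perturbation $G_{\mu_n}-G_\mu$ with high probability; quantifying this robustly, uniformly over $q$, is the technically delicate step.
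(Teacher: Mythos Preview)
Your approach has a genuine gap at the sandwich step. You bound the bias of the sup from above by
\[
E\Big[\sup_{\alpha\in[\underline\alpha,\overline\alpha]}\big(G_{\mu_n}(\alpha)-G_\mu(\alpha)\big)\Big],
\]
and then claim a covering argument reduces this to the pointwise $O(n^{-1})$ bias. That does not work: for each fixed $\alpha$, $G_{\mu_n}(\alpha)-G_\mu(\alpha)=-\alpha\log R_n(\alpha)$ has mean $O(n^{-1})$ but standard deviation $O(n^{-1/2})$, so the supremum over $\alpha$ of this process has expectation of order $n^{-1/2}$, not $n^{-1}$. A covering bound controls the sup by the maximum over a finite net, and the maximum of finitely many random variables each fluctuating at scale $n^{-1/2}$ is itself $O(n^{-1/2})$. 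You are conflating the pointwise bias with the expected supremum of the centered process; the latter is a first-order quantity and cannot deliver the $n^{-1}$ rate the proposition claims.

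There is a second issue even at the pointwise level. Your variance bound $\var(R_n(\alpha))\le e^{2\spnorm{X}/\alpha}/(np_\wedge^2)$ is too crude for large $\alpha$: since $\alpha^*$ can be as large as $c\,\spnorm{X}/\sqrt{\delta}$, your pointwise bias $\tfrac{\alpha^*}{2}\var(R_n(\alpha^*))$ would scale like $\spnorm{X}/(\sqrt{\delta}\,np_\wedge^2)$, introducing a $\delta^{-1/2}$ factor that the proposition explicitly avoids. (For large $\alpha$ the correct behavior is $\var(R_n(\alpha))\asymp \var_\mu(X)/(\alpha^2 n)$, which you do not use.)

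The paper sidesteps both problems by expanding not in $R_n(\alpha)-1$ at fixed $\alpha$, but along the measure interpolation $t\mapsto \mu_n(t)=t\mu+(1-t)\mu_n$. Writing $\phi(t)=\sup_{\alpha\ge 0} f(\mu_n(t),u,\alpha)$ and applying a second-order mean value theorem gives
\[
\phi(0)-\phi(1)= -\phi'(1)+\tfrac12\phi''(\tau),\qquad \phi'(1)=\alpha^*\,\frac{m_n[w]}{\mu[w]},
\]
and the crucial point is that $E[m_n[w]]=0$ \emph{exactly}, so the first-order term has zero mean with no supremum over $\alpha$ involved. The second derivative is then bounded \emph{uniformly over all $\alpha\ge 0$} via the algebraic estimates $\sup_{\alpha\ge 0}\alpha^j|m[w]/\mu[w]|^2\le 3^j\spnorm{u}^j\|dm/d\mu\|_{L^\infty}^2$ (the paper's Lemma~\ref{lemma:alpha_ratio_bound}) and an analogous bound for the $\alpha^3$ term (Lemma~\ref{lemma:bias_alpha^3_term}); these are what make the final bound $\delta$-free. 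Finally, Assumption~\ref{assump:delta_small} is \emph{not} used to lower-bound $\alpha^*$ as you propose, but rather (via Lemma~\ref{lemma:worst_case_meas_bound}) to lower-bound the worst-case tilted measure $\mu_n^*(t)(y)\ge \tfrac12 p_\wedge$, which controls the ratio $\|u-\mu^*[u]\|_{L^\infty}^2/\var_{\mu^*}(u)\le 2/p_\wedge$ appearing in the second-derivative bound.
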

Again, the dependence on  $\fr{p}_{\wedge}$ can be  replaced by $\min_{s'\in S} \min\set{p_{s,a}(s'),\nu_{s,a}(s')}$.
\par By Lemma \ref{lemma:prob_1_bound_empirical_Bellman}, the DR empirical
Bellman operator is bounded. This along with the uniform (across $s,a\in
\bd{S}\times\bd{A}$) variance bound in Proposition \ref{prop:variance_empirical_Bellman} yields: 
\begin{proposition}\label{prop:high_prob_bound_empirical_Bellman}
The empirical DR Bellman operator 
\[
\|\widehat{\mathbf{T}}(q)-\cT(q)\|_\infty \leq \frac{17(\rmax+\gamma\spnorm{q})}{\fr{p}_{\wedge}\sqrt{n}}\sqrt{\log\crbk{6|\bd{S}||\bd{A}|(|\bd{S}|\vee |\bd{R}|)/\eta}}
\]
w.p. at least $1 - \eta$, provided that $n\geq 8\fr{p}_{\wedge}^{-2}\log\crbk{12|\bd{S}||\bd{A}|(|\bd{S}|\vee |\bd{R}|)/\eta}$. 
\end{proposition}

Recall that for fixed $\hat q$, we haved defined the recentered DR Bellman
operators
\begin{equation}\label{eqn:def_recentered_op}
\widehat{\mathbf{H}}(\hat q) := \widehat{\mathbf{T}}(\hat q)-\widehat{\mathbf{T}}(q^*) \quad \text{and}\quad \cH(\hat q) := \cT(\hat q)-\cT(q^*).
\end{equation}
For the variance-reduced algorithm, we instead consider the bias and
concentration properties of the recentered operator $\widehat{\mathbf{H}}$. As we will
observe, the recentering allows the concentration bounds to depend on the
residual error in the $q$-function $\|\hat q-q^*\|_\infty$ instead of $\|\hat q\|_\infty$. As a consequence, one can imagine that as the algorithm
progresses, $\|\hat q_l-q^*\|_\infty$ will progressively become smaller,
making $\widehat{\mathbf{H}}$ having much better concentration properties than $\widehat{\mathbf{T}}$.

We start with bias and variance bounds. 
\begin{proposition}\label{prop:recentered_op_bias_and_var_KL}
Suppose Assumption \ref{assump:delta_small} is enforced.
Then
\[
|E[\widehat{\mathbf{H}}(\hat q)(s,a)-\cH(\hat q)(s,a)]|\leq  \frac{2^{6}\|\hat q - q_*\|_\infty}{ \fr{p}_{\wedge}^{3/2}\sqrt{n}}\sqrt{\log(e|\bd{S}|)},
\]
provided $n\geq \fr{p}_{\wedge}\inv$, and
\[
\var(\widehat{\mathbf{H}}(q^*))(s,a)
\leq \frac{2^{12}\|\hat q - q_*\|_\infty^2}{\fr{p}_{\wedge}^3n}\log(e|\bd{S}|)
\]
for all $n\geq 1$. 
\end{proposition}

Similar to the extension from Proposition \ref{prop:variance_empirical_Bellman} to Proposition \ref{prop:high_prob_bound_empirical_Bellman}, we can obtain the following
concentration bound for the recentered operator by extending the variance
bound in Propositon \ref{prop:recentered_op_bias_and_var_KL}. 
\begin{proposition}
\label{prop:recentered_op_high_prob_KL}
Assume Assumption \ref{assump:delta_small}. Then w.p. at least $1-\eta$
\[
\|\cH(\hat q) - \widehat{\mathbf{H}}(\hat q)\|_\infty \leq \frac{8\|\hat q-q^*\|_\infty}{\fr{p}_{\wedge}^{3/2}\sqrt{n}}\sqrt{\log(4|\bd{S}|^2|\bd{A}|/\eta)}
\]
provided that $n\geq 8\fr{p}_{\wedge}^{-2}\log(4|\bd{S}|^2|\bd{A}|/\eta)$
\end{proposition}

We emphasize that all of the propositions are $O(1)$ when $\delta\da 0$. This is due to a more thorough analysis, which allows us to remove the dual variable $\alpha$ (see Lemma \ref{lemma:strong_dual}) in the bounds, as explained  in Lemma \ref{lemma:bias_alpha^3_term} in detail.

\section{Proofs for the Analysis of Algorithms: KL Case}

With the key bias and concentration bounds, we are ready to carry out the
proofs of the worst-case sample complexity bounds for Algorithm \ref{alg:q-learning} and \ref{alg:vr_q-learning}. We will follow the proof
outlined in Section \ref{sec:analysis_of_algo_overview}.

\subsection{The Distributionally Robust Q-learning Algorithm \protect\ref{alg:q-learning}}

\label{subsec:proof_drql}

\subsubsection{Proof of Proposition \protect\ref{prop:q-learning_err_high_prob_bd}}

\begin{proof}
\par Recall that the update rule for Algorithm \ref{alg:q-learning} can be written as
\begin{equation}\label{eqn:ql_update_err}
\begin{aligned}
q_{k+1}-q^* &= (1-\lambda_k)(q_{k}-q^*) + \lambda_{k}\sqbk{\mathbf{T}_{k+1}(q_{k}) - \mathbf{T}_{k+1}(q^*)+\mathbf{T}_{k+1}(q^*)- \cT(q^*) }\\
&= (1-\lambda_k)(q_{k}-q^*) + \lambda_{k}\sqbk{\mathbf{H}_{k+1}(q_{k}) + W_{k+1}}
\end{aligned}
\end{equation}
where we define $W_{k+1}:= \mathbf{T}_{k+1}(q^*)- \cT(q^*)$. Since $ \mathbf{T}_{k+1}(q^*)$ is a i.i.d. sequence of estimators to $\cT(q^*)$, 
\[
\beta := \bias(\mathbf{T}_{k}(q^*))=E[\mathbf{T}_{k}(q^*)]- \cT(q^*) 
\] is independent of $k$. We can write $W_{k+1} = \beta + U_{k+1}$ where  $U_{k+1} := \mathbf{T}_{k+1}(q^*)- \cT(q^*) -\beta$ has zero mean. 
\par Next, we would like to apply Proposition \ref{prop:contraction_err_bd}. Define the auxiliary sequences
\begin{align}
    P_{k+1} &= (1-\lambda_k)P_{k} + \lambda_kW_{k+1}\label{eqn:P_seq_ql}\\
    Q_{k+1} &= (1-\lambda_k)Q_{k} + \lambda_kU_{k+1}\label{eqn:Q_seq_ql}
\end{align}
with $Q_{0} = P_{0} = 0$. Notice that since $\set{U_{k},k\geq 1}$ has mean zero,  $E[Q_{k}] = 0$ for all $k\geq 0$. It is easier to analyze the process $\set{Q_{k},k\geq 0}$ than $\set{P_{k},k\geq 0}$ which correspond to $\set{p_k}$ in Proposition \ref{prop:contraction_err_bd}. 
\par To use $\set{Q_{k},k\geq 0}$, we first show that 
\[
P_{k} = Q_{k} + \beta.
\]
 We prove this by induction. The base case $P_{1} = \lambda_0W_{1} = Q_{1} + \beta$ as $\lambda_0=1$. Next we check the induction step. By the iterative updates \eqref{eqn:P_seq_ql} and \eqref{eqn:Q_seq_ql} and the induction hypothesis,  we have that 
\begin{align}
P_{k+1} &= (1-\lambda_k)P_{k} + \lambda_kW_{k+1} \notag \\ 
&= (1-\lambda_k)(Q_{k} + \beta) + \lambda_k(U_{k+1} +  \beta) \notag\\
&= Q_{k+1} + \beta. \label{Q-learning:difference}
\end{align}
\par By Algorithm \ref{alg:q-learning}, $q_{1} = 0$. We have that by Lemma \ref{lemma:prob_1_bound_empirical_Bellman}, $\|q_1-q^*\|_\infty \leq (1-\gamma)\inv$. Therefore, by Proposition \ref{prop:contraction_err_bd}
\begin{equation}
\begin{aligned}
\|q_{k+1} - q^*\|_\infty &\leq\lambda_k\sqbk{\frac{\|q_{1}-q^*\|_\infty}{\lambda_1} + \gamma \sum_{j=1}^k\|P_{j}\|_\infty }+ \|P_{k+1}\|_\infty\\
 &\leq \lambda_k\sqbk{\frac{1}{\lambda_1(1-\gamma)} + \left(\gamma \sum_{j=1}^k\|Q_{j}\|_\infty\right) +\gamma k\|\beta\|_\infty} + \|Q_{l,k+1}\|_\infty + \|\beta\|_\infty.\\
 &\leq \lambda_k\sqbk{\frac{2}{1-\gamma}+ \gamma \sum_{j=1}^k\|Q_{j}\|_\infty }+ \|Q_{k+1}\|_\infty + \frac{2\|\beta\|_\infty}{1-\gamma}.
\end{aligned}\label{eqn:ql_iter_high_prob_bd}
\end{equation}
w.p.1, where we used $k\lambda_k = 1/(1/k + (1-\gamma))\leq 1/(1-\gamma)$. 
\par Next, we bound the sequence $\set{Q_k,k\geq 1}$.
\begin{lemma}\label{lemma:ql_Q_seq_bd}
The $\set{Q_k,k\geq 1}$ sequence satisfies 
\[
    P(\|Q_{k+1}\|_\infty > t)\leq 2|\bd{S}||\bd{A}|\exp\crbk{-\frac{t^2}{\lambda_k(8\gamma(1-\gamma)\inv  t+ 4\|\sigma^2(q^*)\|_\infty)}}.
\]
where $\sigma^2(q^*)(s,a) = \var(\mathbf{T}_{k}(q^*)(s,a)) $. 
\end{lemma}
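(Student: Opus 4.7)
The plan is to unroll the affine recursion $Q_{k+1} = (1-\lambda_k)Q_k + \lambda_k U_{k+1}$ with $Q_0 = 0$ into a weighted sum of i.i.d.\ mean-zero bounded random variables, apply Bernstein's inequality entrywise in $(s,a) \in S\times A$, and close with a union bound. A straightforward induction yields
\[
Q_{k+1} = \sum_{j=1}^{k+1}\alpha_j^{(k+1)}\,U_j, \qquad \alpha_j^{(k+1)} := \lambda_{j-1}\prod_{i=j}^{k}(1-\lambda_i),
\]
where the empty product equals one, so $\alpha_{k+1}^{(k+1)} = \lambda_k$. Because each $\mathbf{T}_j$ is built from an independent batch of $n_0$ samples, $\set{U_j(s,a): j\geq 1}$ are i.i.d., mean zero, with $\var(U_j(s,a)) = \sigma^2(q^*)(s,a)$; and Lemma \ref{lemma:prob_1_bound_empirical_Bellman} combined with $\|q^*\|_\infty\leq (1-\gamma)^{-1}$ gives a deterministic range $|U_j(s,a)|\leq D$ with $D = O((1-\gamma)^{-1})$, refinable via the dual form of $\mathbf{T}$ to extract the $\gamma$ factor that appears in the target constant.

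Fix $(s,a)$. Since $\alpha_j^{(k+1)} U_j(s,a)$ are independent bounded summands, the standard Bernstein inequality gives
\[
P\!\crbk{|Q_{k+1}(s,a)|>t}\leq 2\exp\!\crbk{-\frac{t^2}{2\sum_{j}(\alpha_j^{(k+1)})^2\sigma^2(q^*)(s,a)+\tfrac{2D}{3}\crbk{\max_j\alpha_j^{(k+1)}}\,t}}.
\]
It then remains to control the two weight functionals in terms of $\lambda_k$. For the rescaled linear stepsize $\lambda_k = 1/(1+(1-\gamma)k)$, elementary manipulations on the product defining $\alpha_j^{(k+1)}$ yield the standard pair of estimates $\max_{1\leq j\leq k+1}\alpha_j^{(k+1)}\leq \lambda_k$ and $\sum_{j=1}^{k+1}(\alpha_j^{(k+1)})^2\leq 2\lambda_k$; these are precisely the coefficient bounds used in Wainwright's analysis of classical synchronous Q-learning. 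Substituting them together with the bound on $D$ reduces the Bernstein denominator to $\lambda_k(8\gamma(1-\gamma)^{-1}t + 4\|\sigma^2(q^*)\|_\infty)$.

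A union bound over the $|S||A|$ coordinates produces the prefactor $2|S||A|$ and finishes the argument. The main obstacle is bookkeeping rather than probability: the tail bound is a textbook Bernstein estimate, but matching the exact constants in the stated inequality requires simultaneously refining the range estimate $D$ (to pick up the $\gamma$ factor from the transition part of the dual representation of the empirical Bellman operator) and tracking the sharp constants in $\max_j \alpha_j^{(k+1)}$ and $\sum_j (\alpha_j^{(k+1)})^2$. Neither is deep, but together they are where the only real work lies.
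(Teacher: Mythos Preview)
Your approach is correct and essentially the same as the paper's: the paper cites Wainwright's MGF bound for the AR recursion (Lemma~\ref{lemma:AR_seq_bd}) as a black box and then applies a Chernoff step plus a union bound over $(s,a)$, whereas you unroll the recursion and apply Bernstein directly---which is precisely how that lemma is proved. Your weight estimates are valid (for this stepsize the $\alpha_j^{(k+1)}$ are nondecreasing in $j$ with $\alpha_{k+1}^{(k+1)}=\lambda_k$, and since $\sum_j\alpha_j^{(k+1)}=1$ one even gets $\sum_j(\alpha_j^{(k+1)})^2\le\lambda_k$), so the bookkeeping you flag goes through.
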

The proof of Lemma \ref{lemma:ql_Q_seq_bd} is in Appendix \ref{a_sec:proof:lemmas_for_kl_alg}.
By applying Lemma \ref{lemma:ql_Q_seq_bd}, we have that 
\begin{align*}
    \|Q_{j}\|_\infty&\leq \frac{8\lambda_j}{1-\gamma}\log\crbk{2|\bd{S}||\bd{A}|/\eta} + 2\sqrt{\lambda_j}\|\sigma(q^*)\|_\infty\sqrt{\log\crbk{2|\bd{S}||\bd{A}|/\eta}} \\
    &\leq \left(\frac{8\lambda_j}{1-\gamma}
    + 2\sqrt{\lambda_j}\|\sigma(q^*)\|_\infty\right){\log\crbk{2|\bd{S}||\bd{A}|/\eta}}
\end{align*}
w.p. at least $1-\eta$. 
\par To establish high probability bound using \eqref{eqn:ql_iter_high_prob_bd}, we also need the following properties of the stepsize: 
\begin{lemma}[Proof of Corollary 3,\citet{wainwright2019l_infty}]\label{lemma:sum_lambda_k}
    The following inequalities hold:
    \[
    \sum_{j=1}^{k}\sqrt{\lambda_j}\leq \frac{2}{(1-\gamma)\sqrt{\lambda_k}};\qquad \sum_{j=1}^k\lambda_j\leq \frac{\log(1+(1-\gamma)k)}{1-\gamma}.
    \]
\end{lemma}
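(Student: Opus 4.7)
The plan is to prove both inequalities by elementary integral comparison, exploiting that $j \mapsto \lambda_j = 1/(1+(1-\gamma)j)$ is a positive, strictly decreasing sequence. For any decreasing positive function $f$ on $[0,\infty)$ one has $f(j) \leq \int_{j-1}^{j} f(x)\,dx$ for every $j \geq 1$, so summing such pointwise bounds over $j = 1,\dots,k$ reduces each discrete sum to a single definite integral that can be evaluated in closed form.

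First I would handle the $\sqrt{\lambda_j}$ bound by taking $f(x) = 1/\sqrt{1+(1-\gamma)x}$. Applying the pointwise inequality and summing gives $\sum_{j=1}^{k} \sqrt{\lambda_j} \leq \int_{0}^{k} dx/\sqrt{1+(1-\gamma)x}$, whose antiderivative $\frac{2}{1-\gamma}\sqrt{1+(1-\gamma)x}$ evaluates at the endpoints to $\frac{2}{1-\gamma}\bigl(\sqrt{1+(1-\gamma)k}-1\bigr)$. Discarding the $-1$ term (which only weakens the inequality) and identifying $\sqrt{1+(1-\gamma)k} = 1/\sqrt{\lambda_k}$ yields the claimed bound $2/\bigl((1-\gamma)\sqrt{\lambda_k}\bigr)$.

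The second inequality follows by the same template with $f(x) = 1/(1+(1-\gamma)x)$. Its antiderivative is $\log(1+(1-\gamma)x)/(1-\gamma)$, which evaluated at $x = 0$ and $x = k$ gives $\log(1+(1-\gamma)k)/(1-\gamma)$ directly, without even needing to discard any slack.

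Since this lemma is a purely deterministic fact about the specific rescaled-linear step size and is cited from prior work, I do not expect any serious obstacle. The only piece of bookkeeping care required is choosing the integration interval $[j-1,j]$ rather than $[j,j+1]$ when dominating $\lambda_j$ or $\sqrt{\lambda_j}$, so that the monotonicity of $f$ points in the correct direction for an upper bound; with this orientation both estimates drop out after a one-line antiderivative computation.
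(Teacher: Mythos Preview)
Your proof is correct. The paper does not actually supply its own proof of this lemma; it simply cites the result from \cite{wainwright2019l_infty} (Corollary~3 there), so there is nothing to compare against beyond noting that your integral-comparison argument is precisely the standard derivation one would expect for these rescaled-linear stepsize bounds.
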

We have that by Lemma \ref{lemma:sum_lambda_k} and the union bound, 
\begin{align*}
    &\gamma \lambda_{k_0}\sum_{j=1}^{k_0}\|Q_{j}\|_\infty + \|Q_{k_0+1}\|_\infty \\
    &\leq 8\gamma\crbk{\frac{\lambda_{k_0}\log(1+(1-\gamma)k_0)}{(1-\gamma)^2} + \frac{\|\sigma(q^*)\|_\infty\sqrt{\lambda_{k_0}}}{1-\gamma}}\log\crbk{4|\bd{S}||\bd{A}|k_0/\eta}\\
   &\qquad +\crbk{\frac{8\lambda_{k_0}}{1-\gamma} + 2\|\sigma(q^*)\|_\infty\sqrt{\lambda_{k_0}}}\log\crbk{4|\bd{S}||\bd{A}|k_0/\eta}. \\
   &\leq 16\crbk{\frac{\lambda_{k_0}\log(1+(1-\gamma)k_0)}{(1-\gamma)^2} + \frac{\|\sigma(q^*)\|_\infty\sqrt{\lambda_{k_0}}}{1-\gamma}}\log\crbk{4|\bd{S}||\bd{A}|k_0/\eta}\\
   &\leq 16\crbk{\frac{1}{(1-\gamma)^3k_0} + \frac{20}{\fr{p}_{\wedge}(1-\gamma)^{5/2}\sqrt{n_0k_0}}}\log\crbk{4dk_0/\eta}^2
\end{align*}
w.p. at least $1-\eta$, where we utilize Proposition \ref{prop:variance_empirical_Bellman} to bound $\|\sigma(q^*)\|_\infty$.
\par We use Proposition \ref{prop:bias_empirical_Bellman} to bound $\beta$. Then, from \eqref{eqn:ql_iter_high_prob_bd} we conclude that there exists constant $c$ s.t.
\begin{align*}
\|q_{k_0+1}-q^*\|_\infty &\leq c\crbk{\frac{1}{(1-\gamma)^3k_0} + \frac{1}{\fr{p}_{\wedge}(1-\gamma)^{5/2}\sqrt{n_0k_0}}}\log\crbk{4dk_0/\eta}^2 + c\frac{\rmax + \gamma\spnorm{q}}{(1-\gamma)\fr{p}_{\wedge}^3 n_0}\log(e|\bd{S}|\vee |\bd{R}|)\\  
&\leq c\crbk{\frac{1}{(1-\gamma)^3k_0} + \frac{1}{\fr{p}_{\wedge}(1-\gamma)^{5/2}\sqrt{n_0k_0}}+ \frac{1}{\fr{p}_{\wedge}^3(1-\gamma)^2 n_0}}\log\crbk{4dk_0/\eta}^2
\end{align*}
where $c$ can change from line to line. 
\par Finally, note that for $C_1\geq 1,C_2\geq e$, $\log(C_1C_2)= \log(C_1)+\log(C_2)\leq C_1\log(C_2)$. So, $\log\crbk{4dk_0/\eta}^2\leq \frac{16}{9}\log\crbk{3dk_0/\eta}^2$. This completes the proof. 
\end{proof}

\subsection{The Variance-Reduced Distributionally Robust Q-learning Algorithm \protect\ref{alg:vr_q-learning}}
\label{a_sec:proof_drvrql}

\subsubsection{Proof of Proposition \protect\ref{prop:vr_algo_err_high_prob_bd}}\label{a_sec:proof:prop:one_vr_iter_high_prob_bd}

\begin{proof}
Recall that  $\cF_{l}$ be the $\sigma$-field generated by the random samples used until the end of epoch $l$ and 
\begin{equation*}
E_{l}[\cd] = E[\cd|\cF_l],P_{l}[\cd] = P[\cd|\cF_l],\text{ and } \var_{l}(\cd)=\var(\cd|\cF_l) . 
\end{equation*}
In the following proof, the probabilities are w.r.t. $P_{l-1}(\cd)$. Recall that
\[
\mathbf{H}_{l,k} = \mathbf{T}_{l,k}(q)-\mathbf{T}_{l,k}(q^*)\text{ and } \widetilde{\mathbf{H}}_{l} = \widetilde{\mathbf{T}}_{l,k}(q)-\widetilde{\mathbf{T}}_{l,k}(q^*).
\]
\par From the variance-reduced DR-RL (Algorithm \ref{alg:vr_q-learning}) update rule, we have at epoch $l$,
\begin{equation}\label{eqn:vr_ql_update_err}
\begin{aligned}
q_{l,k+1}-q^* &= (1-\lambda_k)(q_{l,k}-q^*) + \lambda_{k}\sqbk{\mathbf{T}_{l,k+1}(q_{l,k})- \mathbf{T}_{l,k+1}(\hat q_{l-1})+ \widetilde{\mathbf{T}}_{l}(\hat q_{l-1}) - \cT(q^*) }\\
&= (1-\lambda_k)(q_{l,k}-q^*) + \lambda_{k}\sqbk{\mathbf{H}_{l,k+1}(q_{l,k}) + \mathbf{T}_{l,k+1}(q^*)- \mathbf{T}_{l,k+1}(\hat q_{l-1})+ \widetilde{\mathbf{T}}_{l}(\hat q_{l-1}) - \cT(q^*)}
\\
&= (1-\lambda_k)(q_{l,k}-q^*) + \lambda_{k}\sqbk{\mathbf{H}_{l,k+1}(q_{l,k}) + W_{l,k+1}}
\end{aligned}
\end{equation}
where we define $W_{l,k+1} =  \mathbf{T}_{l,k+1}(q^*)- \mathbf{T}_{l,k+1}(\hat q_{l-1})+ \widetilde{\mathbf{T}}_{l}(\hat q_{l-1}) - \cT(q^*)$. Notice that only the first two terms is dependent on $k$. We can write
\begin{equation}
\label{eqn:vr_ql_noise_decomposition}
\begin{aligned}
W_{l,k+1} &=  \mathbf{T}_{l,k+1}(q^*)- \mathbf{T}_{l,k+1}(\hat q_{l-1})+ \widetilde{\mathbf{T}}_{l}(\hat q_{l-1}) - \cT(q^*) \\
&=  - \mathbf{H}_{l,k+1}(\hat q_{l-1})+ \widetilde{\mathbf{H}}_{l}(\hat q_{l-1})  + \widetilde{\mathbf{T}}_{l}(q^*)- \cT(q^*)\\
&=  - [\mathbf{H}_{l,k+1}(\hat q_{l-1})-E_{l-1}[\mathbf{H}_{l,k+1}(\hat q_{l-1})]]+ \widetilde{\mathbf{H}}_{l}(\hat q_{l-1})  + \widetilde{\mathbf{T}}_{l}(q^*)- \cT(q^*)-E_{l-1}[\mathbf{H}_{l,k+1}(\hat q_{l-1})]\\
&=  - U_{l,k+1} + D_l
\end{aligned}
\end{equation}
where 
\begin{align}
    U_{l,k+1} &:
= \mathbf{H}_{l,k+1}(\hat q_{l-1})-E_{l-1}[\mathbf{H}_{l,k+1}(\hat q_{l-1})]\label{eqn:U_l_k_def}\\
D_l &:= 
\widetilde{\mathbf{H}}_{l}(\hat q_{l-1})  + \widetilde{\mathbf{T}}_{l}(q^*)- \cT(q^*)-E_{l-1}[\mathbf{H}_{l,k+1}(\hat q_{l-1})]\label{eqn:D_l_def}.
\end{align} 
Note that $E_{l-1}[\mathbf{H}_{l,k+1}(\hat q_{l-1})]$ is constant in $k$. 
\par We will apply Proposition \ref{prop:contraction_err_bd}. Define the auxiliary sequences
\begin{align}
    P_{l,k+1} &= (1-\lambda_k)P_{l,k} + \lambda_kW_{l,k+1}\label{eqn:P_seq}\\
    Q_{l,k+1} &= (1-\lambda_k)Q_{l,k} + \lambda_k(-U_{l,k+1})\label{eqn:Q_seq}
\end{align}
with $Q_{l,0} = P_{l,0} = 0$. Note that $U_{l,k+1}$ under $E_{l-1}$ are i.i.d. and has mean 0. So $E_{l-1}[Q_{l,k}] = 0$ for any $k\geq 0$. It is easier to analyze the process $\set{Q_{l,k},k\geq 0}$ than $\set{P_{l,k},k\geq 0}$ which correspond to $\set{p_k}$ in Proposition \ref{prop:contraction_err_bd}. 
\par As in the DR Q-learning case (Equation \eqref{Q-learning:difference}), the same induction argument implies that
$P_{l,k} = Q_{l,k} + D_l$. 
\par By the algorithm, $q_{l,1} = \hat q_{l-1}$, we have that $\|q_{l,1}-q^*\|_\infty \leq b$. Therefore, by Proposition \ref{prop:contraction_err_bd}
\begin{equation}
\begin{aligned}
\|q_{l,k+1} - q^*\|_\infty &\leq\lambda_k\sqbk{\frac{\|q_{l,1}-q^*\|_\infty}{\lambda_1} + \gamma \sum_{j=1}^k\|P_{l,j}\|_\infty }+ \|P_{l,k+1}\|_\infty\\
 &\leq \lambda_k\sqbk{\frac{b}{\lambda_1} + \left(\gamma \sum_{j=1}^k\|Q_{l,j}\|_\infty +\gamma k\|D_l\|_\infty\right)}+ \|Q_{l,k+1}\|_\infty + \|D_l\|_\infty.\\
 &\leq \lambda_k\sqbk{2b + \gamma \sum_{j=1}^k\|Q_{l,j}\|_\infty }+ \|Q_{l,k+1}\|_\infty + \frac{2\|D_l\|_\infty}{1-\gamma}.
\end{aligned}\label{eqn:q_inner_prob_1_bd}
\end{equation}
w.p.1, where we used $k\lambda_k = 1/(1/k + (1-\gamma))\leq 1/(1-\gamma)$. 
\par Next, we prove bounds for $\set{\|Q_{l,k}\|_\infty,k\geq 0}$ and $\|D_l\|_\infty$. 
\begin{lemma}\label{lemma:vrql_Q_seq_bd}
Under measure $P_{l-1}(\cd)$, 
\[
P_{l-1}(\|Q_{l,j}\|_\infty >t)\leq 2|\bd{S}||\bd{A}|\exp\crbk{-\frac{t^2}{4\lambda_j  (\gamma \|\zeta_{l-1}\|_\infty t +\|\sigma_{l-1}^2\|_\infty)}}
\]
where $\zeta_{l-1} = \hat q_{l-1}-q^*$ and  $\sigma_{l-1}^2(s,a) = \var_{l-1}(\mathbf{H}_{l,k}(\hat q_{l-1})(s,a)) $. 
\end{lemma}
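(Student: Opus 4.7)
The plan is to exploit that, under $P_{l-1}$, the sequence $\set{Q_{l,j}}$ is a weighted sum of i.i.d.\ mean-zero random vectors and then invoke a componentwise Bernstein bound.

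First I would unroll the recursion \eqref{eqn:Q_seq} into the explicit representation
\[
Q_{l,j} = -\sum_{i=1}^{j} \alpha_{i,j}\, U_{l,i}, \qquad \alpha_{i,j} := \lambda_{i-1}\prod_{r=i}^{j-1}(1-\lambda_r),
\]
so the $\alpha_{i,j}$ are deterministic nonnegative weights. Since $\hat q_{l-1}$ is $\cF_{l-1}$-measurable and the fresh batches defining $\mathbf{T}_{l,1},\mathbf{T}_{l,2},\ldots$ are independent of $\cF_{l-1}$ and of each other, conditionally on $\cF_{l-1}$ the vectors $U_{l,1},U_{l,2},\ldots$ (see \eqref{eqn:U_l_k_def}) are i.i.d.\ and mean zero.

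Next I would supply the two ingredients for Bernstein. For the almost-sure bound, I invoke Lemma~\ref{lemma:monotone_contraction}: since $\mathbf{T}_{l,i}$ is a $\gamma$-contraction in sup-norm, entrywise $|\mathbf{H}_{l,i}(\hat q_{l-1})(s,a)| \leq \gamma\|\zeta_{l-1}\|_\infty$, so $|U_{l,i}(s,a)|\leq 2\gamma\|\zeta_{l-1}\|_\infty$ a.s. For the second moment, $\var_{l-1}(U_{l,i}(s,a))=\sigma_{l-1}^2(s,a)\leq\|\sigma_{l-1}^2\|_\infty$. Applying the scalar Bernstein inequality to $Q_{l,j}(s,a)=-\sum_i \alpha_{i,j} U_{l,i}(s,a)$ yields
\[
P_{l-1}\crbk{|Q_{l,j}(s,a)|>t}\leq 2\exp\crbk{-\frac{t^2}{2\crbk{\sum_i \alpha_{i,j}^2}\|\sigma_{l-1}^2\|_\infty + \tfrac{4}{3}\crbk{\max_i\alpha_{i,j}}\gamma\|\zeta_{l-1}\|_\infty\, t}}.
\]

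Finally I would substitute the two standard stepsize estimates for the rescaled linear schedule $\lambda_k=1/(1+(1-\gamma)k)$, namely $\sum_i \alpha_{i,j}^2 \leq 2\lambda_j$ and $\max_i \alpha_{i,j}\leq \lambda_j$, which follow by a short telescoping computation exactly as in the proof of Corollary~1 of \cite{wainwright2019l_infty}. After absorbing numerical constants into the factor $4$ in the denominator of the stated exponent and taking a union bound over the $|S||A|$ coordinates, the claim follows. The main obstacle is the careful verification of the two stepsize estimates; the Bernstein step itself is clean thanks to the contraction-based a.s.\ bound $|U_{l,i}|\leq 2\gamma\|\zeta_{l-1}\|_\infty$, which is precisely the mechanism by which the variance-reduced scheme shrinks its stochastic fluctuation together with the residual $\|\zeta_{l-1}\|_\infty$, and drives the geometric improvement across epochs.
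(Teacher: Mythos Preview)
Your proposal is correct and follows the same underlying argument as the paper; the paper simply packages the ``unroll the recursion, bound each summand, and apply a Bernstein-type inequality'' step into the cited black box Lemma~\ref{lemma:AR_seq_bd} (Wainwright's MGF bound for AR sequences) and then performs the Chernoff step by hand, whereas you make the unrolling and the stepsize estimates explicit before applying the tail form of Bernstein directly. One small correction: the claim $\max_i\alpha_{i,j}\leq\lambda_j$ is not quite right, since $\alpha_{j,j}=\lambda_{j-1}$ and the weights $\alpha_{i,j}$ are nondecreasing in $i$; however $\lambda_{j-1}\leq(2-\gamma)\lambda_j\leq 2\lambda_j$, and combining this with $\sum_i\alpha_{i,j}^2\leq(\max_i\alpha_{i,j})\sum_i\alpha_{i,j}=\lambda_{j-1}\leq 2\lambda_j$ still lands the denominator inside the stated $4\lambda_j(\gamma\|\zeta_{l-1}\|_\infty t+\|\sigma_{l-1}^2\|_\infty)$ exactly as you indicated when absorbing constants.
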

%Note that by definition $\|\zeta_{l-1}\|_\infty = b$.
The proof of this Lemma is deferred to Appendix \ref{a_sec:proof:lemmas_for_kl_alg}. Apply Lemma \ref{lemma:vrql_Q_seq_bd}, we have that 
\[
    \|Q_{l,j}\|_\infty\leq 4\lambda_j\|\zeta_{l-1}\|_\infty\log\crbk{2|\bd{S}||\bd{A}|/\eta} + 2\sqrt{\lambda_j}\|\sigma_{l-1}\|_\infty\sqrt{\log\crbk{2|\bd{S}||\bd{A}|/\eta}}.
\]
w.p. at least $1-\eta$. 

Recall the definition of $\sigma^2_{l-1}$ and Proposition \ref{prop:recentered_op_bias_and_var_KL}. We have that by Lemma \ref{lemma:sum_lambda_k} and the union bound, 
\begin{equation}
\begin{aligned}
   &\gamma \lambda_{k_\mathrm{vr}}\sum_{j=1}^{k_\mathrm{vr}}\|Q_{l,j}\|_\infty + \|Q_{l,k_\mathrm{vr}+1}\|_\infty \\
   &\leq 4\gamma\crbk{\frac{\lambda_{k_\mathrm{vr}}\log(1+(1-\gamma)k_\mathrm{vr})\|\zeta_{l-1}\|_\infty}{1-\gamma} + \frac{\|\sigma_{l-1}\|_\infty\sqrt{\lambda_{k_\mathrm{vr}}}}{1-\gamma}}\log\crbk{4|\bd{S}||\bd{A}|k_\mathrm{vr}/\eta}\\
   &\qquad +\crbk{4\lambda_{k_\mathrm{vr}}\|\zeta_{l-1}\|_\infty + 2\|\sigma_{l-1}\|_\infty\sqrt{\lambda_{k_\mathrm{vr}}}}\log\crbk{4|\bd{S}||\bd{A}|k_\mathrm{vr}/\eta}. \\
   &\leq 8\crbk{\frac{\lambda_{k_\mathrm{vr}}\log(e+(1-\gamma)k_\mathrm{vr})\|\zeta_{l-1}\|_\infty}{1-\gamma} + \frac{\|\sigma_{l-1}\|_\infty\sqrt{\lambda_{k_\mathrm{vr}}}}{1-\gamma}}\log\crbk{4|\bd{S}||\bd{A}|k_\mathrm{vr}/\eta}\\
   &\leq 8\crbk{\frac{b}{(1-\gamma)^2k_\mathrm{vr}} + \frac{2^6b}{\fr{p}_{\wedge}^{3/2}(1-\gamma)^{3/2}\sqrt{n_\mathrm{vr}k_\mathrm{vr}}}}\log\crbk{4|\bd{S}||\bd{A}|k_\mathrm{vr}/\eta}^2\\
\end{aligned}\label{eqn:Q_sum_high_prob_bd}
\end{equation}
w.p. at least $1-\eta$.
\par For $D_l$, recall the definition in \eqref{eqn:D_l_def}. We add and subtract $\cH(\hat q_{l-1}) $ and write: 
\begin{equation}\label{eqn:def_D_l}
\begin{aligned}
D_l &= \crbk{\widetilde{\mathbf{H}}_{l}(\hat q_{l-1}) -\cH(\hat q_{l-1})} + \crbk{\widetilde{\mathbf{T}}_{l}(q^*)- \cT(q^*) } + \crbk{\cH(\hat q_{l-1}) -E_{l-1}[\mathbf{H}_{l,k+1}(\hat q_{l-1})]}\\
&= \crbk{\widetilde{\mathbf{H}}_{l}(\hat q_{l-1}) -\cH(\hat q_{l-1})} + \crbk{\widetilde{\mathbf{T}}_{l}(q^*)- \cT(q^*) } + E_{l-1}\left[\cH(\hat q_{l-1})-\mathbf{H}_{l,k+1}(\hat q_{l-1})\right].
\end{aligned}
\end{equation}
 Recall Propositions \ref{prop:high_prob_bound_empirical_Bellman}, \ref{prop:recentered_op_bias_and_var_KL}, and \ref{prop:recentered_op_high_prob_KL}, we have that by union bound,
\begin{align}
\|D_l\|_\infty &\leq c\frac{\rmax+\spnorm{q^*} + \|\hat q_{l-1}-q^*\|_\infty}{\fr{p}_{\wedge}^{3/2} \sqrt{m_l}}\sqrt{\log\crbk{12d/\eta}}+E_{l-1}\crbk{\cH(\hat q_{l-1})-\mathbf{H}_{l,k+1}(\hat q_{l-1})} \notag \\
&\leq c\frac{\rmax+\spnorm{q^*} + \|\hat q_{l-1}-q^*\|_\infty}{\fr{p}_{\wedge}^{3/2} \sqrt{m_l}}\sqrt{\log\crbk{12d/\eta}}+  c\frac{\|\hat q_{l-1}-q^*\|_\infty}{\fr{p}_{\wedge}^{3/2}\sqrt{n_\mathrm{vr}}}\sqrt
{\log(e|\bd{S}|)}
\label{eqn:D_l_high_prob_bd}
\end{align}
w.p. at least $1-\eta$, provided that $c$ is a large enough constant $m_l\geq 8\fr{p}_{\wedge}^{-2}\log(24d/\eta)$, and $n_\mrm{vr}\geq \fr{p}_\wedge\inv $. 
\par Finally, recall that $\hat q_l = \hat {q}_{l,k_\mathrm{vr}+1}$ and $\rmax = 1$, combine \eqref{eqn:q_inner_prob_1_bd}, \eqref{eqn:Q_sum_high_prob_bd}, and \eqref{eqn:D_l_high_prob_bd} we conclude that there exists absolute constant $c$ s.t.
\begin{align*}
\|\hat q_{l} - q^*\|_\infty&\leq c\crbk{\frac{b}{(1-\gamma)^2k_\mathrm{vr}} +\frac{b}{\fr{p}_{\wedge}^{3/2}(1-\gamma)^{3/2}\sqrt{n_\mathrm{vr}k_\mathrm{vr}}} }\log\crbk{8dk_\mathrm{vr}/\eta}^2\\
&\qquad+  c\frac{\rmax+\spnorm{q^*} +b}{\fr{p}_{\wedge}^{3/2} (1-\gamma)\sqrt{m_l}}\log(24d/\eta)+c\frac{b}{\fr{p}_{\wedge}^{3/2}(1-\gamma)\sqrt{n_\mrm{vr}}}\sqrt{\log(e|\bd{S}|)}\\
&\leq c\crbk{\frac{b}{(1-\gamma)^2k_\mathrm{vr}} +\frac{b}{\fr{p}_{\wedge}^{3/2}(1-\gamma)^{3/2}\sqrt{n_\mathrm{vr}k_\mathrm{vr}}} + 
\frac{b}{\fr{p}_{\wedge}^{3/2}(1-\gamma)\sqrt{n_\mathrm{vr}}} }\log\crbk{8dk_\mathrm{vr}/\eta}^2 \\
&\quad +  c\frac{1}{\fr{p}_{\wedge}^{3/2} (1-\gamma)^2\sqrt{m_l}}\sqrt{\log(24d/\eta)}
\end{align*}
w.p. at least $1-\eta$, where we used $\spnorm{q^*}\leq 2\|q^*\|_\infty\leq 2/(1-\gamma)$ and $b\leq 1/(1-\gamma)$, $c$ can change from line to line.  
\par Finally, note that for $C_1\geq 1,C_2\geq e$, $\log(C_1C_2)= \log(C_1)+\log(C_2)\leq C_1\log(C_2)$. This completes the proof of Proposition \ref{prop:one_vr_iter_high_prob_bd}. 
\end{proof}

\subsubsection{Proof of Proposition \protect\ref	{prop:vr_algo_err_high_prob_bd}}

\begin{proof}
	By the definition of conditional probability
	\begin{align*}
		&P\crbk{\bigcap_{l=0}^{l_\mathrm{vr}} \set{\|\hat q_{l} - q^*\|_\infty\leq 2^{-l}(1-\gamma)^{-1}}}\\
		&= \prod_{l=0}^{l_\mathrm{vr}} P\crbk{\|\hat q_{l} - q^*\|_\infty\leq 2^{-l}(1-\gamma)^{-1}\Bigg|\bigcap_{n=0}^{l-1} \set{\|\hat q_{n} - q^*\|_\infty\leq 2^{-n}(1-\gamma)^{-1}}}\\
		&= \prod_{l=1}^{l_\mathrm{vr}} P\crbk{\|\hat q_{l} - q^*\|_\infty\leq 2^{-l}(1-\gamma)^{-1}\Bigg|\bigcap_{n=1}^{l-1} \set{\|\hat q_{n} - q^*\|_\infty\leq 2^{-n}(1-\gamma)^{-1}}}
	\end{align*}
	where we note that $\hat q_0 = 0$ and Lemma \ref{lemma:prob_1_bound_empirical_Bellman} implies that $\|\hat q_0-q^*\|\leq (1-\gamma)^{-1}$ w.p.1, so the conditioned intersection and product can start from $s=1$. Let 
	\[
	A_{l-1} = \bigcap_{s=1}^{l-1} \set{\|\hat q_{s} - q^*\|_\infty\leq 2^{-s}(1-\gamma)^{-1}}.
	\]
	
	We analyze the probability for $l\geq 1$
	\begin{align*}
		&P\crbkcond{\| \hat q_{l} - q^*\|_\infty\leq 2^{-l}(1-\gamma)^{-1}}{A_{l-1}} \\
		= & \frac{1}{P(A_{l-1})}E\left[\1\set{\|\hat q_{l} - q^*\|_\infty\leq 2^{-l}(1-\gamma)^{-1}}\1_{A_{l-1}}\right]\\
		=& \frac{1}{P(A_{l-1})}E\sqbk{\1\set{\|\hat q_{l-1} - q^*\|_\infty\leq 2^{-(l-1)}(1-\gamma)^{-1}} E\sqbkcond{\1\set{\|\hat q_{l} - q_*\|_\infty\leq 2^{-l}(1-\gamma)^{-1} }}{ \cF_{l-1}}\1_{A_{l-1}}}\\
		=& \frac{1}{P(A_{l-1})}E\sqbk{\1\set{\|\hat q_{l-1} - q^*\|_\infty\leq 2^{-(l-1)}(1-\gamma)^{-1}} P_{l-1}\crbk{\1\set{\|\hat q_{l} - q_*\|_\infty\leq 2^{-l}(1-\gamma)^{-1} }}\1_{A_{l-1}}}
		,
	\end{align*}
By Proposition \ref{prop:one_vr_iter_high_prob_bd}, we recall conditioned on $\|\hat q_{l-1} - q^*\|_\infty\leq 2^{-(l-1)}(1-\gamma)^{-1} =: b$
 \begin{align*}
\|\hat q_{l} - q^*\|_\infty&\leq c\crbk{\frac{b}{(1-\gamma)^2k_\mathrm{vr}} +\frac{b}{\fr{p}_{\wedge}^{3/2}(1-\gamma)^{3/2}\sqrt{n_\mathrm{vr}k_\mathrm{vr}}} + 
\frac{b}{\fr{p}_{\wedge}^{3/2}(1-\gamma)\sqrt{n_\mathrm{vr}}} }\log\crbk{3dk_\mathrm{vr}/\eta}^2 \\
&\quad +  c\frac{1}{\fr{p}_{\wedge}^{3/2} (1-\gamma)^2\sqrt{m_l}}\sqrt{\log(3d/\eta)}
\end{align*}
w.p. at least $1-\eta$. 
\par Therefore, by the parameter choice \eqref{eqn:param_choice_for_vrql}, we have that for sufficiently large $c_{\mrm{vr}}$ and for events $\omega\in \set{\|\hat q_{l-1} - q^*\|_\infty\leq 2^{-(l-1)}(1-\gamma)^{-1}}$,
\begin{equation}\label{eqn:one_vr_iter_geom_converge_kl}
	P_{l-1}\crbk{\1\set{\|\hat q_{l} - q_*\|_\infty\leq 2^{-l}(1-\gamma)^{-1} }}(\omega) \geq 1-\frac{\eta}{l_\mathrm{vr}};
\end{equation}
i.e.
 \[
 \1\set{\|\hat q_{l-1} - q^*\|_\infty\leq 2^{-(l-1)}(1-\gamma)^{-1}} P_{l-1}\crbk{\1\set{\|\hat q_{l} - q_*\|_\infty\leq 2^{-l}(1-\gamma)^{-1} }} \geq 1-\frac{\eta}{l_\mathrm{vr}}.
 \]
	Therefore, we have
 \[
 P\crbkcond{\| \hat q_{l} - q^*\|_\infty\leq 2^{-l}(1-\gamma)^{-1}}{A_{l-1}}\geq 1-\frac{\eta}{l_\mathrm{vr}},
 \]
which further gives us	\begin{equation}\label{eqn:pathwise_err_prob_bound}
		P\crbk{\bigcap_{l=0}^{l_\mathrm{vr}} \set{\|\hat q_{l} - q^*\|_\infty\leq 2^{-l}(1-\gamma)^{-1}}}\geq\crbk{1-\frac{\eta}{l_\mathrm{vr}}}^{l_\mathrm{vr}}.
	\end{equation}
	To finish the proof, we consider the function
	\[
	e(\eta):=\crbk{1-\frac{\eta}{l}}^{l}.
	\]
	Clearly, $e(\eta)$ is $C^2$ with derivatives
	\[
	e'(\eta) = -\crbk{1-\frac{\eta}{l}}^{l-1},\qquad e''(\eta) = \frac{l-1}{l}\crbk{1-\frac{\eta}{l}}^{l-2}. 
	\]
	Note that $e''\geq 0$ if $l\geq 1$. So $e'(\eta)$ is non-decreasing. Hence for all $\eta\geq 0$, $e'(\eta)\geq e'(0)$. 
	This implies that
	\begin{align*}
		e(\eta) &= e(0) + \int_0^{\eta} e'(t)dt\\
		&\geq 1-\eta.
	\end{align*}
	Assumption $\epsilon < (1-\gamma)^{-1}$ implies that  $l_\mathrm{vr}\geq 1$. Therefore, we plug in this to \eqref{eqn:pathwise_err_prob_bound} and conclude that
	\[
	P\crbk{ \|\hat q_{l_\mathrm{vr}} - q^*\|_\infty\leq 2^{-l_\mathrm{vr}}(1-\gamma)^{-1}}\geq P\crbk{\bigcap_{l=0}^{l_\mathrm{vr}} \set{\|\hat q_{l} - q^*\|_\infty\leq 2^{-l}(1-\gamma)^{-1}}}\geq 1-\eta.
	\]
\end{proof}

\subsection{Proof of Lemma \protect\ref{lemma:ql_Q_seq_bd} and \protect\ref{lemma:vrql_Q_seq_bd}}
\label{a_sec:proof:lemmas_for_kl_alg}
To prove these two lemma, we introduce the following result:
\begin{lemma}[\citet{wainwright2019l_infty}, Lemma 2]\label{lemma:AR_seq_bd}
Let $\set{Y_k\in\R,k\geq 1}$ be a sequence of i.i.d. zero mean $\zeta$-bounded r.v.s with variance $\sigma^2$. Define $\set{X_k,k\geq 0}$ by the recursion $X_0=0$
\[
X_{k+1} = (1-\lambda_k)X_k + \lambda_kY_{k+1},
\]
where $\lambda_k = 1/(1+(1-\gamma)k)$. Then
\[
E\left[\exp(tX_{k+1})\right]\leq \exp\crbk{\frac{t^2\sigma^2\lambda_k}{1-\zeta\lambda_k |t|}}
\]
for all $|t|<1/(\zeta\lambda_k)$. 
\end{lemma}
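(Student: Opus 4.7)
The plan is to establish the bound by induction on $k$, exploiting the fact that $X_k$ is $\sigma(Y_1,\ldots,Y_k)$-measurable while $Y_{k+1}$ is independent of it, and combining this with a standard Bernstein-type MGF bound for a single bounded noise variable. The whole argument reduces to two simple algebraic inequalities on the stepsize sequence $\{\lambda_k\}$.

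First I would establish the single-step MGF bound: for any zero-mean random variable $Y$ with $|Y|\leq\zeta$ and variance $\sigma^2$, a Taylor expansion gives
$E[\exp(sY)]\leq 1+(\sigma^2/\zeta^2)(e^{|s|\zeta}-1-|s|\zeta)$; combining this with the elementary inequality $e^x-1-x\leq x^2/(1-x)$ on $[0,1)$ (which follows from checking that $2/(1-x)^3\geq e^x$ on this interval, so that the difference is convex and vanishes with its derivative at zero) yields $E[\exp(sY)]\leq\exp(s^2\sigma^2/(1-\zeta|s|))$ for all $|s|<1/\zeta$. Since $\lambda_0=1$ and $X_1=Y_1$, this delivers the base case at $k=0$ directly.

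For the inductive step I would write, using independence,
\[
E[\exp(tX_{k+1})] = E[\exp(t(1-\lambda_k)X_k)]\cdot E[\exp(t\lambda_k Y_{k+1})],
\]
then apply the inductive hypothesis with $s=t(1-\lambda_k)$ to the first factor and the single-step bound with $s=t\lambda_k$ to the second. This produces an exponent of the form
\[
t^2\sigma^2\left[\frac{(1-\lambda_k)^2\lambda_{k-1}}{1-\zeta\lambda_{k-1}(1-\lambda_k)|t|} + \frac{\lambda_k^2}{1-\zeta\lambda_k|t|}\right].
\]
The closure then hinges on the identity $\lambda_{k-1}(1-\lambda_k)\leq\lambda_k$, which under $\lambda_k=1/(1+(1-\gamma)k)$ reduces to the trivial $\gamma\geq 0$. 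This inequality first lower-bounds the first denominator by $1-\zeta\lambda_k|t|$, making both denominators uniform; then bounds the combined numerator via $(1-\lambda_k)^2\lambda_{k-1}+\lambda_k^2\leq \lambda_k(1-\lambda_k)+\lambda_k^2=\lambda_k$. The same identity also shows that the hypothesis's range restriction $|t(1-\lambda_k)|<1/(\zeta\lambda_{k-1})$ is implied by $|t|<1/(\zeta\lambda_k)$, so the interval of validity propagates correctly through the induction.

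I expect the main obstacle to lie purely in calibrating the Bernstein-type MGF bound so that its denominator takes precisely the form $1-\zeta|s|$, rather than the more common $1-\zeta|s|/3$ or $2(1-\zeta|s|)$ variants one encounters in the literature; getting this particular clean form is what allows the stepsize algebra in the inductive step to close exactly against the target $\lambda_k/(1-\zeta\lambda_k|t|)$. Once the correct single-variable MGF bound is in place, the remaining algebra is straightforward and is driven entirely by the structural inequality $\lambda_{k-1}(1-\lambda_k)\leq\lambda_k$ specific to the rescaled linear stepsize.
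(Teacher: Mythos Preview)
Your proposal is correct. The paper itself does not prove this lemma; it simply imports it verbatim from \cite{wainwright2019l_infty} (Lemma~2 there) and uses the result as a black box in the proofs of Lemmas~\ref{lemma:ql_Q_seq_bd} and~\ref{lemma:vrql_Q_seq_bd}. What you have written is in fact a full proof of the cited result, and your argument follows exactly the approach Wainwright uses: an induction on $k$ driven by the one-step Bernstein MGF bound $E[e^{sY}]\leq \exp\bigl(s^2\sigma^2/(1-\zeta|s|)\bigr)$ together with the structural inequality $\lambda_{k-1}(1-\lambda_k)\leq\lambda_k$ for the rescaled linear stepsize. Your verification of the inequality $e^x-1-x\leq x^2/(1-x)$ via the second-derivative comparison $e^x\leq 2/(1-x)^3$ is clean and correct, and your observation that this same stepsize inequality both closes the numerator algebra and propagates the range $|t|<1/(\zeta\lambda_k)$ through the induction is exactly the point. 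There is nothing to add.
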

We first prove Lemma \ref{lemma:vrql_Q_seq_bd}.

\begin{proof} We use the same steps. Recall \eqref{eqn:Q_seq}, where $U_{l,k}$ is an i.i.d. sequence under $E_{l-1}$ given by \eqref{eqn:U_l_k_def}. By Lemma \ref{lemma:prob_1_bound_empirical_Bellman}
\begin{align*}
\|U_{l,k}\|_\infty &\leq \|\mathbf{T}_{l,k}(\hat q_{l-1}) - \mathbf{T}_{l,k}( q^*) \|_\infty + \|E_{l-1}[\mathbf{T}_{l,k}(\hat q_{l-1}) - \mathbf{T}_{l,k}( q^*)]\|_\infty\\
&\leq 2\gamma\|\hat q_{l-1} - q^*\|_\infty\\
&= 2\gamma\|\zeta_{l-1}\|_\infty.
\end{align*}
Notice that by construction, $\mathbf{T}_{l,k}(q^*)(s,a)$ are independent across $s\in\bd{S}$, $a\in\bd{A}$. Therefore, by Lemma \ref{lemma:AR_seq_bd}, 
\begin{align*}
    E_{l-1}\exp(\lambda\|Q_{l,k+1}\|_\infty)&= E_{l-1}\sup_{(s,a)\in \bd{S}\times\bd{A}}\max\set{\exp(\lambda Q_{l,k+1}(s,a)),\exp(- \lambda Q_{l,k+1}(s,a))}\\
    &\leq \sum_{(s,a)\in \bd{S}\times\bd{A}}E_{l-1}\exp(\lambda Q_{l,k+1}(s,a))+E\exp(-\lambda Q_{l,k+1}(s,a))\\
    &\leq 2|\bd{S}||\bd{A}|\exp\crbk{\frac{\lambda^2\|\sigma_{l-1}^2\|_\infty\lambda_k}{1-2\gamma\|\zeta_{l-1}\|_\infty\lambda_k|\lambda|}},
\end{align*}
for any $\lambda< 1/(2\gamma\|\zeta_{l-1}\|_\infty\lambda_k)$. Therefore, by the Chernoff bound
\begin{align*}
    P_{l-1}(\|Q_{l,k+1}\|_\infty > t)&\leq 2|\bd{S}||\bd{A}|\exp\crbk{\frac{\lambda^2\|\sigma_{l-1}^2\|_\infty\lambda_k}{1-2\gamma\|\zeta_{l-1}\|_\infty\lambda_k|\lambda|}}e^{-\lambda t},
\end{align*}
for any $\lambda \in (0,1/(2\gamma\|\zeta_{l-1}\|_\infty\lambda_k))$. Choose 
\[
\lambda = \frac{t}{2\gamma \|\zeta_{l-1} \|_\infty \lambda_k t + 2\|\sigma^2_{l-1}\|_\infty\lambda_k}, 
\]
we conclude that
\[
P_{l-1}(\|Q_{l,k+1}\|_\infty > t)\leq 2|\bd{S}||\bd{A}|\exp\crbk{-\frac{t^2}{4\lambda_k(\gamma\|\zeta_{l-1}\|_\infty t+ \|\sigma^2_{l-1}\|_\infty)}}.
\]
\end{proof}
Next, we prove Lemma \ref{lemma:ql_Q_seq_bd}. Notice that we only need to
modify the bounds on $\zeta$ and $\sigma^2$. 
\begin{proof}
    Recall that $\set{Q_{l,k},k\geq 0}$ is given by recursive relation \eqref{eqn:Q_seq_ql}, where $U_{k}$ has mean 0. By Lemma \ref{lemma:prob_1_bound_empirical_Bellman}
    \begin{align*}
    \|U_{k}\|_\infty &\leq  2\|\mathbf{T}_{k+1}(q^*)\|_\infty\\
     &\leq2\rmax + 2\gamma\|q^*\|_\infty\\
     &\leq \frac{4\gamma}{1-\gamma}. 
    \end{align*}
    and $\var(\mathbf{T}_{k+1}(q^*)(s,a)) = \sigma^2(q^*)(s,a)$. 
    Therefore, using the same arguments, we conclude that
    \[
    P(\|Q_{k+1}\|_\infty > t)\leq 2|\bd{S}||\bd{A}|\exp\crbk{-\frac{t^2}{\lambda_k(8\gamma(1-\gamma)\inv  t+ 4\|\sigma^2(q^*)\|_\infty)}}.
    \]
\end{proof}

\section{Proofs of Properties of the Empirical Bellman Operator: KL Case}\label{a_sec:proof_empirical_bellam_kl}
\label{proof:propostions}

\subsection{Glossary of Notations and Basic Properties}
Before we present our proofs, we first define some technical notations. For
finite discrete measurable space $(Y,2^Y)$, fixed $u\in m2^Y$, and signed
measure $\nu\in \cM_{\pm}(Y,2^Y)$, let 
$$\nu[u] = \sum_{y\in Y} \nu(y)u(y)$$ denotes the integral. 

\par For generic probability measure $\mu$ on $(Y,2^Y)$ and random variable $u:Y\ra \R$, let $w = w(\alpha) = e^{-u/\alpha}$; define the \textit{$\mathrm{KL}$ dual
functional} under the reference measure $\mu$ 
\begin{equation}  \label{eqn:KL_dual_functional_def}
f(\mu,u,\alpha) := -\alpha\log \mu[e^{-u/\alpha}]-\alpha\delta.
\end{equation}
We clarify that $f(\mu,u,0) = \lim_{\alpha\da 0}f(\mu,u,\alpha) = \essinf_{\mu}u$. We present two basic properties of the dual functional $f$ for which the proofs are deferred to Appendix \ref{a_sec:proof:lemma_kl}. 

\begin{lemma}\label{lemma:sup_f_bound}
For any $ \nu\ll\mu$, the dual functional is bounded 
\[
-\|u\|_{L^\infty(\mu)} \leq \sup_{\alpha\geq 0} f(\nu, u,\alpha) \leq \|u\|_{L^\infty(\mu)}
\]
\end{lemma}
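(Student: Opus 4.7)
The plan is to establish the two inequalities separately, both reducing to elementary monotonicity arguments applied to $f(\nu,u,\alpha) = -\alpha\log\nu[e^{-u/\alpha}] - \alpha\delta$ together with the transfer of $\mu$-almost-sure inequalities to $\nu$-almost-sure inequalities via $\nu \ll \mu$.

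For the upper bound, I first observe that $u \leq \|u\|_{L^\infty(\mu)}$ holds $\mu$-almost everywhere, hence also $\nu$-almost everywhere by absolute continuity. Monotonicity of $x \mapsto e^{-x/\alpha}$ then gives $\nu[e^{-u/\alpha}] \geq e^{-\|u\|_{L^\infty(\mu)}/\alpha}$ for every $\alpha > 0$; taking logarithms and multiplying by $-\alpha$ produces $-\alpha\log\nu[e^{-u/\alpha}] \leq \|u\|_{L^\infty(\mu)}$, and subtracting the nonnegative quantity $\alpha\delta$ only strengthens this upper bound. For the boundary case $\alpha = 0$, the convention $f(\nu,u,0) = \essinf_\nu u$ (as stated just after \eqref{eqn:KL_dual_functional_def} for a generic reference measure) combined with $\essinf_\nu u \leq \esssup_\nu u \leq \|u\|_{L^\infty(\mu)}$ finishes the argument.

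For the lower bound, I would simply take $\alpha = 0$, yielding $\sup_{\alpha \geq 0} f(\nu,u,\alpha) \geq f(\nu,u,0) = \essinf_\nu u$. Since $u \geq -\|u\|_{L^\infty(\mu)}$ holds $\mu$-almost everywhere, and therefore $\nu$-almost everywhere by $\nu \ll \mu$, one obtains $\essinf_\nu u \geq -\|u\|_{L^\infty(\mu)}$, as required.

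I do not anticipate any real obstacle; the entire argument is a routine interplay between pointwise bounds on $u$, monotonicity of the exponential and logarithm, and the transfer of almost-sure inequalities under absolute continuity. The only mild subtlety is handling the boundary case $\alpha = 0$, which is dispatched by the limit convention adopted immediately after the definition of $f$.
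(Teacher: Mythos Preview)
Your proposal is correct and follows essentially the same route as the paper: both obtain the lower bound by evaluating at $\alpha=0$ via $f(\nu,u,0)=\essinf_\nu u\geq -\|u\|_{L^\infty(\mu)}$, and both obtain the upper bound from the pointwise inequality $e^{-u/\alpha}\geq e^{-\|u\|_{L^\infty(\mu)}/\alpha}$ (valid $\nu$-a.e.\ by $\nu\ll\mu$) together with $\alpha\delta\geq 0$. The only cosmetic difference is that the paper first passes to the optimizer $\alpha_\nu^*$ and then carries out the same arithmetic, whereas you bound $f(\nu,u,\alpha)$ uniformly in $\alpha$; your version is marginally cleaner since it does not need the sup to be attained.
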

\begin{lemma}\label{lemma:sup_f_diff_bound} The following bound holds w.p.1.: 
\[
\abs{\sup_{\alpha\geq 0} f(\mu, u,\alpha) - \sup_{\alpha\geq 0} f(\mu_n, u,\alpha) }\leq 2\spnorm{u},
\]
where $\spnorm{u} = \max_{s\in S} u(s) - \min_{s\in S} u(s)$. 
\end{lemma}

\par Let $\mu_n$ be the empirical measure form by $n$ i.i.d. samples drawn from $\mu$. In the following development, we need to consider the perturbation
analysis on the line of center measures $\set{t\mu + (1-t)\mu_{n}:t\in[0,1]}$. So, it is convenient to define for $\mu_{s,a} = p_{s,a},\nu_{s,a}$ 
\begin{equation}  \label{eqn:g,m,mu_def}
\begin{aligned} \mu_{s,a,n}(t) &= t\mu_{s,a} + (1-t)\mu_{s,a,n}\\ m_{s,a,n}
&= \mu_{s,a}-\mu_{s,a,n}\\ g_{s,a,n}(t,\alpha) &=
f(\mu_{s,a,n}(t),u,\alpha). \end{aligned}
\end{equation}
Note that we will not explicitly indicate the dependence of $u$ for the
function $g$, because it will always be the identity function when $\mu =
\nu $ and the value function when $\mu = p$. We will also drop the
dependence on $(s,a)$ when clear.
\par Our analysis involves many derivative computations. We use three type of derivative notations, two of which is explained here and the Radon-Nikodym derivative is introduced in the following paragraph. For a smooth function of multiple arguments $g(t,\alpha_{s,t})$ where $\alpha_{s,t}$ could be dependent on parameters $s,t$, denote the partial derivatives by $\del_t,\del_\alpha$; i.e.
$$\del_t g(t,\alpha_{s,t}):= \lim_{\epsilon\ra 0} \frac{g(t+\epsilon,\alpha_{s,t})}{\epsilon},\quad \del_\alpha g(t,\alpha_{s,t}):= \lim_{\epsilon\ra 0} \frac{g(t,\alpha_{s,t}+\epsilon)}{\epsilon}. $$
On the other hand, when $\alpha_{s,t}$ is also smooth in $t$, denote the total derivative w.r.t. $t$ by $d_t$; i.e.
\[
d_tg(t,\alpha_{s,t}):= \lim_{\epsilon\ra 0}\frac{g(t+\epsilon,\alpha_{s,t+\epsilon})}{\epsilon} = \del_tg(t,\alpha_{s,t}) + \del_\alpha g(t,\alpha_{s,t})\del_t\alpha_{s,t}
\]
\par The intuition behind our ability to remove the \(1/\delta\) dependence stems from the mutual absolute continuity (also known as equivalence) between the empirical worst-case transition kernel and reward distribution and the true ones. This holds if \(\delta\) is sufficiently small and the empirical centers of the uncertainty sets are equivalent to the true centers.
\par As a result, our techniques rely on the absolute continuity between the empirical measure $\mu_n$ and $\mu$. We say that $\mu$ is absolute continuous w.r.t. another measure $\nu$, denoted by $\nu\gg\mu $, if for $A\in2^Y$, $\nu(A) = 0$ implies that $\mu(A) = 0$. We say that $\mu$ is equivalent to $\nu$, denoted by $\mu\sim \nu$, if $\nu\gg\mu$ and $\nu\ll\mu$. Note that the empirical measure $\mu_n$ always satisfies $\mu_n\ll\mu$ w.p.1. For absolutely continuous measures $\nu\ll\mu$, the Radon-Nikodym derivative is well defined: 
$$\frac{d\nu}{d\mu}(y):= \frac{\nu(s)}{\mu(s)}\1\set{\mu(s) \neq 0}.$$
The proof strategy we will implement is to consider separately the ``good events" on which $\mu_n$ and $\mu$ are close (so that we have $\mu_n\sim \mu$) and the ``bad events" where the
empirical measure is not close to the reference model. This motivates us to
define for $p > 0$ 
\begin{equation}  \label{eqn:Omega_n,p_mu}
\Omega_{n,p}(\mu) = \set{\omega: \sup_{y} |\mu_{n }(\omega)(y)-\mu(y)|\leq p}.
\end{equation}
Then, in the DR-RL setting, define 
\begin{align*}
\Omega_{n,p} &= \bigcap_{s,a}\Omega_{n,p}(p_{s,a})\cap
\bigcap_{s,a}\Omega_{n,p}(\nu_{s,a}) \\
&= \set{\omega: \sup_{s,a}\sup_{s'} |p_{s,a,n }(\omega)(s')-p_{s,a}(s')|\leq
p, \sup_{s,a}\sup_{r} |\nu_{s,a,n }(\omega)(r)-\nu_{s,a}(r)|\leq p}.
\end{align*}
\par We frequently make use of the minimum support probability of certain measures such as $\mu,\mu_{s,a}$. This is denoted by $\mu_\wedge := \min\set{\mu(s):\mu(s)> 0}$, $\mu_{s,a,\wedge} := \min\set{\mu(s):\mu(s)> 0}$. 

It is easy to see that the following lemma holds: 
\begin{lemma}\label{lemma:equiv_of_meas_cond}
Suppose $p<  \mu_\wedge $, then on $\Omega_{n,p}(\mu)$, $\mu \sim \mu_n$ and $\inf_{y:\mu(y) > 0}\mu_n(y) > \mu_\wedge -p$.
\end{lemma}
Moreover, the empirical measures are satisfies the following concentrations: 
\begin{lemma}\label{lemma:prob_Omega_n,p_mu^c_bound} Let $\mu$ be any probability measure on finite measure space $(Y,2^Y)$. Then, for any $k = 1,2,3,,\ds$ 
\[
P(\Omega_{n,p}(\mu)^c)\leq\frac{1}{p^{2k}n^k} \log(e^{2k-1}|Y|)^k .
\]
In particular, if we choose $k=1$,
\[
P(\Omega_{n,p}(\mu)^c)\leq \frac{1}{p^2n}\log(e|Y|). 
\]
\end{lemma}
This lemma follows from the subgaussian property of empirical measures on finite measure space; i.e. Lemma \ref{lemma:sub_g_k_max} holds.  
\par For absolutely continuous empirical measures, we also have the following lemma, again as a consequence of subgaussianity and hence Lemma \ref{lemma:sub_g_k_max}. 
\begin{lemma}\label{lemma:E_RND_bounds} 
Let $\xi_n$ be another random measure on $(Y,2^Y)$. Let $(\Omega,\cF,P)$ be the probability space that supports $\xi_n,\mu_n$. Suppose that $\mu_n\ll \xi_n$, $\mu\ll \xi_n$, and $\xi_n(y) > p$ for all $y$ s.t. $\xi_n(y)\neq 0$. Then, for all $A\in\cF$, the following bounds hold:
\[
E\Linfnorm{\frac{dm_n}{d\xi_n}}{\xi_n}\1_A\leq \frac{1}{p\sqrt{n}}\sqrt{\log(e|Y|)}
\]
and
\[
E\Linfnorm{\frac{dm_n}{d\xi_n}}{\xi_n}^2\1_A\leq \frac{1}{p^2n}\log(e|Y|).
\]
\end{lemma}
The proofs of these results are deferred to Appendix \ref{a_sec:proof:concentration_of_emp_meas}.

\subsection{Proof of Proposition \protect\ref{prop:variance_empirical_Bellman}}

\begin{proof}
By definition, we have \begin{equation}\label{eqn:Bellman_difference_dual_bound}
\begin{aligned}
\abs{\widehat{\mathbf{T}}(q)(s,a) - \cT(q)(s,a)}&\leq \sup_{\beta\geq 0} \abs{f(\nu_{s,a,n},id,\beta)-f(\nu_{s,a},id,\beta)}\\
&\quad +\gamma\sup_{\alpha\geq 0} \abs{f(p_{s,a,n},v(q),\alpha)-f(p_{s,a},v(q),\alpha)}.
\end{aligned}
\end{equation}
We will drop the $s,a$ dependence for simplicity. This motivates us to look at the dual functional applied to generic measureable $u:Y\ra \R$. Let's Define $w = e^{-u/\alpha}$.
\begin{align*}
    \abs{f(\mu_n,u,\alpha) -f(\mu,u,\alpha)}
    &= \abs{g_n(0,\alpha) -g_n(1,\alpha)}\\
    &= \abs{\del_tg_n(t,\alpha)\bigg |_{t=\tau}}\\
    &= \alpha\abs{\frac{m_n[w]}{\mu_n(\tau)[w]}}
\end{align*}
for some random variable $\tau\in(0,1)$. To bound this, we introduce the following lemma for which the proof is deferred to \ref{a_sec:proof:lemma_kl}. 

\begin{lemma}\label{lemma:alpha_ratio_bound}
Let $m = \mu_1-\mu_2$ with $\mu_1,\mu_2\ll\mu$ and $w = e^{-u/\alpha}$, we have that
\begin{align*}
\sup_{\alpha\geq 0}\frac{\alpha^j m[w]^2}{\mu[w]^2}
\leq  3^j\inf_{\kappa\in\R}\|u-\kappa\|_{L^\infty(\mu)}^j \norm{\frac{dm}{d\mu}}_{L^\infty(\mu)}^2.
\end{align*}
\end{lemma}
To apply Lemma \ref{lemma:alpha_ratio_bound}, we consider $p\leq \frac{1}{2}\mu_{\wedge}$. Then, By Lemma \ref{lemma:equiv_of_meas_cond}, on $\Omega_{n,p}(\mu)$, $\mu_n(t)\sim \mu$ for all $t\in[0,1]$. So, on $\Omega_{n,p}(\mu)$, we have by Lemma \ref{lemma:alpha_ratio_bound}
\begin{align*}
\sup_{\alpha\geq 0}\abs{f(\mu_n,u,\alpha) -f(\mu,u,\alpha)}
&\leq 
\sup_{\alpha\geq 0}\alpha\abs{\frac{m_n[w]}{\mu_n(\tau)[w]}}\\
&\leq \inf_{\kappa\in\R} 3\Linfnorm{u-\kappa}{\mu}\Linfnorm{\frac{dm_n}{d\mu_n(\tau)}}{\mu}\\
&=  3\spnorm{u}\Linfnorm{\frac{dm_n}{d\mu_n(\tau)}}{\mu}.
\end{align*}

\par Therefore, by partitioning $\Omega$ into $\Omega_{n,p}(\mu)^c$ and $\Omega_{n,p}(\mu)$, we bound
\begin{equation}\label{eqn:dual_func_var_two_term_kl}
\begin{aligned}
&E\sup_{\alpha\geq 0}\abs{f(\mu_n,u,\alpha) -f(\mu,u,\alpha)}^2 \\
&\leq 9\spnorm{u}^2 E\Linfnorm{\frac{dm_n}{d\mu_n(\tau)}}{\mu}^2\1_{\Omega_{n,p}(\mu)} + 4\spnorm{u}^2P(\Omega_{n,p}(\mu)^c)
\end{aligned}
\end{equation}
where on $\Omega_{n,p}(\mu)^c$, we use the bound in Lemma \ref{lemma:sup_f_diff_bound}. 
\par By Lemma \ref{lemma:equiv_of_meas_cond}, on $\Omega_{n,p}(\mu)$ for $y$ s.t. $\mu(y) > 0$, $\mu_n(y) \geq \mu_\wedge -p\geq \frac{1}{2}\mu_\wedge \geq p$. Since $\mu(y) > 0$ implies that $\mu(y) \geq \mu_\wedge $, we have that $\mu_n(t)(y) \geq p$ for any $t\in[0,1]$. Therefore, Lemma \ref{lemma:E_RND_bounds} applies. On the other hand, Lemma \ref{lemma:prob_Omega_n,p_mu^c_bound} also applies and is used to bound $P(\Omega_{n,p}(\mu)^c)$. 
\par Therefore, continue from \eqref{eqn:dual_func_var_two_term_kl}, we have
\begin{align*}
E\sup_{\alpha\geq 0}\abs{f(\mu_n,u,\alpha) -f(\mu,u,\alpha)}^2
&\leq13\frac{\spnorm{u}^2}{p^2n}\log(e|Y|). 
\end{align*}

\par We conclude that choosing $p =  \frac{1}{2}\fr{p}_{\wedge}\leq \frac{1}{2}\min\set{\nu_{s,a,\wedge},p_{s,a,\wedge}}$, 
\begin{align*}
    \var(\widehat{\mathbf{T}}(q)(s,a)) &\leq 2E\sup_{\beta\geq 0}\abs{f(\nu_{s,a,n},id,\beta)-f(\nu_{s,a},id,\beta)}^2\\
    &\quad +2\gamma^2E\sup_{\alpha\geq 0} \abs{f(p_{s,a,n},v(q),\alpha)-f(p_{s,a},v(q),\alpha)}^2\\
    &\leq 26\frac{\norm{id}^2_{\nu_{s,a},\mathrm{span}}}{p^2n}\log(e|\bd{R}|)+26\gamma^2\frac{\norm{v(q)}^2_{p_{s,a},\mathrm{span}}}{p^2n}\log(e|\bd{S}|)\\
    &\leq 26\frac{\rmax^2 + \gamma^2\spnorm{q}^2}{p^2n}\log(e(|\bd{R}|\vee |\bd{S}|)).
\end{align*}
Plugging in $p= \frac{1}{2}\fr{p}_{\wedge}$, we obtain the claimed inequality in Proposition \ref{prop:variance_empirical_Bellman}. 
\end{proof}

\subsection{Proof of Proposition \protect\ref{prop:bias_empirical_Bellman}}

\begin{proof}
We consider for generic $u$ and measure $\mu$ on $(Y,2^Y)$. We assume $\delta < \frac{1}{24}\mu_\wedge$, which will be guaranteed by Assumption \ref{assump:delta_small}.
\par Since $\alpha\ra f(\mu,u,\alpha)$ is continuous, and from \citet{si2020} it is sufficient to optimize the Lagrange multiplier on compact set $[0,\delta\inv \|u\|_{L^\infty(\mu)}]$, there is an optimal Lagrange multiplier $\alpha^*_n(t)$ that achieves $\sup_{\alpha \geq 0}f(\mu_n(t),u,\alpha)$. 
\par The bias of the dual functional
\begin{equation}\label{eqn:bias_two_terms_kl}
\begin{aligned}
&\bias(f(\mu_n,u,\alpha_n^*)) \\
&= E(g_n(0,\alpha_n^*(0))-g_n(1,\alpha^*))\1_{\Omega_{n,p}(\mu)} + E\crbk{ g_n(0,\alpha_n^*(0))-g_n(1,\alpha^*)}\1_{\Omega_{n,p}(\mu)^c}\\
&=: E_1+E_2. 
\end{aligned}
\end{equation}
We fix $p\leq \frac{1}{4}\mu_\wedge $. Notice that by assumption, 
\begin{equation}
    \delta<\frac{1}{24}\mu_{\wedge} <\frac{1}{2}\mu_\wedge\leq  -\log\crbk{1-\frac{1}{2}\mu_\wedge}. 
\end{equation}
Then, the following Lemma \ref{lemma:dual_func_derivative} holds.
\begin{lemma}[Differentiability of the Dual Functional]\label{lemma:dual_func_derivative}
Suppose $\delta < -\log(1-\frac{1}{2}\mu_{\wedge})$ and $p\leq \frac{1}{4}\mu_\wedge $, then

\begin{itemize}
    \item On $\Omega_{n,p}(\mu)$, $t\ra \sup_{\alpha\geq 0}g_n(t,\alpha)$ is $C^2((0,1))\cap C[0,1]$.
    \item $\alpha^* = 0$ iff $u$ is $\mu$ essentially constant. So, $\alpha_n^*(t)\equiv 0$ and $\sup_{\alpha\geq 0}g_n(t,\alpha) \equiv\mu[u]$
    \item If $\alpha^*> 0$, then $\alpha_n^*(t) > 0$ for all $t\in[0,1]$ with
    \[
d_t\sup_{\alpha\geq 0}g_n(t,\alpha) = -\alpha_n^*
(t)\frac{m_n[w]}{\mu_n(t)[w]}
\]
and
\begin{equation}\label{eqn:g_2nd_derivative_kl}
\begin{aligned}
&d_td_t\sup_{\alpha\geq 0} g_n(t,\alpha)\\
&= -\alpha_n^*(t)\frac{m_{n}[w]^2}{\mu_{n}(t)[w]^2}\\
&\quad - \crbk{\frac{\alpha_n^*(t)^3}{\var_{\mu^*_n(t)}(u)}}\crbk{\frac{m_n[w]}{\mu_n(t)[w]}+ \frac{m_n[uw]}{\alpha_n^*(t)\mu_n(t)[w]}- \frac{m_n[w]\mu_n(t)[uw]}{\alpha_n^*(t)\mu_n(t)[w]^2}}^2.
\end{aligned}
\end{equation}
\end{itemize}  

\end{lemma}
\par The proof of this result is deferred to Appendix \ref{a_sec:proof:lemma_kl}. 
\par So, on $\Omega_{n,p}(\mu)$, $t\ra g_n(t,\alpha_n^*(t))$ is $C^2(0,1)\cap C[0,1]$. By the (second order) mean value theorem, there exists random variable $\tau\in[0,1]$ s.t. 
\begin{align*}
E_1
&= E\crbk{- d_tg_n(t,\alpha^*_n(t)) \bigg|_{t=1} + \frac{1}{2}d_td_tg_n(t,\alpha^*_n(t)) \bigg|_{t=\tau}}\1_{\Omega_{n,p}(\mu)} \\
&= E\crbk{ \alpha^*\frac{m_n[w]}{\mu[w]} + \frac{1}{2}d_td_tg_n(t,\alpha^*_n(t))  \bigg|_{t=\tau}}\1_{\Omega_{n,p}(\mu)}\\
&= \alpha^*\frac{Em_n[w]}{\mu[w]} -E \alpha^*\frac{m_n[w]}{\mu[w]}\1_{\Omega_{n,p}(\mu)^c} + E\sqbk{\frac{1}{2}d_td_tg_n(t,\alpha^*_n(t))  \bigg|_{t=\tau}\1_{\Omega_{n,p}(\mu)}}\\
&= 0 - E_{1,1}+E_{1,2}
\end{align*}
where $Em_n[u] = 0$ for any function $u$. Recall Lemma \ref{lemma:alpha_ratio_bound}. Since naturally $\mu\gg\mu,\mu_n$, 
\begin{align*}
|E_{1,1}| &\leq 3\spnorm{u}E\norm{\frac{dm_n}{d\mu}}_{L^\infty(\mu)} \1_{\Omega_{n,p}(\mu)^c}\\
&\leq 3\frac{\spnorm{u}}{\mu_\wedge } P(\Omega_{n,p}(\mu)^c)\\
&\leq 3\frac{\spnorm{u}}{\mu_\wedge p^2n} \log(e|Y|),
\end{align*}
where we use Lemma \ref{lemma:prob_Omega_n,p_mu^c_bound} for the last inequality. 
\par On $\Omega_{n,p}(\mu)$, by Lemma \ref{lemma:dual_func_derivative}, for all $t\in[0,1]$ either $\alpha_n^*(t) = 0$ or $\alpha_n^*(t) > 0$. In the first case, we have trivially $E_{1,2} = 0$. In the second case, 
\begin{align*}
    -d_td_tg_n(t,\alpha^*_n(t)) &= -d_t\del_tg_n(t,\alpha^*_n(t))\\
    &= \alpha_n^*(t)\frac{m_{n}[w]^2}{\mu_{n}(t)[w]^2}  \\
    &\quad +\crbk{\frac{\alpha_n^*(t)^3}{\var_{\mu^*_n(t)}(u)}}\crbk{\frac{m_n[w]}{\mu_n(t)[w]}  + \frac{m_n[uw]}{\alpha_n^*(t)\mu_n(t)[w]}- \frac{\mu_n(t)[uw] m_n[w]}{\alpha_n^*(t)\mu_n(t)[w]^2}}^2.
\end{align*}

% \par We analyze this expression. Notice that on $\Omega_{n,p}(\mu) $, by Lemma \ref{lemma:alpha_ratio_bound}
% \begin{align*}
% |d_td_tg_n(t,\alpha^*_n(t))|  &= \alpha_n^*(t)\frac{m_{n}[w]^2}{\mu_{n}(t)[w]^2}  + \crbk{\frac{\alpha_n^*(t)}{\var_{\mu^*_n(t)}(u/\alpha_n^*(t))}}\crbk{\frac{m_n[w]}{\mu_n(t)[w]}+ \cov_{\mu^*_n(t)}\crbk{\frac{u}{\alpha_n^*(t)},\frac{dm_n}{d\mu_n(t)}}}^2\\
% &\leq 3\inf_{\kappa\in\R}\|u-\kappa\|_{L^\infty(\mu)} \norm{\frac{dm_n}{d\mu_n(t)}}_{L^\infty(\mu)}^2  + \frac{6\inf_{\kappa\in\R}\|u-\kappa\|_{L^\infty(\mu)}}{\var_{\mu^*_n(t)}(u/\alpha_n^*(t))}{ \norm{\frac{dm_n}{d\mu_n(t)}}_{L^\infty(\mu)}^2+ 2\alpha_n^*(t)\var_{\mu^*_n(t)}\crbk{\frac{dm_n}{d\mu_n(t)}}}\\
% &\leq \crbk{3+\frac{2}{\delta}}\spnorm{u} \norm{\frac{dm_n}{d\mu_n(t)}}_{L^\infty(\mu)}^2  + \frac{6\spnorm{u}}{\var_{\mu^*_n(t)}(u/\alpha_n^*(t))}\norm{\frac{dm_n}{d\mu_n(t)}}_{L^\infty(\mu)}^2\\
% &\leq 70\frac{\spnorm{u}\log(32/(3\mu_\wedge ))}{\delta^2\mu_\wedge }\norm{\frac{dm_n}{d\mu_n(t)}}_{L^\infty(\mu)}^2
% \end{align*}

Next, we prove a finer characteristic when $\delta$ goes to 0. We need the following Lemma: 
\begin{lemma}\label{lemma:bias_alpha^3_term}
On $\Omega_{n,p}(\mu)$ with $p < \mu_\wedge $
\[
\sup_{\alpha\geq 0}\alpha^3
    \crbk{\frac{m_n[w]}{\mu_n(t)[w]}+ \frac{m_n[uw]}{\alpha\mu_n(t)[w]}- \frac{m_n[w]\mu_n(t)[uw]}{\alpha\mu_n(t)[w]^2} }^2\leq 136\inf_{\kappa\in\R} \|u-\kappa\|_{L^\infty(\mu)}^3 \norm{\frac{dm_n}{d\mu_n(t)}}_{L^\infty(\mu)}^2.
\]
\end{lemma}
Applying Lemma \ref{lemma:alpha_ratio_bound} and \ref{lemma:bias_alpha^3_term}, we have that on $\Omega_{n,p}(\mu)$
\begin{align*}
&|d_td_tg_n(t,\alpha^*_n(t))|\1_{\Omega_{n,p}(\mu)} \\
&\leq 3\inf_{\kappa\in\R}\|u-\kappa\|_{L^\infty(\mu)} \norm{\frac{dm_n}{d\mu_n(t)}}_{L^\infty(\mu)}^2 + 136\frac{\inf_{\kappa\in\R} \|u-\kappa\|_{L^\infty(\mu)}^3}{\var_{\mu^*_n(t)}(u)} \norm{\frac{dm_n}{d\mu_n(t)}}_{L^\infty(\mu)}^2\\
&\leq 3\spnorm{u}\norm{\frac{dm_n}{d\mu_n(t)}}_{L^\infty(\mu)}^2 + 136\spnorm{u}\frac{ \|u-\mu_n^*[u]\|_{L^\infty(\mu_n^*)}^2}{\|u-\mu_n^*(t)[u]\|_{L^2(\mu_n^*)}^2} \norm{\frac{dm_n}{d\mu_n(t)}}_{L^\infty(\mu)}^2
\end{align*}
To bound the second ratio in the last inequality, we introduce the following lemma, whose proof is deferred to Appendix \ref{a_sec:proof:lemma_kl} as well. 
\begin{lemma}\label{lemma:worst_case_meas_bound}
Suppose $\delta\leq\frac{1}{24}\mu_\wedge $ and $p\leq \frac{1}{4}\mu_\wedge $. When the optimal Lagrange multiplier $\alpha^*>0$, worst-case measures $\mu_n^*(t) = \mu_n(t)[w\cd]/\mu_n(t)[w]$ satisfies $\mu_n^*(t)(y)\geq \frac{1}{2}\mu_\wedge $ on $\Omega_{n,p}(\mu)$.
\end{lemma}
For $\delta\leq \frac{1}{24}\fr{p}_{\wedge}$, by Lemma \ref{lemma:worst_case_meas_bound}, for some $y'$ s.t. $\mu_n^*(t)(y') > 0$, 
\begin{align*}
    \frac{ \|u-\mu_n^*[u]\|_{L^\infty(\mu_n^*)}^2}{\|u-\mu_n^*(t)[u]\|_{L^2(\mu_n^*)}^2}
    &= \frac{ |u(y')-\mu_n^*[u]|^2}{\mu_n^*(t)(y')|u(y')-\mu_n^*[u]|^2 + \sum_{y\neq y'}\mu_n^*(t)(y)|u(y)-\mu_n^*[u]|^2}\\
    &\leq \frac{ |u(y')-\mu_n^*[u]|^2}{\mu_n^*(t)(y')|u(y')-\mu_n^*[u]|^2 }\\
    &\leq \frac{ 2}{\mu_\wedge  }
\end{align*}
As in the proof of Propositon \ref{prop:variance_empirical_Bellman}, under the choice $p\leq\frac{1}{4}\mu_\wedge $, Lemma \ref{lemma:E_RND_bounds} applies. Therefore,
\begin{align*}
|E_{1,2}|&\leq 275\frac{\spnorm{u}}{\mu_\wedge p^2 n}\log(e|Y|)
\end{align*}
For $E_2$ in \eqref{eqn:bias_two_terms_kl}, we use Lemma \ref{lemma:sup_f_diff_bound} and previous bound on $P\crbk{\Omega_{n,p}(\mu)^c}$
\begin{align*}
    |E_2|&\leq E\abs{ f(\mu_n,u,\alpha_n^*(0))-f(\mu,u,\alpha^*)}\1_{\Omega_{n,p}(\mu)^c}\\
    &\leq 2\spnorm{u}P\crbk{\Omega_{n,p}(\mu)^c}\\
    &\leq 2\frac{\spnorm{u}}{p^2n}\log(e|Y|)\\
    &\leq \frac{\spnorm{u}}{\mu_\wedge  p^2n}\log(e|Y|)
\end{align*}
Therefore, going back to \eqref{eqn:bias_two_terms_kl}, we have
\[
\abs{\bias\crbk{\sup_{\alpha\geq 0}f(\mu_n,u,\alpha)}}\leq 280\frac{\spnorm{u}}{\mu_\wedge p^2n}\log(e|Y|).
\]
Apply this to the empirical Bellman operator with $p = \frac{1}{4}\fr{p}_{\wedge}\leq \frac{1}{4}\min\set{p_{s,a,\wedge},\mu_{s,a,\wedge}}$ and Assumption \ref{assump:delta_small} holds. So, $\delta < \frac{1}{24}\fr{p}_\wedge$ implies that $\delta < \frac{1}{24}\min\set{p_{s,a,\wedge},\mu_{s,a,\wedge}}$. Therefore, we have 
\begin{align*}
|\bias(\widehat{\mathbf{T}}(q)(s,a))|&= \abs{\bias\crbk{\sup_{\beta\geq 0}f(\nu_{s,a,n},id,\beta)} + \gamma\bias\crbk{\sup_{\alpha\geq 0}f(p_{s,a,n},v(q),\alpha)}}\\
&\leq  4480\frac{\|id\|_{\nu_{s,a},\mathrm{span} }+ \gamma\spnorm{v(q)}}{\fr{p}_{\wedge}^3 n}\log(e|\bd{S}|\vee |\bd{R}|)\\
&\leq  4480\frac{\rmax + \gamma\spnorm{q}}{\fr{p}_{\wedge}^3 n}\log(e|\bd{S}|\vee |\bd{R}|).
\end{align*}
\end{proof}

\subsection{Proof of Proposition \protect\ref{prop:high_prob_bound_empirical_Bellman}}

\begin{proof}
We recall the bound \eqref{eqn:Bellman_difference_dual_bound} and the subsequent result
\begin{align*}
\sup_{\alpha\geq 0}\abs{f(\mu_n,u,\alpha) -f(\mu,u,\alpha)}
&\leq  3\spnorm{u}\Linfnorm{\frac{dm_n}{d\mu_n(\tau)}}{\mu}.
\end{align*}
Again, we consider $p\leq \frac{1}{2}\mu_{\wedge}$. Also recall the definition \eqref{eqn:Omega_n,p_mu} of $\Omega_{n,p}(\mu)$. By Lemma \ref{lemma:equiv_of_meas_cond}, on $\Omega_{n,p}(\mu)$ for $y$ s.t. $\mu(y) > 0$, $\mu_n(y) \geq \mu_\wedge -p\geq \frac{1}{2}\mu_\wedge\geq p$. Since $\mu(y) > 0$ implies that $\mu(y) \geq \mu_\wedge$, we have that $\mu_n(t)(y) \geq p$ for any $t\in[0,1]$. Therefore, we have
\begin{align*}
&P\crbk{\sup_{\alpha\geq 0} \abs{f(\mu_n,u,\alpha) -f(\mu,u,\alpha)} > t}\\
&\leq P(\Omega_{n,p}(\mu)^c) +P\crbk{3\spnorm{u}\Linfnorm{\frac{dm_n}{d\mu_n(\tau)}}{\mu}> t,\Omega_{n,p}(\mu)}\\
&\leq P\crbk{\sup_{y}|\mu_{n}(y)-\mu(y)| > p}+P\crbk{\frac{3\spnorm{u}}{p}\sup_{y} |m_{n}(y)| > t}\\
&\leq 2\sum_{y}\crbk{\exp(-2p^2n)+\exp\crbk{-\frac{2p^2t^2n }{9\spnorm{u}^2}}}\\
&\leq 2|Y|\crbk{\exp(-2p^2n)+\exp\crbk{-\frac{2p^2t^2n }{9\spnorm{u}^2}}}
\end{align*}
where we used Hoeffding's inequality and union bound. 
\par Therefore, going back to the DR Bellman operator setting, we choose $p = \frac{1}{4}\fr{p}_{\wedge}$ and by union bound
\begin{align*}
&P(\|\widehat{\mathbf{T}}(q)-\cT(q)\|_\infty > t)\\
&\leq P\crbk{\sup_{s,a}\sup_{\beta\geq 0}|f(\nu_{s,a,n},id,\beta)-f(\nu_{s,a},id,\beta)|>\frac{t}{2}}\\
&\qquad +P\crbk{\sup_{s,a}\sup_{\alpha\geq 0}|f(p_{s,a,n},v(q),\beta)-f(p_{s,a},v(q),\beta)|>\frac{t}{2}}\\ 
&\leq 2(|\bd{S}|^2|\bd{A}|+|\bd{S}||\bd{A}||\bd{R}|)\exp\crbk{-\frac{\fr{p}_{\wedge}^2n}{8}} + 2|\bd{S}||\bd{A}||\bd{R}|\exp\crbk{-\frac{\fr{p}_{\wedge}^2t^2n}{288\rmax^2}}\\
& \qquad  + 2|\bd{S}|^2|\bd{A}|\exp\crbk{-\frac{\fr{p}_{\wedge}^2t^2n}{288\gamma^2\spnorm{q}^2}}. 
\end{align*}
We set each of the three terms to be less than $\eta/3$ and find that it suffices to have
\[
n\geq  \frac{8}{\fr{p}_{\wedge}^2}\log\crbk{12|\bd{S}||\bd{A}|(|\bd{S}|\vee |\bd{R}|)/\eta}
\]
and 
\[
t\geq \frac{17(\rmax+\gamma\spnorm{q})}{\fr{p}_{\wedge}\sqrt{n}}\sqrt{\log\crbk{6|\bd{S}||\bd{A}|(|\bd{S}|\vee |\bd{R}|)/\eta}}.
\]
This implies the statement of the proposition. 
\end{proof}

\subsection{Proof of Proposition \protect\ref{prop:recentered_op_bias_and_var_KL}}\label{a_sec:proof:prop:recentered_op_bias_and_var_KL}

\begin{proof}
We define \begin{equation}\label{eqn:def_V_double_diff_kl}
    V:=\cH(\hat q) -\widehat{\mathbf{H}}(\hat q) = (\cT(\hat q) - \cT(q_*)) - (\widehat{\mathbf{T}}(\hat q) - \widehat{\mathbf{T}}(q_*)). 
\end{equation}
Recall the dual formulation
\[
\cT(q)(s,a) = \sup_{\beta\geq 0}f(\nu_{s,a},id,\beta)+\gamma \sup_{\alpha\geq 0}f(p_{s,a},v(q),\alpha).
\]
The first term is not dependent on $q$, hence canceled in $V$. We have that
\begin{align*}
    |V(s,a)| = \gamma \abs{\sup_{\alpha\geq 0} f(p_{s,a},v(\hat q),\alpha) - \sup_{\alpha\geq 0} f(p_{s,a},v(q^*),\alpha) -  \sup_{\alpha\geq 0} f(p_{s,a,n},v(\hat q),\alpha)+ \sup_{\alpha\geq 0} f(p_{s,a,n},v(q^*),\alpha)}
\end{align*}
Note that if $v(\hat q)$ and $v( q^*)$ are both $\mu$ essentially constant, then $V = 0$, and the claim of Proposition \ref{prop:recentered_op_bias_and_var_KL} holds trivially. Therefore, moving forward, we consider the case at least one of $v(\hat q)$ and $v( q^*)$ is not $\mu$ essentially constant.
\par To analyze $V$ while keeping the consistency of our notations, we define $v_t = t v(\hat q) +(1-t)v(q_*)$, $\mu =p_{s,a}$, $\mu_n =p_{s,a,n}$, $m = \mu-\mu_{n}$, and $\mu(t) = t\mu - (1-t)\mu_n$. Because Assumption \ref{assump:delta_small} is imposed, we have that $\delta < \frac{1}{24}\mu_\wedge$. 

\par We consider the parametric function for $s,t\in[0,1]$ 
\begin{equation}\label{eqn:def_hst_kl}
    h(s,t):= \sup_{\alpha\geq 0} f(\mu(t),v_s,\alpha) =  f(\mu(t),v_s,\alpha^*_{s,t}).
\end{equation}
To motivates our  analysis, we assume that $h(s,\cd)$ is $C^1(0,1)\cap C[0,1]$ and $\del_{t}h(\cd,t)$ is $C^1(0,1)\cap C[0,1]$ as well. Then the fundamental theorem of calculus implies that
\begin{equation}\label{eqn:V_integral_rep}
\begin{aligned}
    |V(s,a)| 
    &= \gamma \abs{h(1,0) - h(0,0) -h(1,1) + h(0,1)}\\
    &= \gamma \abs{-\int_0^1\del_t h(1,t)dt + \int_0^1\del_t h(0, t)dt}\\
    &= \gamma \abs{\int_0^1\int_0^1\del_s\del_t h(s,t)ds dt }\\
    &\leq \gamma\int_0^1\int_0^1 \abs{\del_s\del_t h(s,t)}ds dt 
\end{aligned}
\end{equation}
where $\del_s\del_t h(s,t)$ is easier to analyze. We proceed to show that \eqref{eqn:V_integral_rep} is valid (with some minor modification) on $\Omega_{n,p}(\mu)$. 

\par As in the proof of Proposition \ref{prop:bias_empirical_Bellman}, Lemma \ref{lemma:dual_func_derivative} applies when we consider $p\leq \frac{1}{4}\mu_\wedge$. So, for $p\leq \frac{1}{4}\mu_\wedge$, on $\Omega_{n,p}(\mu)$, $h(s,\cd)$ is $C^2(0,1)\cap C[0,1]$ with derivative
\[
\del_th(s,t) = d_t\sup_{\alpha\geq 0}f(\mu(t),v_s,\alpha) = -\alpha_{s,t}^*\frac{m[w_s]}{\mu(t)[w_s]}.
\]
Here, by Lemma \ref{lemma:dual_func_derivative}, $\alpha_{s,t}^*$ is the unique optimal Lagrange multiplier, and $w_s = e^{-v_s/\alpha_{s,t}^*}$. 
\par Next, we show that for every fixed $t$, there is a function $ D_s\del_th$ s.t. 
\begin{equation}\label{eqn:del_t_hst_weak_derivative}
\int_0^1 D_s\del_th(s,t) ds = \del_th(1,t) - \del_t h(0,t). 
\end{equation}
\par We note that by Lemma \ref{lemma:dual_func_derivative}, $\alpha_{s,t}^* = 0$ if and only if $v_s$ is essentially constant. This can only happen at one particular $s = s^*$. Otherwise, if there are some $0\leq s_1<s_2\leq 1$, 
$s_1v(\hat q) + (1-s_1)v(q^*) = c_1e$ and $s_2v(\hat q) + (1-s_2)v(q^*) = c_2e$ w.p.1 under $\mu$, where $e$ is the vector of all ones, then for all $a,b \geq 0$,
$$\frac{as_1+bs_2}{a+b}v(\hat q) +\crbk{1- \frac{as_1+bs_2}{a+b}}v(q^*) = (ac_1 +bc_2)e.$$
This would imply that $v(\hat q)$ and $v( q^*)$ are both essentially constant. 

\par We consider two cases: 
\par \textbf{Case 1: }$v_s$ is never essentially constant for all $s\in[0,1]$. 
\par In this case, $\alpha_{s,t}^* > 0$ for all $s\in[0,1]$. Note that $s\ra e^{-v_s/\alpha}$ is clearly $C^\infty$ for $\alpha>0$. So, on $\Omega_{n,p}(\mu)$ if $\alpha_{s,t}^*$ is $C^1(0,1)$ in $s$, then $s\ra \del_th(s,t)$ is $C^1(0,1)\cap C[0,1]$. 

\par We show differentiability of $s\ra\alpha_{s,t}^*$ by invoking the implicit function theorem as in the proof of Lemma \ref{lemma:dual_func_derivative}. When $\alpha_{s,t}^* > 0$, as shown in Lemma \ref{lemma:dual_func_derivative}, it is the unique solution to the optimality condition
\begin{equation}\label{eqn:alpha*_opt_eqn_kl}
\alpha_{s,t}^*(-\log\mu(t)[w_s]-\delta) -\frac{\mu(t)[v_sw_s]}{\mu(t)[w_s]}=: F(s,\alpha_{s,t}^*) = 0. 
\end{equation}
\par Define the optimal measure 
$$\mu^*(s,t)[\cd] = \frac{\mu(t)[w_s\cd]}{\mu(t)[w_s]}.$$ Since for all fixed $t$, $\alpha_{s,t}^* > 0$ on $(0,1)$ and $F$ is infinite smooth. The implicit function theorem then implies that $\alpha_{s,t}^*$ is $C^1(0,1)\cap C[0,1]$ and $s\ra \del_th(s,t)$ is $C^1(0,1)\cap C[0,1]$. 
\par We compute the derivative $\del_s\del_t h$ in this case. Let $\Delta_v= v(\hat{q})-v(q^*)$. Rewrite the optimality equation as
\[
\alpha_{s,t}^*(-\log\mu(t)[w_s]-\delta) = \frac{\mu(t)[v_sw_s]}{\mu(t)[w_s]}.
\]
Differentiate w.r.t. $s$ on both side
\begin{align*}
\lhs &= \del_s\alpha_{s,t}^*(-\log\mu(t)[w_s]-\delta) + \frac{\mu(t)[\Delta_v w_s]}{\mu(t)[w_s]} - \del_s\alpha_{s,t}^*\frac{\mu(t)[v_sw_s]}{\alpha_{s,t}^*\mu(t)[w_s]}\\
&= \del_s\alpha_{s,t}^*(-\log\mu(t)[w_s]-\delta) + \mu^*(s,t)[\Delta_v ] - \del_s\alpha_{s,t}^*\mu^*(s,t)[v_s/\alpha_{s,t}^*]
\end{align*}
\begin{align*}
    \rhs &= \frac{\mu(t)[\Delta_v w_s]\mu(t)[v_sw_s]}{\alpha_{s,t}^*\mu(t)[w_s]^2} + \frac{\mu(t)[\Delta_v w_s]}{\mu(t)[w_s]}- \frac{\mu(t)[\Delta_v v_sw_s]}{\alpha_{s,t}^*\mu(t)[w_s]} \\
    &\quad +  \del_s\alpha_{s,t}^*\crbk{-\frac{\mu(t)[ v_sw_s]^2}{(\alpha_{s,t}^*)^2\mu(t)[w_s]^2} + \frac{\mu(t)[ v_s^2w_s]}{(\alpha_{s,t}^*)^2\mu(t)[w_s]}} \\
    &= -\cov_{\mu^*(s,t)}\crbk{\Delta_v,v_s/\alpha_{s,t}^*} + \mu^*(s,t)[\Delta_v] +  \del_s\alpha_{s,t}^*\var_{\mu^*(s,t)}(v_s/\alpha_{s,t}^*)
\end{align*}
From the optimality equation and the LHS and RHS derivatives, we have
\begin{equation}\label{eqn:del_s_alpha^*_st}
\begin{aligned}
\del_s\alpha_{s,t}^*\crbk{\log\mu(t)[w_s]+\delta+ \mu^*(s,t)[v_s/\alpha_{s,t}^*]+ \var_{\mu^*(s,t)}(v_s/\alpha_{s,t}^*)} &= \cov_{\mu^*(s,t)}(\Delta_v,v_s/\alpha_{s,t}^*)\\
\del_s\alpha_{s,t}^* \var_{\mu^*(s,t)}(v_s/\alpha_{s,t}^*)&=\cov_{\mu^*(s,t)}(\Delta_v,v_s/\alpha_{s,t}^*)\\
\del_s\alpha_{s,t}^* &=\frac{\cov_{\mu^*(s,t)}(\Delta_v,v_s/\alpha_{s,t}^*)}{\var_{\mu^*(s,t)}(v_s/\alpha_{s,t}^*)}.
\end{aligned}
\end{equation}

Therefore, when $\alpha_{s,t}^* > 0$, 
\begin{equation}\label{eqn:del_s_del_t_hst_decomp}
\begin{aligned}
\del_s\del_t h(s,t) 
&= \del_s\frac{-\alpha^*_{s,t} m[w_s]}{\mu(t)[w_s]}\\
&= -\frac{m[w_s]\mu(t)[\Delta_v w_s]}{\mu(t)[w_s]^2} + \frac{m[\Delta_v w_s]}{\mu(t)[w_s]}-\del_s\alpha_{s,t}^*\frac{ m[w_s]}{\mu(t)[w_s]} \\
&\quad + \del_s\alpha_{s,t}^*\crbk{- \frac{m[v_sw_s]}{{\alpha_{s,t}^*} \mu(t)[w_s]}+\frac{m[w_s]\mu(t)[v_sw_s]}{{\alpha_{s,t}^*} \mu(t)[w_s]^2} }
\\
&=: D_1+D_2+D_3+D_4.
\end{aligned}
\end{equation}

\par \textbf{Case 2: }There is a unique $s^*\in[0,1]$ s.t. $v_s$ is essentially constant. 
\par In this case, the previous argument implies that  $s\ra \del_th(s,t)$ is $C^1(0,s^*)$, $C^1(s^*,1)$, and continuous at $0,1$. The derivative is also given by \eqref{eqn:del_s_del_t_hst_decomp}. 
\par We need to show the existence of $D_s\del_th$ that satisfy \eqref{eqn:del_t_hst_weak_derivative}. Observe that if $s\ra \del_th(s,t)$ is continuous at $s^*$, then applying the fundamental theorem of calculus on the interval $[0,s^*]$ and $[s^*,1]$ separately, we will have that
\[
\del_th(1,t) - \del_t h(0,t) = \int_0^{s^*} \del_s\del_th(s,t) ds+ \int_{s^*}^1 \del_s\del_th(s,t) ds. 
\]
Hence, taking $D_s\del_t h(s,t) = \del_s\del_t h(s,t)$ for every $s\neq s^*$ and $D_s\del_t h(s^*,t) = 0$ will suffice to produce \eqref{eqn:del_t_hst_weak_derivative}. 
\par It is left to check the continuity at $s^*$. As analyzed in \eqref{eqn:continuity_alpha_da_0}, $$\lim_{\alpha\da_0}\alpha_{s,t}^*\frac{m[w_s]}{\mu(t)[w_s]} = 0. $$ So we can conclude the continuity of $s\ra\del_th(s,t)$ at $s^*$, if we can show that when $v_s\ra c e$ for some constant $c$, then $\alpha_{s,t}^*\da 0$. 
\par To prove this, we assume to the contrary that there is a subsequential limit $\alpha_{s_n,t}^*\ra \beta > 0$ for some sequence $s_n\ra s^*$. But since $F$ defined \eqref{eqn:alpha*_opt_eqn_kl} in $s$ and $\alpha$ when $\alpha > 0$, we must have that $$0 = \lim_{n\ra\infty}F(s_n,\alpha_{s_n,t}^*) = \beta(-\log\mu(t)[e^{-ce/\beta}]-\delta) -c = -\delta\beta$$
raising a contradiction. This implies that $s\ra\del_th(s,t)$ is continuous at $s^*$, and hence \eqref{eqn:del_t_hst_weak_derivative} holds with  $D_s\del_t h(s,t) = \del_s\del_t h(s,t)$ for every $s\neq s^*$ and $D_s\del_t h(s^*,t) = 0$. 
\par Therefore, we have shown that the bound \eqref{eqn:V_integral_rep} is valid on $\Omega_{n,p}(\mu)$ with $p\leq \frac{1}{4}\mu_\wedge$. 

\par Now we bound $\del_s\del_t h(s,t)$ using the decomposition \eqref{eqn:del_s_del_t_hst_decomp}. $|D_1|$ and $|D_2|$ can be bounded using the change of measure techniques: on $\Omega_{n,p}(\mu)$
\begin{align*}
|D_1| &\leq \frac{\mu(t)[\frac{dm}{d\mu(t)}w_s]\mu(t)[|\Delta_v| w_s]}{\mu(t)[w_s]^2} \\
&\leq \|\Delta_v\|_{\infty}\norm{\frac{dm}{d\mu(t)}}_{L^\infty(\mu)}
\end{align*}
and
\begin{align*}
|D_2| &\leq \frac{\mu(t)[\frac{dm}{d\mu(t)}|\Delta_v| w_s]}{\mu(t)[w_s]}. \\
&\leq \|\Delta_v\|_{\infty}\norm{\frac{dm}{d\mu(t)}}_{L^\infty(\mu)}
\end{align*}
To bound $|D_3|$ and $|D_4|$, recall $\del_s\alpha_{s,t}^*$ from \eqref{eqn:del_s_alpha^*_st}. 
\begin{align*}
|D_3| 
&= \abs{\del_s\alpha_{s,t}^*\frac{ m[w_s]}{\mu(t)[w_s]}}\\
&\leq \frac{\abs{\cov_{\mu^*(s,t)}(\Delta_v,v_s/\alpha_{s,t}^*)}}{\var_{\mu^*(s,t)}(v_s/\alpha_{s,t}^*)}\frac{ m[w_s]}{\mu(t)[w_s]}\\
&\leq \frac{\abs{\cov_{\mu^*(s,t)}(\Delta_v,v_s)}}{\var_{\mu^*(s,t)}(v_s)}\alpha_{s,t}^*\frac{ m[w_s]}{\mu(t)[w_s]}\\
&\stackrel{(i)}{\leq} 3\frac{\abs{\cov_{\mu^*(s,t)}(\Delta_v,v_s)}}{\var_{\mu^*(s,t)}(v_s)} \inf_{\kappa\in\R}\|v_s-\kappa\|_{L^\infty(\mu)} \norm{\frac{dm}{d\mu(t)}}_{L^\infty(\mu)}\\
&\leq 3\sqrt{\var_{\mu^*(s,t)}(\Delta_v)}\frac{\|v_s-\mu^*(s,t)[v_s]\|_{L^\infty(\mu)}}{\sqrt{\var_{\mu^*(s,t)}(v_s)}} \norm{\frac{dm}{d\mu(t)}}_{L^\infty(\mu)}\\
&\leq 3\|\Delta_v\|_\infty\frac{\|v_s-\mu^*(s,t)[v_s]\|_{L^\infty(\mu^*(s,t))}}{\|v_s-\mu^*(s,t)[v_s]\|_{L^2(\mu^*(s,t))}} \norm{\frac{dm}{d\mu(t)}}_{L^\infty(\mu)}
\end{align*}
where $(i)$ used Lemma \ref{lemma:alpha_ratio_bound} with $j = 1$. 
Since $\alpha_{s,t}^*>0$ for $s\in(0,1)$ and $v_s$ is not essentially constant, by Lemma \ref{lemma:worst_case_meas_bound}, for some $s'\in \bd{S}$ s.t. $v_s(s')-\mu^*(s,t)[v_s]\neq 0$
\begin{align*}
    &\frac{\|v_s-\mu^*(s,t)[v_s]\|^2_{L^\infty(\mu^*(s,t))}}{\|v_s-\mu^*(s,t)[v_s]\|^2_{L^2(\mu^*(s,t))}}\\
    &= \frac{ |v_s(s')-\mu^*(s,t)[v_s]|^2}{\mu^*(s,t)(s')|v_s(s')-\mu^*(s,t)[v_s]|^2 + \sum_{s''\neq s'}\mu^*(s,t)(s'')|v_s(s'')-\mu^*(s,t)[v_s]|^2}\\
    &\leq \frac{ |v_s(s')-\mu^*(s,t)[v_s]|^2}{\mu^*(s,t)(s')|v_s(s')-\mu^*(s,t)[v_s]|^2 }\\
    &\leq \frac{ 2}{\mu_\wedge  }
\end{align*}
So, 
\[
|D_3|\leq \frac{5\|\Delta_v\|_\infty}{\sqrt{\mu_\wedge }}  \norm{\frac{dm}{d\mu(t)}}_{L^\infty(\mu)}.
\]
From \eqref{eqn:del_s_alpha^*_st}, by the property of variance,
\begin{align*}
\abs{\del_s\alpha_{s,t}^*}&\leq \sqrt{\frac{\var_{\mu^*(s,t)}(\Delta_v)} {\var_{\mu^*(s,t)}(v_s/\alpha_{s,t}^*)}}\leq \frac{\|\Delta_v\|_{\infty}} {\sqrt{\var_{\mu^*(s,t)}(v_s/\alpha_{s,t}^*)}}. 
\end{align*}
Hence applying similar analysis, 
\begin{align*}
|D_4|&=\abs{\del_s\alpha_{s,t}^*\crbk{- \frac{m[v_sw_s]}{{\alpha_{s,t}^*} \mu(t)[w_s]}+\frac{m[w_s]\mu(t)[v_sw_s]}{{\alpha_{s,t}^*} \mu(t)[w_s]^2} }}\\
&= |\del_s\alpha_{s,t}^*|\abs{-\mu^*(s,t)\sqbk{\frac{dm}{d\mu(t)}v_s/\alpha_{s,t}^*}+\mu^*(s,t)\sqbk{\frac{dm}{d\mu(t)}}\mu^*(s,t)[v_s/\alpha_{s,t}^*]}\\
&\leq \frac{\|\Delta_v\|_{\infty}} {\sqrt{\var_{\mu^*(s,t)}(v_s/\alpha_{s,t}^*)}}\cov_{\mu^*(s,t)}\crbk{\frac{dm}{d\mu(t)},v_s/\alpha_{s,t}^*}\\
&\leq \|\Delta_v\|_{\infty}\sqrt{\var_{\mu^*(s,t)}\crbk{\frac{dm}{d\mu(t)}}}\\
&\leq \|\Delta_v\|_{\infty}\norm{\frac{dm}{d\mu(t)}}_{L^\infty(\mu)}.
\end{align*}

\par By \eqref{eqn:V_integral_rep}, we have
\begin{align*}
E|V| 
&\leq  E|V|\1_{\Omega_{n,p}(\mu)^c} + \gamma \int_0^1\int_0^1 E\abs{\del_s\del_th(s,t) }\1_{\Omega_{n,p}(\mu)}ds dt\\
&\leq E|V|\1_{\Omega_{n,p}(\mu)^c} + \gamma\sup_{s,t\in(0,1)} E(|D_1|+|D_2|+|D_3|+|D_4|)\1_{\Omega_{n,p}(\mu)}.
\end{align*}
Recall the definition \eqref{eqn:def_V_double_diff_kl} of $V$, 
\begin{align*}
\|V\|_\infty &= \|(\cT(\hat q) - \cT(q_*)) - (\widehat{\mathbf{T}}(\hat q) - \widehat{\mathbf{T}}(q_*))\|_\infty\\
    &\leq\|\cT(\hat q) - \cT(q_*) \|_\infty + \| \widehat{\mathbf{T}}(\hat q) - \widehat{\mathbf{T}}(q_*)\|_\infty\\
    &\leq 2 \gamma \|\hat q - q_*\|_\infty.
\end{align*}
So, by Lemma \ref{lemma:prob_Omega_n,p_mu^c_bound},
\begin{equation}\label{eqn:EV_on_Omega_c_bd_kl}
\begin{aligned}
E|V|\1_{\Omega_{n,p}(\mu)^c}
&\leq 2 \gamma\|\hat q - q_*\|_\infty P(\Omega_{n,p}(\mu)^c)\\
&\leq \frac{ 2 \gamma\|\hat q - q_*\|_\infty}{p^2n}\log(e|\bd{S}|)
\end{aligned}
\end{equation}
By the previous bounds on $|D_i|$, $i = 1,2,3,4$, 
\begin{equation}\label{eqn:EV_separate_and_bd_kl}
\begin{aligned}
E|V|&\leq \frac{2\gamma\|\hat q - q_*\|_\infty}{p^2n}\log(e|\bd{S}|)+ \gamma\sup_{s,t\in(0,1)} \frac{8}{\sqrt{ \mu_\wedge }}E\|\Delta_v\|_\infty\norm{\frac{dm}{d\mu(t)}}_{L^\infty(\mu)}\1_{\Omega_{n,p}(\mu)} \\
&\leq  \frac{2^{5}\gamma\|\hat q - q_*\|_\infty}{\mu_\wedge^2n}\log(e|\bd{S}|) + \frac{8\|\Delta_v\|_\infty}{p\sqrt{ \mu_\wedge }\sqrt{n}}\sqrt{\log(e|\bd{S}|)}\\
&\leq \frac{2^{5}\gamma\|\hat q - q_*\|_\infty}{\mu_\wedge^2n}\log(e|\bd{S}|) + \frac{2^{5}\|\hat q - q_*\|_\infty}{ \mu_\wedge^{3/2}\sqrt{n}}\sqrt{\log(e|\bd{S}|)}\\
& \leq \frac{2^{6}\|\hat q - q_*\|_\infty}{ \fr p_\wedge^{3/2}\sqrt{n}}\log(e|\bd{S}|)
\end{aligned}
\end{equation}
where we choose $p = \frac{1}{4}\mu_\wedge  \leq \frac{1}{4}\fr{p}_\wedge$ and the last inequality follows from the assumption in Proposition \ref{prop:recentered_op_bias_and_var_KL} that $n\geq \fr{p}_{\wedge}\inv$. 
\par To bound the variance, note that $\var(\widehat{\mathbf{T}}(\bar x) - \widehat{\mathbf{T}}(x_*)) \leq EV^2$ and
\begin{align*}
 EV^2\1_{\Omega_{n,p}(\mu)}
&\leq \gamma^2\int_{0}^1 \int_{0}^1 (\del_s\del_t h(s,t))^2dsdt
\end{align*}
which follows from applying Jensen's inequality to the $[0,1]^2$ integral. Therefore,
\begin{equation}\label{eqn:varV_separate_and_bd_kl}
\begin{aligned}
&\var(\widehat{\mathbf{T}}(\hat q) - \widehat{\mathbf{T}}(q^*))\\
&\leq 8\gamma^2\|\hat q - q_*\|_\infty^2  P(\Omega_{n,p}(\mu)^c) + \gamma^2 E\int_{0}^1 \int_{0}^1 4(D_1^2+D_2^2+D_3^2+D_4^2)dsdt\1_{\Omega_{n,p}(\mu)}\\
&\leq \frac{2^{7}\gamma^2\|\hat q - q_*\|_\infty^2}{\mu_\wedge^2n}\log(e|\bd{S}|)   + \frac{112}{\mu_\wedge }\|\Delta_v\|_{\infty}^2\sup_{s,t\in(0,1)}E\norm{\frac{dm}{d\mu(t)}}_{L^\infty(\mu)}^2\1_{\Omega_{n,p}(\mu)}\\
&\leq \frac{2^{7}\gamma^2\|\hat q - q_*\|_\infty^2}{\mu_\wedge^2n}\log(e|\bd{S}|)   + \frac{2^{11}\|\hat q - q_*\|_\infty^2}{\mu_\wedge^3n}\log(e|\bd{S}|).\\
&\leq  \frac{2^{12}\|\hat q - q_*\|_\infty^2}{\fr p_\wedge^3n}\log(e|\bd{S}|).
\end{aligned}
\end{equation}
\end{proof}
\subsection{Proof of Proposition \protect\ref{prop:recentered_op_high_prob_KL}}\label{a_sec:proof:prop:recentered_op_high_prob_KL}

\begin{proof}
Recall the notations and definitions in as the proof of Proposition \ref{prop:recentered_op_bias_and_var_KL} in Appendix \ref{a_sec:proof:prop:recentered_op_bias_and_var_KL} and, in particular, the definition \eqref{eqn:def_V_double_diff_kl} and bound \eqref{eqn:V_integral_rep} for $V$. We again choose $p \leq \frac{1}{4}\mu_\wedge = \frac{1}{4}p_{s,a,\wedge}$. As Appendix \ref{a_sec:proof:prop:recentered_op_bias_and_var_KL}, we have that
\begin{align*}
|V(s,a)|&\leq |V| \1_{\Omega_{n,p}(\mu)^c}+\gamma\sup_{s,t\in(0,1)} (|D_1|+|D_2|+|D_3|+|D_4|)\1_{\Omega_{n,p}(\mu)}
\end{align*}
where $\mu = p_{s,a}$. 
\par Since Assumption \ref{assump:delta_small} is assumed, the bounds on $D_1,D_2,D_3,D_4$ are still applicable. Therefore, by Hoeffding's inequality and union bound
\begin{align*}
&P(|V(s,a)| > t)\\
    &\leq P(\Omega_{n,p}(p_{s,a})^c) + P\crbk{\gamma\sup_{s,t\in(0,1)} (|D_1|+|D_2|+|D_3|+|D_4|)>t,\Omega_{n,p}(p_{s,a})}\\
    &\leq P\crbk{\sup_{s'\in \bd{S}}|p_{s,a,n}(s')-p_{s,a}(s')| > p} + P\crbk{\frac{8\|\hat q - q_*\|_\infty}{(p_{s,a,\wedge}-p)\sqrt{p_{s,a,\wedge}}}\sup_{s'\in\bd{S}}|m_n(s')|>t}\\
    &\leq\sum_{s'\in\bd{S}}\crbk{P\crbk{|m_n(s')| > p} + P\crbk{\frac{11\|\hat q - q_*\|_\infty}{p_{s,a,\wedge}^{3/2}}|m_n(s')|>t}}\\
    &\leq 2|\bd{S}|\crbk{\exp\crbk{-2p^2n} + \exp\crbk{-\frac{p_{s,a,\wedge}^{3/2}t^2n}{56\|\hat q - q_*\|_\infty^2}}}
\end{align*}
where $m_n = p_{s,a,n}-p_{s,a}$. 
Then, as $\fr{p}_{\wedge} \leq p_{s,a,\wedge}$ for all $(s,a)\in\bd{S}\times\bd{A}$, by union bound
\[
P(\|V\|_\infty > t)\leq 2|\bd{S}|^2|\bd{A}|\crbk{\exp\crbk{-\frac{\fr{p}_{\wedge}^2n}{8}} + \exp\crbk{-\frac{\fr{p}_{\wedge}^3t^2n}{56\|\hat q - q_*\|_\infty^2}}}.
\]
We first control the first term to be less than $\eta/2$, which is implied by
\[
n\geq \frac{8}{\fr{p}_{\wedge}^2}\log(4|\bd{S}|^2|\bd{A}|/\eta). 
\]
Finally, the second term less than $\eta/2$ is implied by choosing
\[
t^2 = \frac{56\|\hat q - q^*\|}{\fr{p}_{\wedge}^3 n}\log(4|\bd{S}|^2|\bd{A}|/\eta).
\]
This proves the claimed result. 
\end{proof}

\section{Proof of Technical Lemmas: Empirical Measures and Concentrations}\label{a_sec:proof:concentration_of_emp_meas}
The proofs in the rest of this section is based on the following concentration property of maximum subgaussian random variables. 
\subsection{Subgaussian Maximum Inequality}
\begin{lemma}\label{lemma:sub_g_k_max}
Let $\set{Y_i,i=1\ds n}$ be $\sigma^2$-sub-Gaussian with zero means, not necessarily independent, then 
\[
EZ:= E\max_{i=1\ds n}|Y_i|^k\leq 2^k \sigma^k \crbk{k-1+\log n}^{k/2}.
\]
\end{lemma}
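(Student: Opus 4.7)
The plan is to combine the standard sub-Gaussian tail bound with the layer-cake representation of moments and a carefully chosen split point. Since each $Y_i$ is mean-zero and $\sigma^2$-sub-Gaussian, we have the scalar tail bound $P(|Y_i|\ge t)\le 2e^{-t^2/(2\sigma^2)}$ for all $t\ge 0$, and a union bound (which does not require independence) yields
\[
P\bigl(\max_{1\le i\le n}|Y_i|\ge t\bigr)\le 2n\,e^{-t^2/(2\sigma^2)}.
\]
This is the only probabilistic input; everything that follows is deterministic calculus.

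I would then use the layer-cake identity
\[
E\bigl[\max_i|Y_i|^k\bigr]=k\int_0^\infty t^{k-1}\,P\bigl(\max_i|Y_i|\ge t\bigr)\,dt
\]
and split the integral at the threshold $t_0=\sigma\sqrt{2(k-1+\log n)}$, chosen so that the tail bound at $t_0$ equals $2e^{-(k-1)}$. On $[0,t_0]$, the trivial bound $P(\cdot)\le 1$ contributes the main term $t_0^k=2^{k/2}\sigma^k(k-1+\log n)^{k/2}$. On $[t_0,\infty)$, plugging in the exponential tail bound and substituting $u=t^2/(2\sigma^2)$ reduces the integral to $k\sigma^k 2^{k/2}\,n\int_{u_0}^\infty u^{k/2-1}e^{-u}\,du$ with $u_0=k-1+\log n$. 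A single integration by parts (valid since $u_0\ge (k-2)/2$) bounds the upper incomplete Gamma integral by $2u_0^{k/2-1}e^{-u_0}$, and the factor $e^{-u_0}=e^{-(k-1)}/n$ exactly cancels the leading $n$, leaving a tail contribution of order $\sigma^k(k-1+\log n)^{k/2-1}$, strictly dominated by the main term.

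Summing the two pieces gives a bound of the form $2^{k/2}\sigma^k(k-1+\log n)^{k/2}$ times a factor that is $1+O(k/(k-1+\log n))$, which is comfortably absorbed by the claimed $2^k\sigma^k(k-1+\log n)^{k/2}$ since the slack factor $2^{k/2}\ge \sqrt{2}$ for $k\ge 1$. The main (rather mild) obstacle is the bookkeeping of constants across the split point: a sharper statement with constants closer to $2^{k/2}$ would require a more delicate analysis of the upper incomplete Gamma function, but the generous factor $2^k$ in the claim removes that difficulty entirely.
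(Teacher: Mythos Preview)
Your layer-cake route is sound but is \emph{not} what the paper does. The paper works entirely in the moment-generating-function domain: it introduces the transform $\phi_\lambda(z)=\exp(\lambda z^{1/k})$, observes that $\phi_\lambda$ is convex on $[(k-1)^k\lambda^{-k},\infty)$, shifts $Z$ into that region, and applies Jensen to obtain $\phi_\lambda(EZ)\le e^{k-1}\sum_i E\exp(\lambda|Y_i|)$; the sub-Gaussian MGF bound and an optimization in $\lambda$ then give $(EZ)^{1/k}\le c\,\sigma\sqrt{k-1+\log n}$ in one stroke. Your approach is more elementary (no convexity trick, just a tail integral), while the paper's Chernoff--Jensen argument avoids any incomplete-Gamma bookkeeping and makes the emergence of the $k-1+\log n$ term transparent: the $k-1$ comes from the convexity threshold of $\phi_\lambda$ and the $\log n$ from the union bound on MGFs.

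One caution on your final absorption step: the claim that the tail piece is ``comfortably absorbed'' by the slack factor $2^{k/2}$ is not actually true for $k=1$ and small $n$. Your tail contribution scales like $(k-1+\log n)^{k/2-1}$, which for $k=1$ is $(\log n)^{-1/2}$ and can exceed the main term; at $n=2$, for instance, your two pieces sum to roughly $4.6\,\sigma$, well above the target $2\sigma\sqrt{\log 2}\approx 1.66\,\sigma$. The approach is repairable (a sharper split point, or bounding the full integral by $\Gamma(k/2)$-type estimates), but as written the last sentence glosses over a genuine shortfall in that corner. Note the statement itself is vacuous at $n=1$, so some lower restriction on $n$ is implicitly in play anyway.
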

\begin{proof}
For any $\lambda > 0$, consider an increasing function $\phi_\lambda(z) = \exp(\lambda z^{1/k})$ for $z\geq 0$. Since $Z\geq 0$, 
\[
\begin{aligned}
\phi_\lambda( EZ) &= \phi_\lambda( EZ\1\set{Z > u}+ EZ\1\set{Z \leq u})\\
&\leq \phi_\lambda( EZ\1\set{Z > u}+ u P(Z \leq u))\\
&\leq \phi_\lambda( EZ+ u )\\
\end{aligned}
\]
Take second derivatives, 
\[
\phi_\lambda''(z) = k^{-2}\lambda  z^{1/k-2} e^{\lambda  z^{1/k}} (\lambda  z^{1/k}-k+1);
\]
one can see that $\phi_\lambda(z)$ is convex for $z \geq (k-1)^k\lambda^{-k}$. Let $u = (k-1)^k\lambda^{-k}$. By Jensen's inequality
\[
\begin{aligned}
\phi_\lambda( EZ)&\leq E\phi_\lambda(Z+ (k-1)^k\lambda^{-k})\\
&= e^{k-1} E\exp(\lambda \max_{i=1\ds n} |Y_i|)\\
&\leq e^{k-1}\sum_{i=1}^n Ee^{\lambda |Y_i|} 
\end{aligned}
\]
Since $\set{Y_i}$ are Sub-Gaussian,
\[
P(|Y_i|> t)\leq 2\exp\crbk{-\frac{t^2}{2\sigma^2}}
\]
By \citet[Lemmas 1.4 and 1.5]{rigollet201518}, one can show that
\[
\log Ee^{\lambda |Y_i|}\leq \log( E[e^{\lambda Y_i} +  e^{-\lambda Y_i}])\leq  \log(2\exp(\sigma^2\lambda^2/2))
\leq 4\sigma^2\lambda^2.
\]
Therefore, 
\[
\begin{aligned}
\lambda \crbk{E\max_{i=1\ds n}|Y_i|^k}^{1/k} &= \log\phi_\lambda (EZ)\\
&\leq k-1+\log n + 4\sigma^2\lambda^2. 
\end{aligned}
\]
Rearrange and take infimum over $\lambda > 0$, we conclude
\[
\begin{aligned}
E\max_{i=1\ds n}|Y_i|^k&\leq \crbk{\inf_{\lambda > 0}\frac{k-1+\log n }{\lambda}+ 4\sigma^2\lambda}^k\\
&\leq 2^k \sigma^k \crbk{k-1+\log n}^{k/2}
\end{aligned}
\]
\end{proof}

\subsection{Proof of Lemma \protect\ref{lemma:prob_Omega_n,p_mu^c_bound}}

\begin{proof}
By definition and Markov's inequality
\begin{align*}
     P(\Omega_{n,p}(\mu)^c) &= P\crbk{\sup_{y}|\mu_{n}(y) - \mu(y)|>p}\\
&\leq\frac{1}{p^{2k}} E\sqbk{\sup_{y}|\mu_{n}(y) - \mu(y)|^{2k}}\\
&= \frac{1}{p^{2k}n^{2k}}E\sqbk{\sup_{y}\crbk{\sum_{i=1}^n\1\set{Y_i = y} - \mu(y)}^{2k}}
\end{align*}
Since $\sum_{i=1}^n\1\set{Y_i = y} - \mu(y)$ is $n/4$ sub-Gaussian, by Lemma \ref{lemma:sub_g_k_max}
\[
P(\Omega_{n,p}(\mu)^c)\leq \frac{1}{p^{2k}n^k} (2k-1+\log(|Y|))^k=  \frac{1}{p^{2k}n^k} \log(e^{2k-1}|Y|)^k
\]
as claimed. 
\end{proof}

\subsection{Proof of Lemma \protect\ref{lemma:E_RND_bounds}}

\begin{proof}
Note that by Jensen's inequality,
\[
E\Linfnorm{\frac{dm_n}{d\xi_n}}{\xi_n}^2\1_A\geq \crbk{E\Linfnorm{\frac{dm_n}{d\xi_n}}{\xi_n}\1_A}^2.
\]
So it suffices to show the second claim. By assumption,
\[
E\Linfnorm{\frac{dm_n}{d\xi_n}}{\xi_n}^2\1_A\leq \frac{1}{p^2}E\sup_{y}|m_n(y)|^2\1_A.
\]
Same as the proof of Lemma \ref{lemma:prob_Omega_n,p_mu^c_bound}, we use Lemma \ref{lemma:sub_g_k_max} to conclude that
\[
E\Linfnorm{\frac{dm_n}{d\xi_n}}{\xi_n}^2\1_A\leq \frac{1}{p^2n}\log(e|Y|).
\]
\end{proof}

\section{Proof of Technical Lemmas: KL Case}\label{a_sec:proof:lemma_kl}

\subsection{Proof of Lemma \protect\ref{lemma:sup_f_bound}}

\begin{proof}
\[
\sup_{\alpha\geq 0} f(\nu, u,\alpha) \geq \lim_{\alpha\da 0}f(\nu,u,\alpha) = \essinf_{\nu}u \geq\essinf_{\mu}u\geq -\|u\|_{L^\infty(\mu)}
\]
On the other hand, since the $\sup$ is achieved on compact $K$. For optimal $\alpha_\nu^* > 0$, 
\[
\begin{aligned}
\sup_{\alpha\geq 0}f(\nu,\alpha)
&\leq \|u\|_{L^\infty(\nu)} -\alpha_\nu^*\log \nu[e^{-(u-\|u\|_{L^\infty(\nu)})/\alpha_\nu^*}]\\
&\leq \|u\|_{L^\infty(\mu)}
\end{aligned}
\]
where the last line follows from that $\nu[e^{-(u-\|u\|_{L^\infty(\nu)})/\alpha_\nu^*}] > 0$ and $ \nu\ll\mu$. Also, if $\alpha_\nu^* = 0$, the above holds trivially. 
\end{proof}

\subsection{Proof of Lemma \protect\ref{lemma:sup_f_diff_bound}}

\begin{proof}
Let $\alpha^*$ and $\alpha_n^*$ Use Lemma \ref{lemma:sup_f_bound}, 
\begin{align*}
   &\abs{\sup_{\alpha\geq 0} f(\mu, u,\alpha) - \sup_{\alpha\geq 0} f(\mu_n, u,\alpha) } \\
   &= \abs{ \alpha^*_n\log \mu_n[e^{-u/\alpha_n^*}]+\alpha^*_n\delta-\alpha^*\log \mu[e^{-u/\alpha^*}]-\alpha^*\delta}\\
    &= \inf_{\kappa\in\R}\abs{ \alpha_n^*(0)\log \mu_n[e^{-(u-\kappa)/\alpha_n^*(0)}]+\alpha^*_n\delta-\alpha^*\log \mu[e^{-(u-\kappa)/\alpha^*}]-\alpha^*\delta}\\
    &\leq \inf_{\kappa\in\R}\abs{ f(\mu_n,u-\kappa,\alpha_n^*(0))}+\abs{f(\mu,u-\kappa,\alpha^*)}\\
    &\leq 2\inf_{\kappa\in\R}\|u-\kappa\|_{L^\infty(\mu)}\\
    &= 2\spnorm{u}\\
\end{align*}
\end{proof}

\subsection{Proof of Lemma \protect\ref{lemma:equiv_of_meas_cond}}

By definition, on $\Omega_{n,p}(\mu)$, $|\mu_n(y)-\mu(y)|\leq p$. So for all 
$y$ s.t. $\mu(y) > 0$ 
\begin{equation*}
0 < \mu_\wedge -p\leq \mu(y)-p\leq \mu_n(y).
\end{equation*}
Moreover, if $\mu_n(y) = 0$, then $0\geq \mu(y)-p$, we must have that $\mu(y) = 0$. So, $\mu_n\gg \mu$ and hence $\mu_n\sim\mu$.

\subsection{Proof of Lemma \protect\ref{lemma:alpha_ratio_bound}}

\begin{proof}
First we note that for any $\kappa\in\R$
\[
\frac{m[e^{-u/\alpha}]}{\mu[e^{-u/\alpha}]} = \frac{m[e^{-(u-\kappa)/\alpha}]}{\mu[e^{-(u-\kappa)/\alpha}]}. 
\]
Therefore, it suffices to show that for $m = \mu_1-\mu_2$ s.t. $\mu\gg \mu_1,\mu_2$
\[
\sup_{\alpha\geq 0}\frac{\alpha^j m[w]^2}{\mu[w]^2}
\leq  9\|u\|_{L^\infty(\mu_{s,a})}^j \norm{\frac{dm}{d\mu}}_{L^\infty(\mu)}^2.
\]
Fix any $c > 0$, write
\begin{align*}
\sup_{\alpha\geq 0}\frac{\alpha^j m[w]^2}{\mu[w]^2}&= \max\set{\sup_{\alpha\in [0,c \|u\|_\infty]}\frac{\alpha^j m[w]^2}{\mu[w]^2} , \sup_{\alpha\geq c\|u\|_\infty}\frac{\alpha^j  m[w]^2}{\mu[w]^2}}\\
&=:\max\set{J_1(c),J_2(c)}.
\end{align*}
We first bound $J_2(c)$
\[
J_2(c) = \sup_{\alpha\geq c\umax}\frac{\alpha^j m[e^{- (u+\umax)/\alpha}]^2}{\mu_{n}[e^{- (u+\umax)/\alpha}]^2}
\]
For simplicity, let $w':= e^{- (u+\umax)/\alpha}$. Recall that $m = \mu_n - \mu$, so $m[1] = 0$ and
\[
\begin{aligned}
\alpha^j m[e^{- (u+\umax)/\alpha} ]^2 & =  (m[\alpha^{j/2} (e^{- (u+\umax)/\alpha} -1)])^2.\\
\end{aligned}
\]
Define and note that $v := \alpha^{j/2}( e^{- (u+\umax)/\alpha} -1) < 0$. Then
\begin{align*}
\frac{\alpha m[w']^2}{\mu[w']^2} 
&= \frac{m[v]^2}{\mu[w']^2}\\
&= \frac{1}{\mu[w']^2}\mu\sqbk{\frac{dm}{d\mu}v}^2\\
&\leq \frac{\mu\sqbk{-v}^2}{\mu[w']^2}\norm{\frac{dm}{d\mu}}_{L^\infty(\mu)}^2\\
&\leq \norm{\frac{v}{w'}}_{L^\infty(\mu)}^2\norm{\frac{dm}{d\mu}}_{L^\infty(\mu)}^2
\end{align*}
We defer the proof of the following claim:
\begin{lemma}\label{lemma:sup_|v/w|_infty_bound} For any $j\in[0,2]$
\[
\sup_{\alpha\geq c\umax}\norm{\frac{v}{w'}}_{L^\infty(\mu)} \leq (c\umax)^{j/2}(e^{2/c}-1).
\]
\end{lemma}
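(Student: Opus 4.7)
The plan is to reduce the bound to a one-variable calculus estimate. First I would rewrite
\[
\frac{v}{w'} \;=\; \alpha^{j/2}\bigl(1-e^{(u+\umax)/\alpha}\bigr),
\]
so that $|v/w'| = \alpha^{j/2}\bigl(e^{(u+\umax)/\alpha}-1\bigr)$ since the exponent is non-negative. Because $\umax = \|u\|_{L^\infty(\mu)}$, we have $u + \umax \in [0, 2\umax]$ $\mu$-a.e., and for $\alpha \ge c\umax$ the exponent $(u+\umax)/\alpha$ lies in $[0, 2/c]$.

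Next I would exploit the convexity of $x\mapsto e^x - 1$ on $[0, 2/c]$: since this function vanishes at $0$ and equals $e^{2/c}-1$ at $2/c$, convexity yields the chord bound
\[
e^x - 1 \;\le\; \frac{x\cdot (e^{2/c}-1)}{2/c} \;=\; \frac{c(e^{2/c}-1)}{2}\, x
\qquad \text{for } x\in[0,2/c].
\]
Applying this with $x = (u+\umax)/\alpha \le 2\umax/\alpha$ gives, $\mu$-a.e.,
\[
e^{(u+\umax)/\alpha}-1 \;\le\; \frac{c\umax(e^{2/c}-1)}{\alpha}.
\]

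Combining the two displays produces
\[
\left|\frac{v}{w'}\right| \;\le\; \alpha^{j/2}\cdot\frac{c\umax(e^{2/c}-1)}{\alpha}
\;=\; c\umax(e^{2/c}-1)\,\alpha^{j/2-1}.
\]
For $j\in[0,2]$ the exponent $j/2 - 1 \le 0$, so $\alpha \mapsto \alpha^{j/2-1}$ is non-increasing on $[c\umax,\infty)$ (the case $j=2$ being constant). Hence the supremum over $\alpha \ge c\umax$ is attained at $\alpha = c\umax$, giving
\[
\sup_{\alpha \ge c\umax}\left\|\frac{v}{w'}\right\|_{L^\infty(\mu)}
\;\le\; c\umax(e^{2/c}-1)\,(c\umax)^{j/2-1}
\;=\; (c\umax)^{j/2}(e^{2/c}-1),
\]
as claimed. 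The main thing to watch is the monotonicity in $\alpha$: the convexity trick is precisely what produces the compensating $\alpha^{-1}$ factor needed to kill the $\alpha^{j/2}$ blowup for $j\in[0,2]$; without it, a naive bound $e^{(u+\umax)/\alpha} - 1 \le e^{2/c}-1$ would not give a finite supremum for $j>0$.
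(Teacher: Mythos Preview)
Your proof is correct and reaches exactly the same bound as the paper, but the route is a bit different. The paper first bounds $\|v/w'\|_{L^\infty(\mu)}\le \alpha^{j/2}(e^{2\umax/\alpha}-1)$ and then shows this upper envelope is decreasing in $\alpha$ by computing its derivative: with $\beta=2\umax/\alpha$, one gets $\alpha^{j/2-1}\bigl((e^\beta-1)j/2-\beta e^\beta\bigr)$, and a further derivative in $\beta$ shows the bracket is negative for $\beta>0$ and $j\in[0,2]$. Your approach instead uses the chord bound for the convex function $x\mapsto e^x-1$ on $[0,2/c]$ to obtain $e^{(u+\umax)/\alpha}-1\le c\umax(e^{2/c}-1)/\alpha$, which immediately yields an upper bound of the form $c\umax(e^{2/c}-1)\alpha^{j/2-1}$, whose monotonicity in $\alpha$ is manifest. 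Your argument is arguably cleaner since it avoids the two-step derivative computation; the paper's argument, on the other hand, shows that the actual function $\alpha\mapsto\alpha^{j/2}(e^{2\umax/\alpha}-1)$ is decreasing, not just an upper bound for it, which is a slightly stronger statement (though not needed here). Both arrive at the same value $(c\umax)^{j/2}(e^{2/c}-1)$ by evaluating at the left endpoint $\alpha=c\umax$.
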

Therefore, 
\[
J_2(c) \leq c^j\umax^j(e^{2/c}-1)^2\norm{\frac{dm}{d\mu}}_{L^\infty(\mu)}^2.
\]
Choose $c = 2/\log 2$
\begin{align*}
\sup_{\alpha\geq 0}\frac{\alpha m[w]^2}{\mu[w]^2}
&=\max\set{J_1(c),J_2(c)}\\
&\leq \max \set{c^j\umax^j,c^j\umax^j(e^{2/c}-1)^2}\norm{\frac{dm}{d\mu}}_{L^\infty(\mu)}^2\\
&\leq 9\umax^j \norm{\frac{dm}{d\mu}}_{L^\infty(\mu)}^2
\end{align*}
which completes the proof.
\end{proof}

\subsubsection{Proof of Lemma \protect\ref{lemma:sup_|v/w|_infty_bound}}

\begin{proof}
We bound
\[
\begin{aligned}
\norm{\frac{v}{w'}}_{L^\infty(\mu)} &= \esssup_{\mu} \alpha^{j/2}  ( e^{(u(s)+\umax)/\alpha} -1) \\
&\leq  \alpha^{j/2} (e^{2\umax/\alpha} -1)
\end{aligned}
\]

Compute derivative: let $\beta = 2\umax/\alpha$
\begin{align*}
\frac{d}{d\alpha}\alpha^{j/2} (e^{2\umax/\alpha} -1) &= \alpha^{j/2-1}((e^\beta - 1)j/2 - \beta e^\beta)
\end{align*}
Notice that when $\beta = 0$, the above expression is 0. Moreover, for $j\in[0,2]$
\[
\frac{d}{d\beta}((e^\beta - 1)j/2 - \beta e^\beta) = (j/2 -1-\beta)e^\beta < 0;
\]
i.e. $(e^\beta - 1)j/2 - \beta e^\beta$ is decreasing. Therefore, for $\alpha > 0$
\[
\frac{d}{d\alpha}\alpha^{j/2} (e^{2\umax/\alpha} -1) < 0;
\]
i.e. $\alpha^{j/2} (e^{2\umax/\alpha} -1)$ is decreasing in $\alpha$. Hence
\[
\begin{aligned}
\sup_{\alpha\geq c\umax}\norm{\frac{v}{w'}}_{L^\infty(\mu)}&\leq \sup_{\alpha\geq c\umax}\alpha^{j/2} (e^{2\umax/\alpha} -1) \\
&= (c\umax)^{j/2}(e^{2/c}-1)
\end{aligned}
\]
establishing the claim. 
\end{proof}

\subsection{Proof of Lemma \protect\ref{lemma:bias_alpha^3_term}}

\begin{proof}
Let $u' = u+\umax$ and $w' = e^{-u'/\alpha}$. 
\begin{align*}
    &\sup_{\alpha\geq 0}\alpha^3
    \crbk{\frac{m_n[w]}{\mu_n(t)[w]}+ \frac{m_n[uw]}{\alpha\mu_n(t)[w]}- \frac{m_n[w]\mu_n(t)[uw]}{\alpha\mu_n(t)[w]^2} }^2 \\
    &= \sup_{\alpha\geq 0}\alpha^3
    \crbk{\frac{m_n[w']}{\mu_n(t)[w']}+ \frac{m_n[u'w']}{\alpha\mu_n(t)[w']}- \frac{m_n[w']\mu_n(t)[u'w']}{\alpha\mu_n(t)[w']^2} }^2\\
    &\leq 2\sup_{\alpha\geq 0}\alpha^3
    \crbk{\frac{m_n[w']}{\mu_n(t)[w']}+ \frac{m_n[u'w']}{\alpha\mu_n(t)[w']}}^2+ 2\sup_{\alpha\geq 0}\alpha\frac{m_n[w']^2\mu_n(t)[u'w']^2}{\mu_n(t)[w']^4}\\
    &=:2 OPT_1 + 2 OPT_2
\end{align*}
\par We first analyze $OPT_1$. Fix $c\geq 0$, we separately consider $\alpha\geq c\umax $ and $\alpha\in [0, c\umax]$. The first two terms
\begin{align*}
    \sup_{\alpha\geq c\umax}\alpha^3
    \crbk{\frac{m_n[w']}{\mu_n(t)[w']}+\frac{m_n[u'w']}{\alpha\mu_n(t)[w']}}^2
    &= \sup_{\alpha\geq c\umax }\alpha^3\frac{m_n[(1+u'/\alpha)w']^2}{\mu_n(t)[w']^2}\\
    &= \sup_{\alpha\geq c\umax}\frac{m_n[\alpha^{3/2}((1+u'/\alpha)w'-1)]^2}{\mu_n(t)[w']^2}.
\end{align*}
Recall that $1+x\leq e^x$; i.e. $(1+u'/\alpha)w'-1 \leq 0$. Also, by Lemma \ref{lemma:equiv_of_meas_cond}, on $\Omega_{n,p}(\mu)$, $p<\mu_{\wedge}$, $\mu_n(t)\sim \mu_n\sim\mu$. So,
\begin{align*}
    \frac{m_n[\alpha^{3/2}((1+u'/\alpha)w'-1)]^2}{\mu_n(t)[w']^2} 
    &\leq \norm{\frac{\alpha^{3/2}(1-(1+u'/\alpha)w')}{w'}}^2_{L^\infty(\mu)}\norm{\frac{dm_n}{d\mu_n(t)}}_{L^\infty(\mu)}^2\\
    &\leq\norm{\alpha^{3/2}(e^{u'/\alpha}-(1+u'/\alpha))}^2_{L^\infty(\mu)} \norm{\frac{dm_n}{d\mu_n(t)}}_{L^\infty(\mu)}^2
\end{align*}
Recall the Taylor series of $e^x$. For all $s\in\bd{S}$, we have that 
\begin{align*}
\alpha^{3/2}(e^{u'(s)/\alpha}-(1+u'(s)/\alpha)) = \sum_{k=2}^\infty\frac{u'(s)^k}{\alpha^{k-3/2}k!}.
\end{align*}
Notice that $k-3/2 > 0$ for $k\geq 2$ and the terms in the sum are non-negative. So, the above expression suggests that $\alpha\ra \alpha^{3/2}(e^{u'(s)/\alpha}-(1+u'(s)/\alpha))$ is decreasing. 
Therefore, on $\Omega_{n,p}(\mu)$
\[
\sup_{\alpha\geq c\umax}\alpha^3
    \crbk{\frac{m_n[w']}{\mu_n(t)[w']}+\frac{m_n[u'w']}{\alpha\mu_n(t)[w']}}^2\leq c^3\umax^3(e^{2/c}-1)^2\norm{\frac{dm_n}{d\mu_n(t)}}_{L^\infty(\mu)}^2.
\]
Also,
\begin{align*}
    &\sup_{\alpha\in [0,c\umax]}\alpha^3
    \crbk{\frac{m_n[w']}{\mu_n(t)[w']}+\frac{m_n[u'w']}{\alpha\mu_n(t)[w']}}^2 \\
    &\leq 2\sup_{\alpha\in [0,c\umax]}
    \crbk{\frac{\alpha^3m_n[w']^2}{\mu_n(t)[w']^2}+\frac{\alpha m_n[u'w']^2}{\mu_n(t)[w']^2}}\\
    &\leq 2\sup_{\alpha\in [0,c\umax]}
    \crbk{\alpha^3\norm{\frac{dm_n}{d\mu_n(t)}}_{L^\infty(\mu)}^2+\alpha\|u'\|_{L^\infty(\mu)}^2\norm{\frac{dm_n}{d\mu_n(t)}}_{L^\infty(\mu)}^2}\\
    &\leq 2(c^3 +4c)\umax^3
    \norm{\frac{dm_n}{d\mu_n(t)}}_{L^\infty(\mu)}^2
\end{align*}
Choose $c = 2$, we conclude that
\[
OPT_1\leq 32\umax^3
    \norm{\frac{dm_n}{d\mu_n(t)}}_{L^\infty(\mu)}^2
\]
For $OPT_2$, we use Lemma \ref{lemma:alpha_ratio_bound}. 
\begin{align*}
    OPT_2&\leq 9\umax \norm{\frac{dm_n}{d\mu_n(t)}}_{L^\infty(\mu)}^2\frac{\mu_n(t)[u'w']^2}{\mu_n(t)[w']^2}\\
    &\leq 9\umax \norm{\frac{dm_n}{d\mu_n(t)}}_{L^\infty(\mu)}^2\|u'\|_{L^\infty(\mu)}^2\\
    &\leq 36\umax^3 \norm{\frac{dm_n}{d\mu_n(t)}}_{L^\infty(\mu)}^2\\
\end{align*}
Therefore, we conclude that on $\Omega_{n,p}(\mu)$
\[
\sup_{\alpha\geq 0}\alpha^3
    \crbk{\frac{m_n[w]}{\mu_n(t)[w]}+ \frac{m_n[uw]}{\alpha\mu_n(t)[w]}- \frac{m_n[w]\mu_n(t)[uw]}{\alpha\mu_n(t)[w]^2} }^2 \leq 136 \umax^3 \norm{\frac{dm_n}{d\mu_n(t)}}_{L^\infty(\mu)}^2.
\]
The lemma follows from considering $u-\kappa$, which won't change the left hand side. 
\end{proof}

\subsection{Proof of Lemma \protect\ref{lemma:dual_func_derivative}}

\begin{proof}
From \citet{si2020}, it is sufficient to consider $\alpha\in [0,\delta\inv \|u\|_{L^\infty(\mu)}]=:K$. For $\alpha > 0$ fixed, 
\[
\del_t g_n(t,\alpha) = -\alpha\frac{m_n[w]}{\mu_n(t)[w]}.
\]
Also, for $\alpha = 0$, by Lemma \ref{lemma:equiv_of_meas_cond} and $p\leq \frac{1}{4}\mu_\wedge $, $g_n(t,0) \equiv \essinf_\mu u$; hence $\del_t g_n(t,0) \equiv 0$. Again by Lemma \ref{lemma:equiv_of_meas_cond} $\mu_n(t)\sim \mu$ on $\Omega_{n,p}(\mu)$. So, the Radon-Nikodym theorem applies: For fixed $t\in[0,1]$, 
\begin{equation}
\begin{aligned}\label{eqn:continuity_alpha_da_0}
\lim_{\alpha\da 0} \sup_{s\in (t\pm\epsilon)\cap[0,1]}\abs{\del_t g_n(t,\alpha)} &\leq \lim_{\alpha\da 0}\sup_{t\in [0,1]} \alpha\abs{\frac{m_n[w]}{\mu_n(t)[w]}}\\
&= \lim_{\alpha\da 0} \sup_{t\in [0,1]}\alpha\abs{\frac{1}{\mu_n(t)[w]}\mu_n(t)\sqbk{\frac{dm_n}{d\mu_n(t)}w}}\\
&\leq \lim_{\alpha\da 0} \sup_{t\in [0,1]}\alpha \norm{\frac{dm_n}{d\mu_n(t)}}_{L^\infty(\mu)}\\
&\leq \lim_{\alpha\da 0}  \frac{\alpha}{\mu_\wedge - p}\\
&=0.
\end{aligned}
\end{equation}
where we used H\"older's inequality to get the second last line. Therefore, $\del_tg(\cd,\cd)$ is continuous on $[0,1]\times K$. 
\par Next define
\[
\Theta(t):=\argmax{\alpha\in K}g(t,\alpha). 
\]
To simplify notation, we use the $w$ to denote $w =w_n^*(t)= e^{-u/\alpha_n^*(t)}$. We discuss two cases: 
\begin{enumerate}
    \item If $u$ is $\mu$-essentially constant with $\Linfnorm{u}{\mu} = \bar u$, then
\[
\sup_{\alpha\in K} -\alpha\log e^{-\bar u/\alpha}-\alpha\delta =\sup_{\alpha\in K} \bar u-\alpha \delta;
\]
i.e. $\Theta(t) = \set{0}$. 
\item $u$ is not $\mu$-essentially constant. Note that when $\alpha > 0$, $w> 0$; we can define a new measure \[
\mu^*_n(t)[\cd] = \frac{\mu_n(t)[w\cd]}{\mu_n(t)[w]}.
\]
We have that
\begin{align*}
\del_\alpha\del_\alpha g_n(t,\alpha) &= - \frac{\mu_n(t)[u^2w] }{\alpha^{3}\mu_n(t)[w]}+\frac{\mu_n(t)[u
w]^2 }{\alpha^{3}\mu_n(t)[w]^2} \\
&= - \frac{\mu^*_n(t)[u^2] }{\alpha^{3}}+\frac{\mu^*_n(t)[u]^2 }{\alpha^3}\\
&= -\frac{\var_{\mu^*_n(t)}(u)}{\alpha^3}\\
&<0; 
\end{align*}
i.e. $g_n(t,\cd)$ is strictly concave for $\alpha > 0$. Also, recall that $g_n(t,\cd)$ is continuous at $0$. So, in this case either $\Theta(t) =\set{0}$ or $\Theta(t) = \set{\alpha_n^*(t)}$ where $\delta\inv \|u\|_{L^\infty(\mu)}\geq \alpha_n^*(t) > 0 $. 
\end{enumerate}
\par In particular, $\Theta(t)$ is a singleton which we will denote by $\alpha_n^*(t)$ in both cases. We conclude that by \citet{sharpioBookSP} Theorem 7.21, the following derivative exists
\[
d_t\sup_{\alpha\in K}g_n(t,\alpha) = \sup_{\alpha\in \Theta(t)}\del_t g_n(t,\alpha) = \del_t g_n(t,\alpha_n^*(t)).
\]
\par Next, we analyze the second derivative.  We prove that under Assumption \ref{assump:delta_small}, we have that on $\Omega_{n,p}(\mu)$ $\alpha^* = 0$ or $\alpha^* > 0$ will imply that $\alpha_n^*(t) = 0$ or $\alpha_n^*(t) > 0$ respectively. 
\par Let $\rho = \mu(\set{y :u(y) = \essinf_{\mu} u})$ and $\rho_n(t)$ the mixed version. Since $\mu_n\ll\mu$, if $\rho = 1$ (thence $\alpha^* = 0$), then we automatically have that $\rho_n(t)\equiv 1$ and $\alpha_n^*(t) \equiv 0$.
\par Now we consider the case $\rho\neq 1$. Notice that by definition of  $\Omega_{n,p}(\mu)$, $\rho-p\leq\rho_n\leq\rho+p$. There are two cases:
\begin{enumerate}
    \item $\alpha^* = 0$. From \citet{Hu2012KLDRO}, $\alpha^* = 0$ iff $ \rho \geq e^{-\delta}$. If we want $\alpha^*_n(t) = 0$ for all $t\in[0,1]$, a sufficient condition is that $\rho_{n}(t) \geq \rho - p\geq e^{-\delta}$. 
    \item $\alpha^* >0$ iff $ \rho < e^{-\delta}$. If we want $\alpha^*_n(t) > 0$ for all $t\in[0,1]$, a sufficient condition is that $\rho_{n}(t)\leq \rho + p< e^{-\delta}$. 
\end{enumerate}
Therefore, for any $e^{-\delta}\neq\rho \subset\set{\mu(\set{y:u(y)\leq t}):t\in\R}$, we can always choose $p$ small enough s.t. for $\omega\in\Omega_{n,p}(\mu)$, $\rho_n(t)$ is close to $\rho$ for all $t$ and the above sufficient conditions hold. 
\begin{remark}
While this generalizes to all but finitely many $\delta$, for simplicity of presentation, we assume Assumption \ref{assump:delta_small} that $\mu_\wedge /2 \geq 1-e^{-\delta}$. \end{remark} 
So, if $\rho\neq 1$, then $1-\rho \geq \mu_\wedge > 1-e^{-\delta}$; i.e. $\rho < e^{-\delta}$ and case 1 cannot happen. Therefore, $\alpha^* = 0$ iff $u$ is $\mu$ essentially constant. Moreover, by our choice $p\leq  \frac{1}{4}\mu_\wedge $, 
\[
\rho+p\leq 1-\frac{3}{4}\mu_\wedge  < 1-\frac{1}{2}\mu_\wedge \leq e^{-\delta}
\]
satisfying the sufficient condition in case 2.
Hence our assumption on $p$ implies that if $\alpha^* = 0$ or $\alpha^* > 0$, then on $\omega\in\Omega_{n,p}(\mu)$, $\alpha_n^*(t) = 0$ or $\alpha_n^*(t) > 0$ for all $t\in[0,1]$ respectively. 
\begin{enumerate}
    \item $\alpha^* = 0$, then $g_n(t,\alpha_n^*(t)) = g_n(t,0)$ is constant. Hence $d_td_tg_n(t,\alpha_n^*(t)) = 0$. 
    
    \item $\alpha^* >0$, then $\alpha^*_n(t_1), \alpha^*_n(t_2)> 0$. Since $g_n(t,\cd)$ is strictly convex, $\alpha^*_n(t)$ is the unique solution to the first order optimality condition
    \begin{equation}
    0 = \del_\alpha g_n(t,\alpha_n^*(t)) = -\log\mu_n(t)[w] -\delta - \frac{\mu_n(t)[uw]}{\alpha_n^*(t)\mu_n(t)[w]}. \label{eqn:alpha_opt_cond}
    \end{equation}
    Note that $\del_\alpha g_n\in C^\infty([0,1]\times \R_{++})$ and that $\del_\alpha\del_\alpha g_n(t,\alpha_n^*(t)) <0$. The implicit function theorem  implies that $\alpha_n^*(t)\in C^1((0,1))$ with derivative
    \begin{align*}
    d_t\alpha_n^*(t) &= -\frac{\del_t\del_\alpha g_n(t,\alpha_n^*(t))}{\del_\alpha\del_\alpha g_n(t,\alpha_n^*(t))}\\
    &=\crbk{\frac{\alpha_n^*(t)^3}{\var_{\mu^*_n(t)}(u)}}\crbk{-\frac{m_n[w]}{\mu_n(t)[w]}  + \frac{\mu_n(t)[uw] m_n[w]}{\alpha_n^*(t)\mu_n(t)[w]^2} - \frac{m_n[uw]}{\alpha_n^*(t)\mu_n(t)[w]}}
    \end{align*}
    We conclude that 
    \[ 
    \del_t g_n(t,\alpha_n^*(t)) = -\alpha^*_n(t)\frac{m_n[w]}{\mu_n(t)[w]}
    \]
    is $C^1((0,1))$ as a function of $t$. Therefore, $g_n(t,\alpha_n^*(t))$ is $C^2((0,1))$ with derivative 
    \begin{align*}
    &d_td_tg_n(t,\alpha_n^*(t))\\
    &=d_t\del_t g_n(t,\alpha_n^*(t)) \\
    &= -\alpha_n^*(t)\frac{m_{n}[w]^2}{\mu_{n}(t)[w]^2} + d_t\alpha_n^*(t)\crbk{\frac{m_n[w]}{\mu_n(t)[w]}+ \frac{m_n[uw]}{\alpha_n^*(t)\mu_n(t)[w]}- \frac{m_n[w]\mu_n(t)[uw]}{\alpha_n^*(t)\mu_n(t)[w]^2}}\\
    &= -\alpha_n^*(t)\frac{m_{n}[w]^2}{\mu_{n}(t)[w]^2}  - \crbk{\frac{\alpha_n^*(t)^3}{\var_{\mu^*_n(t)}(u)}}\crbk{\frac{m_n[w]}{\mu_n(t)[w]}+ \frac{m_n[uw]}{\alpha_n^*(t)\mu_n(t)[w]}- \frac{m_n[w]\mu_n(t)[uw]}{\alpha_n^*(t)\mu_n(t)[w]^2}}^2. 
    \end{align*}
\end{enumerate}
Therefore, Lemma \ref{lemma:dual_func_derivative} summarizes these two cases. 
\end{proof}

\subsection{Proof of Lemma \protect\ref{lemma:worst_case_meas_bound}}

\begin{proof}
\par First, we note that if $\mu_n^*(t)(y) \geq \mu_n(t)(y)> 0$, then by Lemma \ref{lemma:equiv_of_meas_cond}, $\mu_n^*(t)(y) \geq \mu_n(t)(y)\geq \frac{3}{4}\mu_\wedge$. So, we will only consider cases where $\mu_n^*(t)(y) < \mu_n(t)(y)$. We now fix any such $y$. 
\par By Lemma \ref{lemma:dual_func_derivative}, under the given assumptions $\alpha^* > 0$ implies that $\alpha_n^*(t) > 0$. So, the KL constraint is binding; i.e. $\delta = D_{\mathrm{KL}}(\mu_n^*(t)||\mu_n(t))$. By the log-sum inequality, 
\[
\delta = D_{\mathrm{KL}}(\mu_n^*(t)||\mu_n(t))\geq \mu_n^*(t)(y)\log\crbk{\frac{\mu_n^*(t)(y)}{\mu_n(t)(y)}}+(1-\mu_n^*(t))(y)\log\crbk{\frac{1-\mu_n^*(t)(y)}{1-\mu_n(t)(y)}}
\]
Define 
\[
kl(q,b) = q\log\crbk{\frac{q}{b}}+(1-q)\log\crbk{\frac{1-q}{1-b}}.
\]
where we think of $b = \mu_n(t)(y)$. Observe that for $q\in(0,b)$
\[
\del_q\del_qkl(q,b) = \frac{1}{q}+\frac{1}{1-q} > 0;
\]
i.e. $kl(\cd,b)$ is strictly convex and the maximum is achieved at $q=0$, $kl(0,b) = \log (1/(1-b))$. Since $b\in[\frac{3}{4}\mu_\wedge,1-\frac{3}{4}\mu_\wedge]$, we have that $\log(1/(1-b))\geq \log(1/(1-\frac{3}{4}\mu_\wedge))>\frac{3}{4}\mu_\wedge> \delta$. So, by the convexity, continuity of $kl(\cd,b)$ and $kl(b,b) = 0$, there is unique $q^*\in(0,b)$ s.t. $kl(q^*,b) =\delta$. Now we bound such $q^*$. 
\par Since $d_qkl(q,b) <0$ for $q < b$, by the fundamental theorem of calculus and convexity
\begin{align*}
kl(q,b) &= -\int_q^b \del_xkl(x,b)dx\\
&= \int_q^b\log\crbk{\frac{1-x}{1-b}}-\log\crbk{\frac{x}{b}}dx \\
&\geq \int_q^b \crbk{\frac{1}{b}+\frac{1}{1-b}} (x-b)  dx\\
&=\frac{(b-q)^2}{2b(1-b)}\\
&=:\zeta(q,b)
\end{align*}
Note that for $q < b$
\[
d_b\zeta(q,b) = \frac{(b-q)(q+b-2qb)}{2(1-b)^2b^2} > 0
\]
i.e. $\zeta(q,\cd)$ is increasing. Suppose to the contrary $q^*<\frac{1}{2}\mu_\wedge $, then
\begin{align*}
kl(q^*,b)
&\geq \zeta(q^*,b)\\
&\geq \inf_{b\in[\frac{3}{4}\mu_\wedge ,1-\frac{3}{4}\mu_\wedge ]} \zeta(q^*,b) \\
&= \zeta\crbk{q^*,\frac{3}{4}\mu_\wedge }\\
&> \frac{1}{24}\mu_\wedge .
\end{align*}
However, by assumption, $\frac{1}{24}\mu_\wedge \geq \delta\geq kl(q^*,\mu_n(t)(y)) > \mu_\wedge $. Hence $q^*\geq \frac{1}{2}\mu_\wedge $. We conclude that $\mu_n^*(t)\geq \frac{1}{2}\mu_\wedge $. 
\end{proof}

\section[The Empirical Robust Bellman Operator: chi2 Case]{The Empirical Robust Bellman Operator: $\chi_2$ Case}
To analyze the variance-reduced Q-learning for the $\chi_2$ case, we establish important statistical properties of the empirical DR ellman operator  $\widehat{\mathbf{T}}$ and its recentered version $\widehat{\bd{H}}$. We defer the proofs to Appendix \ref{a_sec:proof:props_properties_of_empirical_opt_chi2}. The proof techniques are similar to that in Appendix \ref{a_sec:proof_empirical_bellam_kl}.  

We let $\widehat{\mathbf{T}}$ be the empirical DR Bellman operator formed by $n$ samples defined in  \eqref{eqn:empirical_Bellman_operator_chi2}. Define the recentered operators $\widehat{\mathbf{H}},\cH$ as in \eqref{eqn:def_recentered_op}. We fix $\hat q\in\R^{\bd{S}\times \bd{A}}$.

\begin{proposition}\label{prop:recentered_op_bias_var_bound_chi2}
Suppose Assumption \ref{assump:delta_small_chi2} is enforced.
Then
\[
|E[\widehat{\mathbf{H}}(\hat q)(s,a)-\cH(\hat q)(s,a)]|\leq  \frac{2^{6}\|\hat q - q_*\|_\infty}{ \fr{p}_{\wedge}\sqrt{n}}\log(e|\bd{S}|),
\]
provided $n\geq \fr{p}_{\wedge}^{-2}$, and
\[
\var(\widehat{\mathbf{H}}(q^*))(s,a)
\leq \frac{2^{11}\|\hat q - q_*\|_\infty^2}{\fr{p}_{\wedge}^2n}\log(e|\bd{S}|)
\]
for all $n\geq 1$. 
\end{proposition}

\begin{proposition}
\label{prop:recentered_hp_bound_chi2}
Assume Assumption \ref{assump:delta_small_chi2}. Then w.p. at least $1-\eta$
\[
\|\cH(\hat q) - \widehat{\mathbf{H}}(\hat q)\|_\infty \leq \frac{6\|\hat q-q^*\|_\infty}{\fr{p}_{\wedge}\sqrt{n}}\sqrt{\log(4|\bd{S}|^2|\bd{A}|/\eta)}
\]
provided that $n\geq 8\fr{p}_{\wedge}^{-2}\log(4|\bd{S}|^2|\bd{A}|/\eta)$
\end{proposition}

\begin{proposition}\label{prop:high_prob_bound_empirical_Bellman_chi2}
The empirical DR Bellman operator 
\[
\|\widehat{\mathbf{T}}(q)-\cT(q)\|_\infty \leq \frac{8(\rmax+\gamma\norminf{q})}{\fr{p}_{\wedge}\sqrt{n}}\sqrt{\log\crbk{6|\bd{S}||\bd{A}|(|\bd{S}|\vee |\bd{R}|)/\eta}}
\]
w.p. at least $1 - \eta$, provided that $n\geq 8\fr{p}_{\wedge}^{-2}\log\crbk{12|\bd{S}||\bd{A}|(|\bd{S}|\vee |\bd{R}|)/\eta}$. 
\end{proposition}

\section[Analysis of the Variance-Reduced Q-Learning: chi2 Case]{Analysis of the Variance-Reduced Q-Learning: $\chi_2$ Case}\label{a_sec:analysis_of_vrql_alg_chi2}

We proceed with the analysis of the variance-reduced DR Q-learning Algorithm \ref{alg:vr_q-learning} in the $\chi_2$ divergence case, similar to the KL case. Specifically, we aim to show that if the $q$-function from the last variance-reduced algorithm epoch, $\hat{q}_{l-1}$, is within a certain error $b$ of the optimal $q^*$, then $\hat{q}_{l}$ will have a better concentration bound by a geometric factor. This is summarized in Proposition \ref{prop:one_vr_iter_high_prob_bd_chi2}, which is analogous to Proposition \ref{prop:one_vr_iter_high_prob_bd} in the KL case.

\par Recall that $\cF_{l}$ denotes the $\sigma$-field generated by the random samples used until the end of epoch $l$. We define the conditional expectation $E_{l-1}[\cd] = E[\cd|\cF_{l-1}]$. 
\begin{proposition}\label{prop:one_vr_iter_high_prob_bd_chi2}
Assuming that Assumptions \ref{assump:max_rwd} and \ref{assump:delta_small_chi2} are satisfied. On $\set{\omega:\|\hat q_{l-1}-q^*\|_\infty\leq b}$ for some  $b\leq 1/(1-\gamma)$, under measure $P_{l-1}(\cd) := E_{l-1}[\1\set{\cd}]$, we have that there exists numerical constant $c$ s.t.
\begin{align*}
\|\hat q_{l} - q^*\|_\infty &\leq c\crbk{\frac{b}{(1-\gamma)^2k_\mathrm{vr} } + \frac{b}{\fr{p}_\wedge (1-\gamma)^{3/2} \sqrt{n_{\mrm{vr}}k_{\mrm{vr}}}} +  \frac{b}{\fr{p}_\wedge (1-\gamma) \sqrt{n_{\mrm{vr}}}} }\log\crbk{3dk_\mathrm{vr}/\eta}^2   \\
&\quad +c\frac{1}{\fr{p}_{\wedge} (1-\gamma)^2\sqrt{m_l}}\sqrt{\log\crbk{3 d/\eta}}
\end{align*}
w.p. at least $1-\eta$, provided that $m_l\geq 8\fr{p}_{\wedge}^{-2}\log(24d/\eta)$ and $n_\mrm{vr}\geq \fr{p}_\wedge\inv$. 
\end{proposition}
\begin{proof}[Proof of Proposition \ref{prop:one_vr_iter_high_prob_bd_chi2}]
We recall the proof of Proposition \ref{prop:one_vr_iter_high_prob_bd} in Appendix \ref{a_sec:proof:prop:one_vr_iter_high_prob_bd}. We have that by \eqref{eqn:q_inner_prob_1_bd}, under $P_{l-1}$, on $\set{\omega:\|\hat q_{l-1}-q^*\|_\infty\leq b}$ 
\begin{equation}\label{eqn:q_inner_prob_1_bd_chi2}
   \|q_{l,k+1} - q^*\|_\infty \leq \lambda_k\sqbk{2b + \gamma \sum_{j=1}^k\|Q_{l,j}\|_\infty }+ \|Q_{l,k+1}\|_\infty + \frac{2\|D_l\|_\infty}{1-\gamma}
\end{equation}
w.p.1. The sequence $\set{Q_{l,j}:j=1,\ds, k+1}$, by \eqref{eqn:Q_sum_high_prob_bd}, satisfies
\begin{align*}
    &\gamma \lambda_{k_\mathrm{vr}}\sum_{j=1}^{k_\mathrm{vr}}\|Q_{l,j}\|_\infty + \|Q_{l,k_\mathrm{vr}+1}\|_\infty \\
    &\leq 8\crbk{\frac{\lambda_{k_\mathrm{vr}}\log(e+(1-\gamma)k_\mathrm{vr})\|\zeta_{l-1}\|_\infty}{1-\gamma} + \frac{\|\sigma_{l-1}\|_\infty\sqrt{\lambda_{k_\mathrm{vr}}}}{1-\gamma}}\log\crbk{4|\bd{S}||\bd{A}|k_\mathrm{vr}/\eta}
\end{align*}
w.p. at least $1-\eta$, where we recall that
\begin{align*}\norminf{\zeta_{l-1}} &= \norm{\hat q_{l-1}-q^*},  \\
\norminf{\sigma_{l-1}^2} &= \max_{(s,a)\in \bd{S}\times \bd{A}}\var_{l-1}(\mathbf{H}_{l,k}(\hat q_{l-1})(s,a)).
\end{align*}
Therefore, by Proposition \ref{prop:recentered_op_bias_var_bound_chi2}, we have that 
\begin{align*}
&\gamma \lambda_{k_\mathrm{vr}}\sum_{j=1}^{k_\mathrm{vr}}\|Q_{l,j}\|_\infty + \|Q_{l,k_\mathrm{vr}+1}\|_\infty\\ &\leq c\crbk{\frac{b}{(1-\gamma)^2k_\mathrm{vr} } + \frac{b}{\fr{p}_\wedge (1-\gamma)^{3/2} \sqrt{n_{\mrm{vr}}k_{\mrm{vr}}}}}\log\crbk{4|\bd{S}||\bd{A}|k_\mathrm{vr}/\eta}^2  
\end{align*}
for some constant $c$. 
\par Moreover, recall the definition of $D_l$ in \eqref{eqn:def_D_l}. By Propositions \ref{prop:recentered_op_bias_var_bound_chi2}, \ref{prop:recentered_hp_bound_chi2}, and \ref{prop:high_prob_bound_empirical_Bellman_chi2}, we have that 
\begin{align*}
    \|D_l\|_\infty 
&\leq c\frac{\rmax+\spnorm{q^*} + \|\hat q_{l-1}-q^*\|_\infty}{\fr{p}_{\wedge} \sqrt{m_l}}\sqrt{\log\crbk{12d/\eta}}+  c\frac{\|\hat q_{l-1}-q^*\|_\infty}{\fr{p}_{\wedge}\sqrt{n_\mathrm{vr}}}\sqrt
{\log(e|\bd{S}|)}\\
&\leq c\frac{1}{\fr{p}_{\wedge} (1-\gamma)\sqrt{m_l}}\sqrt{\log\crbk{12d/\eta}}+  c\frac{b}{\fr{p}_{\wedge}\sqrt{n_\mathrm{vr}}}\sqrt
{\log(e|\bd{S}|)}
\end{align*}
for some constant $c$ that can change from line to line. 
\par Combining these bound with \eqref{eqn:q_inner_prob_1_bd_chi2} and apply union bound, we conclude that 
\begin{align*}
\|q_{l,k_{\mrm{vr}}+1} - q^*\|_\infty &\leq c\crbk{\frac{b}{(1-\gamma)^2k_\mathrm{vr} } + \frac{b}{\fr{p}_\wedge (1-\gamma)^{3/2} \sqrt{n_{\mrm{vr}}k_{\mrm{vr}}}} +  \frac{b}{\fr{p}_\wedge (1-\gamma) \sqrt{n_{\mrm{vr}}}} }\log\crbk{8dk_\mathrm{vr}/\eta}^2   \\
&\quad +c\frac{1}{\fr{p}_{\wedge} (1-\gamma)^2\sqrt{m_l}}\sqrt{\log\crbk{24 d/\eta}}
\end{align*}
w.p. at least $1-\eta$. Recall  the definition in Algorithm \ref{alg:vr_q-learning} that $q_{l,k_{\mrm{vr}}+1} = \hat q_{l}$.

\par Finally, we adjust the constant in the log factor using the inequality for $C_1\geq 1,C_2\geq e$, $\log(C_1C_2)= \log(C_1)+\log(C_2)\leq C_1\log(C_2)$. This completes the proof. 
\end{proof}

Given Proposition \ref{prop:vr_algo_err_high_prob_bd_chi2}, we apply the analysis techniques for the variance-reduction iterates in the proof of \ref{prop:vr_algo_err_high_prob_bd_chi2}. This yields the following Proposition. 
\begin{proposition}
  \label{prop:vr_algo_err_high_prob_bd_chi2}
Assume Assumptions \ref{assump:max_rwd} and \ref{assump:delta_small_chi2}.  For $\epsilon < (1-\gamma)^{-1} $, define parameters according to\eqref{eqn:param_choice_for_vrql_chi2}. Then, the statement of Proposition \ref{prop:vr_algo_err_high_prob_bd} hold; i.e. the sequence $\set{\hat q_l,0\leq l\leq l_\mathrm{vr}}$ produced by Algorithm \ref{alg:vr_q-learning} satisfies the pathwise property that $\|\hat q_l-q^*\|_\infty\leq 2^{-l}(1-\gamma)^{-1}$ for all $0\leq l\leq l_\mathrm{vr}$ w.p. at least $1-\eta$. In particular, the final estimator $\hat q_{l_\mathrm{vr}}$ satisfies $\|\hat q_{l_\mathrm{vr}}-q^*\|_\infty\leq 2^{-l_\mathrm{vr}}(1-\gamma)^{-1}$ w.p. at least $1-\eta$.   
\end{proposition}
\begin{proof}[Proof of Proposition \ref{prop:vr_algo_err_high_prob_bd_chi2}]
Follow the proof of Proposition \ref{prop:vr_algo_err_high_prob_bd}, we only to validate \eqref{eqn:one_vr_iter_geom_converge_kl} given the parameter choice in \eqref{eqn:param_choice_for_vrql_chi2}. By Proposition \ref{prop:one_vr_iter_high_prob_bd_chi2}, conditioned on $\|\hat q_{l-1} - q^*\|_\infty\leq 2^{-(l-1)}(1-\gamma)^{-1} =: b$
 \begin{align*}
\|\hat q_{l} - q^*\|_\infty&\leq c\crbk{\frac{b}{(1-\gamma)^2k_\mathrm{vr}} +\frac{b}{\fr{p}_{\wedge}(1-\gamma)^{3/2}\sqrt{n_\mathrm{vr}k_\mathrm{vr}}} + 
\frac{b}{\fr{p}_{\wedge}(1-\gamma)\sqrt{n_\mathrm{vr}}} }\log\crbk{3dk_\mathrm{vr}/\eta}^2 \\
&\quad +  c\frac{1}{\fr{p}_{\wedge} (1-\gamma)^2\sqrt{m_l}}\sqrt{\log(3d/\eta)}
\end{align*}
w.p. at least $1-\eta$. 
\par Therefore, it is easy to see that by the parameter choice \eqref{eqn:param_choice_for_vrql_chi2}, we have that for sufficiently large $c_{\mrm{vr}}$ and for events $\omega\in \set{\|\hat q_{l-1} - q^*\|_\infty\leq 2^{-(l-1)}(1-\gamma)^{-1}}$,
\[
	P_{l-1}\crbk{\1\set{\|\hat q_{l} - q_*\|_\infty\leq 2^{-l}(1-\gamma)^{-1} }}(\omega) \geq 1-\frac{\eta}{l_\mathrm{vr}};
\]
validating \eqref{eqn:one_vr_iter_geom_converge_kl}. Following the same arguments as in proof of Proposition \ref{prop:vr_algo_err_high_prob_bd} will yield Proposition \ref{prop:vr_algo_err_high_prob_bd_chi2}. 
\end{proof}

Now, we prove Theorem \ref{thm:vrql_sample_complexity_chi2}. 
\begin{proof}[Proof of Theorem \ref{thm:vrql_sample_complexity_chi2}]
    By Proposition \ref{prop:vr_algo_err_high_prob_bd_chi2}, under the parameter choice \eqref{eqn:param_choice_for_vrql_chi2}, $\|\hat q_{l_\mathrm{vr}}-q^*\|_\infty\leq \epsilon$ w.p. at least $1-\eta$. The total number of samples used is \begin{align*}
     |\bd{S}||\bd{A}|\crbk{l_\mathrm{vr}n_\mathrm{vr}k_\mathrm{vr} + \sum_{l=1}^{l_\mathrm{vr}}m_l}= \tilde O\crbk{|\bd{S}||\bd{A}|\crbk{\frac{1}{\fr{p}_{\wedge}^2(1-\gamma)^4} + \frac{4^{l_\mathrm{vr}}}{\fr{p}_{\wedge}^2 (1-\gamma)^2}}}. 
    \end{align*}
    This yields the sample complexity bound in Theorem \ref{thm:vrql_sample_complexity_chi2}. 
\end{proof}

\section[Proofs of Properties of the Empirical Bellman Operator: chi2 Case]{Proofs of Properties of the Empirical Bellman Operator: $\chi_2$ Case}\label{a_sec:proof:props_properties_of_empirical_opt_chi2}
\label{a_sec:proof_empirical_bellam_chi2}
We first define some notations that mimic the definitions in Appendix \ref{a_sec:proof_empirical_bellam_kl}. Again, we override the notations for the KL case. For generic
probability measure $\mu$ on $(Y,2^Y)$ and random variable $u:Y\ra \R$, let $w =(\alpha - u)_+$; define the $\chi_2$ dual
functional under the reference measure $\mu$ as
\begin{equation}  \label{eqn:dual_functional_chi2}
f(\mu,u,\alpha) := \alpha - c(\delta)\mu[w^2]^{\frac{1}{2}}.
\end{equation}
Recall the dual formulation of the DR Bellman operator \eqref{eqn:def_bellman_op_chi2_dual}, we have that
\begin{equation}\label{eqn:def_dual_func_bellman_op_chi2}
\cT(q)(s,a) = \sup_{\beta\in\R}f(\nu_{s,a},id,\beta)+\gamma \sup_{\alpha\in\R}f(p_{s,a},v(q),\alpha).
\end{equation}
Next, we present two important lemmas that underlie our analysis of the DR Bellman operator in the $\chi_2$ case. First, we characterize the optimal Lagrange multiplier in the dual formulation \eqref{eqn:def_bellman_op_chi2_dual}. 
\begin{lemma}\label{lemma:alpha_opt_equation_chi2}
For $\delta > 0$, $f(\mu,u,\alpha)$ is second continuously differentiable and concave for $\alpha>\essinf_\mu u$. The supremum is achieved at $\essinf_\mu u\leq \alpha^* < \infty$, i.e.
$\sup_{\alpha\in\R}f(\mu,u,\alpha) = f(\mu,u,\alpha^*) $, satisfying \begin{equation}\label{eqn:opt_cond_chi2}
    \mu[w^2] = c(\delta)^2\mu[w]^2. 
\end{equation}
\par Moreover, if $\alpha > \essinf_\mu u$, then 
\begin{equation}\label{eqn:worst_case_meas_chi2}
    \mu^*(\cd) :=\frac{\mu[w\1\set{\cd}]}{\mu[w]}. 
\end{equation} 
is a worst-case measure satisfying 
$$\mu^*[u] = f(\mu,u,\alpha^*) = \inf_{\mu':D_{\chi_2}(\mu'\|\mu)\leq \delta}\mu'[u] = \inf_{\mu':D_{\chi_2}(\mu'\|\mu)= \delta}\mu'[u]; $$
i.e. the $\chi_2$ constraint is active. 
\par Finally, if $\alpha^* = \essinf_\mu u$, then the measure 
\begin{equation}\label{eqn:worst_case_meas_alpha_min_chi2}
    \mu^*(\cd) := \frac{\mu[\1\set{U\cap \cd}]}{\mu(U)}
\end{equation} where $U:= \set{s:\mu(s) > 0, u(s) = \essinf_\mu u}$ is a worst-case measure. 
\end{lemma}

With this lemma, we can show that under Assumption \ref{assump:delta_small_chi2}, the optimal Lagrange multiplier $\alpha^*$ is sufficiently large so that $w = (\alpha^*-v)_+ = \alpha^*-v$ a.s.$\mu$.
\begin{lemma}\label{lemma:alpha_large_chi2}
If $\delta < \frac{1}{2}\mu_\wedge:= \min_{s:\mu(s)>0}\mu(s)$, then $\alpha^*\geq \esssup_\mu u$.  Moreover, if $u$ is not $\mu$ essentially constant, then $\alpha^*> \esssup_\mu u$. 
\end{lemma}
The proofs of these Lemmas are deferred to Appendix \ref{a_sec:proof:lemma_chi2}.

\subsection{Proof of Proposition \protect\ref{prop:recentered_op_bias_var_bound_chi2}}
As in Appendix \ref{a_sec:proof:prop:recentered_op_bias_and_var_KL}, call $V:=\cH(\hat q) -\widehat{\mathbf{H}}(\hat q) = (\cT(\hat q) - \cT(q_*)) - (\widehat{\mathbf{T}}(\hat q) - \widehat{\mathbf{T}}(q_*))$.  
\par Recall the following notations in Appendix \ref{a_sec:proof:prop:recentered_op_bias_and_var_KL}: $v_t = t v(\hat q) +(1-t)v(q_*)$, $\mu =p$, $m = p-p_n$, and $\mu(t) = tp - (1-t)p_n$. Let 
$$h(s,t):= \sup_{\alpha\in\R} f(\mu(t),v_s,\alpha).$$ 
\par We consider $\Omega_{n,p}(\mu)$ with $p\leq\frac{1}{4}\mu_\wedge$. Then, by Lemma \ref{lemma:equiv_of_meas_cond}, we have that $\mu\sim\mu_n\sim\mu(t)$ on $\Omega_{n,p}(\mu)$. Also, recall that $\alpha^*_{s,t}$ is the optimal Lagrange multiplier that satisfies the conclusions of Lemma \ref{lemma:alpha_opt_equation_chi2}. 
\par First we note that if $v(\hat q)$ and $v( q^*)$ are both $\mu$ essentially constant, then $V = 0$, and the claim of Proposition \ref{prop:recentered_op_bias_var_bound_chi2} holds trivially. Moving forward, we consider the case at least one of $v(\hat q)$ and $v( q^*)$ is not $\mu$ essentially constant.
\par We proceed to show the differentiability of $h$ in this setting. This is summarized by Lemma \ref{lemma:V_integral_rep_chi2}. The proof of this result is deferred to Appendix \ref{a_sec:proof:lemma_chi2}. 
\par Note that Assumption \ref{assump:delta_small_chi2} implies that $\delta < \frac{1}{2}\mu_\wedge$. 
\begin{lemma}\label{lemma:V_integral_rep_chi2}
Suppose $\delta < \frac{1}{2}\mu_\wedge$ and $p\leq \frac{1}{4}\mu_\wedge $. If at least one of $v(\hat q)$ and $v( q^*)$ is not $\mu$ essentially constant, then on $\Omega_{n,p}(\mu)$ there exists function $s,t\ra D^2 h(s,t)$ s.t. 
\begin{equation}\label{eqn:V_mixed_partials_bd_chi2}
\begin{aligned}
    |V(s,a)| 
    &\leq \gamma\int_0^1\int_0^1 \abs{D^2 h(s,t)}ds dt 
\end{aligned}
\end{equation}
w.p.1, where 
\begin{equation}\label{eqn:del_s_del_t_hst_decomp_chi2}
\begin{aligned}
D^2 h(s,t) 
&=  \sqbk{\frac{\mu(t)[\Delta_vw_s]m[w_s^2]}{2\mu(t)[w_s]\mu(t)[w_s^2]} -  \frac{m[\Delta_vw_s]}{\mu(t)[w_s]}} +\del_s\alpha^*_{s,t}\crbk{ \frac{m[w_s^2]}{2\mu(t)[w_s^2]}- \frac{m[w_s]}{\mu(t)[w_s]}} \\
&=: D_1+D_2
\end{aligned}
\end{equation}
with
\begin{equation}\label{eqn:alpha*_derivative_chi2}
\begin{aligned}
\del_s\alpha^*_{s,t} &= \frac{c(\delta)^2\mu(t)[w_s]\mu(t)[\Delta_v] - \mu(t)[w_s\Delta_v]}{(c(\delta)^2 - 1)\mu(t)[w_s]}
\end{aligned}
\end{equation}

\end{lemma}

\par We analyze the two terms separately. Recall that $w_s\geq 0$. Similar to the techniques in Appendix \ref{a_sec:proof:prop:recentered_op_bias_and_var_KL}, we have that on $\Omega_{n,p}(\mu)$ with $\mu\sim \mu_n\sim\mu(t)$, 
$$\abs{\frac{\mu(t)[\Delta_vw_s]m[w_s^2]}{2\mu(t)[w_s]\mu(t)[w_s^2]}} \leq \abs{\frac{\norm{\Delta_v}_\infty m[w_s^2]}{2\mu(t)[w_s^2]}} \leq \frac{1}{2}\norm{\Delta_v}_\infty\Linfnorm{\frac{dm}{d\mu(t)}}{\mu}$$
and 
$$\frac{m[\Delta_vw_s]}{\mu(t)[w_s]}\leq\norm{\Delta_v}_\infty\Linfnorm{\frac{dm}{d\mu(t)}}{\mu}. $$
Hence on $\Omega_{n,p}(\mu)$,
$$\abs{D_1}\leq \frac{3}{2}\norm{\Delta_v}_\infty\Linfnorm{\frac{dm}{d\mu(t)}}{\mu}. $$
For $D_2$, we note that 
$$
\begin{aligned}
\abs{\del_s\alpha^*_{s,t} }&= \abs{\frac{c(\delta)^2\mu(t)[w_s]\mu(t)[\Delta_v] - \mu(t)[w_s\Delta_v]}{(c(\delta)^2 - 1)\mu(t)[w_s]}}\\
&= \frac{1}{c(\delta)^2 - 1}\abs{c(\delta)^2\mu(t)[\Delta_v] - \frac{\mu(t)[w_s\Delta_v]}{\mu(t)[w_s]}}\\
&\leq \frac{c(\delta)^2 + 1}{c(\delta)^2 - 1}\norm{\Delta_v}
\end{aligned}
$$
Next, we consider
 $$\begin{aligned}\frac{m[w_s^{2}]}{2\mu(t)[w_s^{2}]}- \frac{m[w_s]}{\mu(t)[w_s]}
 &=\frac{m[(w_s-\mu(t)[w_s])^2] + 2m[w_s]\mu(t)[w_s]}{2\mu(t)[w_s^{2}]}- \frac{m[w_s]}{\mu(t)[w_s]}\\
 &= \frac{m[(w_s-\mu(t)[w_s])^2] }{2c(\delta)^2\mu(t)[w_s]^2}- \frac{(c(\delta)^2-1)m[w_s]}{c(\delta)^2\mu(t)[w_s]}\end{aligned}$$
 where we use the optimality condition \eqref{eqn:alpha_opt_cond} to replace $\mu(t)[w_s^{2}]$ with $c(\delta)^2\mu(t)[w_s]^2$. Then, 
 $$\begin{aligned}\abs{m[(w_s-\mu(t)[w_s])^2]}&=\abs{\mu(t)\sqbk{\frac{dm}{d\mu(t)}(w_s-\mu(t)[w_s])^2}}\\
 &\leq\mu(t)\sqbk{(w_s-\mu(t)[w_s])^2}\norm{\frac{dm}{d\mu(t)}}_{L^\infty(\mu)}\\
 &= \crbk{\mu(t)[w_s^2] - \mu(t)[w_s]^2}\norm{\frac{dm}{d\mu(t)}}_{L^\infty(\mu)}\\
 &= (c(\delta)^2-1)\mu(t)[w_s]^2\norm{\frac{dm}{d\mu(t)}}_{L^\infty(\mu)}\end{aligned}$$
where we also apply \eqref{eqn:alpha_opt_cond} and $\mu(t)\sim\mu$. So, 
$$
\begin{aligned}\abs{\frac{m[w_s^{2}]}{2\mu(t)[w_s^{2}]}- \frac{m[w_s]}{\mu(t)[w_s]}}&\leq \abs{\frac{m[(w_s-\mu(t)[w_s])^2] }{2c(\delta)^2\mu(t)[w_s]^2}}+\abs{ \frac{(c(\delta)^2-1)m[w_s]}{c(\delta)^2\mu(t)[w_s]}}\\
&\leq \frac{3}{2}\frac{c(\delta)^2-1}{c(\delta)^2}\norm{\frac{dm}{d\mu(t)}}_{L^\infty(\mu)}.
\end{aligned}$$
Therefore, we have that $$\begin{aligned}\abs{D_2} &= \abs{\del_s\alpha^*_{s,t}}\abs{\frac{m[w_s^{2}]}{2\mu(t)[w_s^{2}]}- \frac{m[w_s]}{\mu(t)[w_s]}}\\
&\leq \frac{3(c^2+1)}{2c^2}\norminf{\Delta_v}\norm{\frac{dm}{d\mu(t)}}_{L^\infty(\mu)}\\
&\leq3\norminf{\Delta_v}\norm{\frac{dm}{d\mu(t)}}_{L^\infty(\mu)}\end{aligned}$$
as $c(\delta)^2 = 1+2\delta \geq 1$. 

\par So, on $\Omega_{n,p}(\mu)$, $$\abs{\del_s\del_t h(s,t) }\leq |D_1|+|D_2|\leq \frac{9}{2}\norm{\Delta_v}_\infty\norm{\frac{dm}{d\mu}}_{L^\infty(\mu)}.$$
Recall \eqref{eqn:EV_on_Omega_c_bd_kl} and \eqref{eqn:EV_separate_and_bd_kl}, we have that
\begin{align*}
E|V|&\leq \frac{2\gamma\|\hat q - q_*\|_\infty}{p^2n}\log(e|\bd{S}|)+ 5\gamma\|\Delta_v\|_\infty\sup_{s,t\in(0,1)} E\norm{\frac{dm}{d\mu(t)}}_{L^\infty(\mu)}\1_{\Omega_{n,p}(\mu)} \\
&\leq  \frac{2^{5}\|\hat q - q_*\|_\infty}{\mu_\wedge^2n}\log(e|\bd{S}|) + \frac{5\|\hat q - q_*\|_\infty}{p\sqrt{n}}\sqrt{\log(e|\bd{S}|)}\\
&\leq \frac{2^{5}\|\hat q - q_*\|_\infty}{\mu_\wedge^2n}\log(e|\bd{S}|) + \frac{20\|\hat q - q_*\|_\infty}{ \mu_\wedge\sqrt{n}}\sqrt{\log(e|\bd{S}|)}\\
& \leq \frac{2^{6}\|\hat q - q_*\|_\infty}{ \fr p_\wedge\sqrt{n}}\log(e|\bd{S}|)
\end{align*}
where we choose $p = \frac{1}{4}\mu_\wedge  \leq \frac{1}{4}\fr{p}_\wedge$ and the last inequality follows from the assumption that $n\geq \fr{p}_{\wedge}^{-2}$. 
\par To bound the variance, we use the same techniques as in \eqref{eqn:varV_separate_and_bd_kl} and conclude that for $n\geq 1$
\begin{align*}
\var(\widehat{\mathbf{T}}(\hat q) - \widehat{\mathbf{T}}(q^*))
&\leq 8\gamma^2\|\hat q - q_*\|_\infty^2  P(\Omega_{n,p}(\mu)^c) + \gamma^2 E\int_{0}^1 \int_{0}^1 2(D_1^2+D_2^2)dsdt\1_{\Omega_{n,p}(\mu)}\\
&\leq \frac{2^{7}\|\hat q - q_*\|_\infty^2}{\mu_\wedge^2n}\log(e|\bd{S}|)   + 24\|\Delta_v\|_{\infty}^2\sup_{s,t\in(0,1)}E\norm{\frac{dm}{d\mu(t)}}_{L^\infty(\mu)}^2\1_{\Omega_{n,p}(\mu)}\\
&\leq \frac{2^{7}\|\hat q - q_*\|_\infty^2}{\mu_\wedge^2n}\log(e|\bd{S}|)   + \frac{2^{10}\|\hat q - q_*\|_\infty^2}{\mu_\wedge n}\log(e|\bd{S}|).\\
&\leq  \frac{2^{11}\|\hat q - q_*\|_\infty^2}{\fr p_\wedge n}\log(e|\bd{S}|).
\end{align*}
This is the variance of $\bd H(\hat q)$ as $\cH(\hat q)$ is deterministic.

\subsection{Proof of Proposition \protect\ref{prop:recentered_hp_bound_chi2}}
\begin{proof}
Given Lemma \ref{lemma:V_integral_rep_chi2}, we directly apply the arguments in Appendix \ref{a_sec:proof:prop:recentered_op_high_prob_KL}. 
\par We have that w.p.1,
\begin{align*}
|V(s,a)|&\leq |V| \1_{\Omega_{n,p}(\mu)^c}+\gamma\sup_{s,t\in(0,1)} (|D_1|+|D_2)\1_{\Omega_{n,p}(\mu)}
\end{align*}
where $\mu = p_{s,a}$. Recall the choice $p \leq \frac{1}{4}\mu_\wedge = \frac{1}{4}p_{s,a\wedge}$. By Hoeffding's inequality and the union bound
\begin{align*}
P(|V(s,a)| > t)
    &\leq P(\Omega_{n,p}(p_{s,a})^c) + P\crbk{\gamma\sup_{s,t\in(0,1)} (|D_1|+|D_2|)>t,\Omega_{n,p}(p_{s,a})}\\
    &\leq P\crbk{\sup_{s'\in S}|p_{s,a,n}(s')-p_{s,a}(s')| > p} + P\crbk{\frac{5\gamma\|\hat q - q_*\|_\infty}{p_{s,a,\wedge} - p}\sup_{s\in\bd S}|m(s)|>t}\\
    &\leq \sum_{s\in\bd{S}}\crbk{P\crbk{|m(s)| > p} + P\crbk{\frac{8\|\hat q - q_*\|_\infty}{p_{s,a,\wedge}}|m(s)|>t}}\\
    &\leq 2|\bd{S}|\crbk{\exp\crbk{-2p^2n} + \exp\crbk{-\frac{p_{s,a,\wedge}^2 t^2n}{32\|\hat q - q_*\|_\infty^2}}}
\end{align*}
Then, as $\fr{p}_{\wedge} \leq p_{s,a,\wedge}$ for all $(s,a)\in\bd{S}\times\bd{A}$, by union bound
\[
P(\|V\|_\infty > t)\leq 2|\bd{S}|^2|\bd{A}|\crbk{\exp\crbk{-\frac{\fr{p}_{\wedge}^2n}{8}} + \exp\crbk{-\frac{\fr{p}_{\wedge}^2t^2n}{32\gamma^2\|\hat q - q_*\|_\infty^2}}}.
\]
We first control the first term to be less than $\eta/2$, which is implied by
\[
n\geq \frac{8}{\fr{p}_{\wedge}^2}\log(4|\bd{S}|^2|\bd{A}|/\eta). 
\]
Finally, the second term less than $\eta/2$ is implied by choosing
\[
t^2 = \frac{32\gamma^2\|\hat q - q^*\|}{\fr{p}_{\wedge}^2 n}\log(4|\bd{S}|^2|\bd{A}|/\eta).
\]
This proves the claimed result. 
\end{proof}

\subsection{Proof of Proposition \protect\ref{prop:high_prob_bound_empirical_Bellman_chi2}}

\begin{proof}
We recall the bound \eqref{eqn:Bellman_difference_dual_bound}. If $v(q)$ is essentially constant w.r.t. $p_{s,a}$, then $
\widehat{\bd{T}}(q)(s,a) = \cT(q)(s,a)$. Therefore, we then focus on the case that $v(q)$ is not essentially constant. 

\par Again, we fix $p\leq \frac{1}{4}\fr{p}_\wedge\leq \frac{1}{4}\mu_\wedge$ and thus on $\Omega_{n,p}(\mu)$, $\mu\sim \mu_n$ where $\mu = \nu_{s,a}$ or $p_{s,a}$. So, if $u$ is not essentially constant, by Assumption \ref{assump:delta_small_chi2} and Lemma \ref{lemma:alpha_large_chi2}, we have that

$$\sup_{\alpha\in\R}f(\mu_n,u,\alpha) = f(\mu_n,u,\alpha_n^*), \quad\sup_{\alpha\in\R}f(\mu,u,\alpha) = f(\mu,u,\alpha^*) $$
for some $\alpha_n^*,\alpha^* > \esssup_\mu u =: u_\vee$.  
\par Then, as in \eqref{eqn:Bellman_difference_dual_bound} we analyze 
\begin{align*}
\sup_{\alpha> u_\vee}\abs{f(\mu_n,u,\alpha) -f(\mu,u,\alpha)}.
\end{align*}
Since $\alpha> u_\vee$, $\mu[w^2]> 0$ and $f$ is differentiable in $\mu$ on $\Omega_{n,p}(\mu)$. By the mean value theorem, 
\begin{align*}
\abs{f(\mu_n,u,\alpha) -f(\mu,u,\alpha)} &=c(\delta) \frac{1}{2}\abs{\mu(\tau)[w^2]^{-\frac{1}{2}}m[w^2]}\\
&= \frac{1}{2}\abs{\frac{m[(\alpha-u)^2]}{\mu(\tau)[\alpha - u]}}
\end{align*}
for some $\tau\in[0,1]$ where we used \eqref{eqn:opt_cond_chi2} and $\mu(t) = t\mu+(1-t)\mu_n$ and $m = \mu-\mu_n$. 
\par We first consider when $\alpha > 2\norm{u}_\infty$,
\begin{align*}
&\sup_{\alpha> 2\norm{u}_\infty}\abs{f(\mu_n,u,\alpha) -f(\mu,u,\alpha)}\\
&\leq \sup_{\alpha> 2\norm{u}_\infty}\frac{1}{2}\abs{\frac{m[\alpha^2-2\alpha u + u^2]}{\alpha - \mu(\tau )[u]}}\\
&\leq \sup_{\alpha> 2\norm{u}_\infty}\abs{\frac{\alpha m[ u ]}{\alpha - \mu(\tau )[u]}} + \frac{1}{2}\abs{\frac{m[ u^2 ]}{\alpha - \mu(\tau )[u]}}\\
&\leq \sup_{\alpha> 2\norm{u}_\infty}\abs{\frac{(\alpha - \mu(\tau )[u] )m[ u ]}{\alpha - \mu(\tau )[u]}}+ \abs{\frac{\mu(\tau  )[u]m[ u ]}{\alpha - \mu(\tau )[u]}} + \frac{1}{2}\abs{\frac{m[ u^2 ]}{\alpha - \mu(\tau )[u]}}\\
&\stackrel{(i)}{\leq} |m[u]|+ \abs{\frac{\mu(\tau  )[u]}{\norm{u}_\infty}}\abs{m[ u ]} + \frac{1}{2}\frac{m[u^2]}{\norm{u}_\infty}\\
&\leq \norm{u}_\infty\sup_{y\in Y}|m(y)|
\end{align*}
where $(i)$ uses that $\alpha \geq 2\norm {u}_\infty$ and hence $\alpha - \mu(\tau  )[u]\geq \norm {u}_\infty$. 
\par On the other hand, if $u_\vee < \alpha\leq 2\norm{u}_\infty$
\begin{align*}
\sup_{u_\vee< \alpha \leq 2\norm{u}_\infty}\abs{f(\mu_n,u,\alpha) -f(\mu,u,\alpha)}& \leq \sup_{u_\vee< \alpha \leq 2\norm{u}_\infty} \frac{1}{2}\abs{\frac{m[(\alpha-u)^2]}{\mu(\tau)[\alpha - u]}}\\
&\leq \sup_{u_\vee< \alpha \leq 2\norm{u}_\infty} \frac{1}{2}\norminf{\alpha-u}\abs{\frac{m[\alpha-u]}{\mu(\tau)[\alpha - u]}}\\
&\leq \frac{3}{2}\norminf{u}\Linfnorm{\frac{dm}{d\mu(\tau)}}{\mu}\\
&\leq  \frac{3}{2}\norm{u}_\infty\frac{1}{\mu_\wedge - p}\sup_{y\in Y}|m(y)|\\
&\leq \frac{2\norminf{u}}{\mu_\wedge}\sup_{y\in Y}|m(y)|
\end{align*}
where the last two inequalities follow from Lemma \ref{lemma:equiv_of_meas_cond} and $p\leq \frac{1}{4}\mu_\wedge$. 
\par Therefore, we have
\begin{align*}
&P\crbk{\sup_{\alpha \in\R} \abs{f(\mu_n,u,\alpha) -f(\mu,u,\alpha)} > t}\\
&\leq P(\Omega_{n,p}(\mu)^c) +P\crbk{\sup_{\alpha >v_\vee} \abs{f(\mu_n,u,\alpha) -f(\mu,u,\alpha)} > t,\Omega_{n,p}(\mu)}\\
&\leq P\crbk{\sup_{y}|\mu_{n}(y)-\mu(y)| > p}+P\crbk{\frac{2\norminf{u}}{\mu_\wedge}\sup_{y\in Y} |m_{n}(y)| > t}\\
&\leq 2\sum_{y}\crbk{\exp(-2p^2n)+\exp\crbk{-\frac{\mu_\wedge^2t^2n }{2\norminf{u}^2}}}\\
&\leq 2|Y|\crbk{\exp(-2p^2n)+\exp\crbk{-\frac{\mu_\wedge^2t^2n }{2\norminf{u}^2}}}
\end{align*}
where we used Hoeffding's inequality and union bound. 
\par Therefore, going back to the DR Bellman operator setting, we choose $p = \frac{1}{4}\fr{p}_{\wedge}$. By union bound
\begin{align*} 
&P(\|\widehat{\mathbf{T}}(q)-\cT(q)\|_\infty > t)\\
&\leq P\crbk{\sup_{s,a}\sup_{\beta\in\R }|f(\nu_{s,a,n},id,\beta)-f(\nu_{s,a},id,\beta)|>\frac{t}{2}}\\
&\qquad +P\crbk{\sup_{s,a}\sup_{\alpha\in\R}|f(p_{s,a,n},v(q),\beta)-f(p_{s,a},v(q),\beta)|>\frac{t}{2}}\\ 
&\leq 2(|\bd{S}|^2|\bd{A}|+|\bd{S}||\bd{A}||\bd{R}|)\exp\crbk{-\frac{\fr{p}_{\wedge}^2n}{8}} + 2|\bd{S}||\bd{A}||\bd{R}|\exp\crbk{-\frac{\fr{p}_{\wedge}^2t^2n}{64\rmax^2}}\\
& \qquad  + 2|\bd{S}|^2|\bd{A}|\exp\crbk{-\frac{\fr{p}_{\wedge}^2t^2n}{64\gamma^2\norminf{q}^2}}. 
\end{align*}
We set each of the three terms to be less than $\eta/3$ and find that it suffices to have
\[
n\geq  \frac{8}{\fr{p}_{\wedge}^2}\log\crbk{12|\bd{S}||\bd{A}|(|\bd{S}|\vee |\bd{R}|)/\eta}
\]
and 
\[
t\geq \frac{8(\rmax+\gamma\spnorm{q})}{\fr{p}_{\wedge}\sqrt{n}}\sqrt{\log\crbk{6|\bd{S}||\bd{A}|(|\bd{S}|\vee |\bd{R}|)/\eta}}.
\]
This implies the statement of the proposition. 
\end{proof}

\section[Proof of Technical Lemmas: chi2 Case]{Proof of Technical Lemmas: $\chi_2$ Case}\label{a_sec:proof:lemma_chi2}

\subsection{Proof of Lemma \protect\ref{lemma:alpha_opt_equation_chi2}}
\begin{proof}First, we note that for every $u$ and $\mu$, $f$ is continuous in $\alpha$. Differentiate, we see that $f(\mu,u,\cd)$ is $C^1$ with derivative 
\begin{equation}\label{eqn:dual_func_alph_derivative_chi2}
\del_\alpha f(\mu,u,\alpha) = 1-c(\delta)\mu[w^2]^{-\frac{1}{2}}\mu[w]
\end{equation}
which is again continuous. Differentiate again, we get that
\begin{equation}\label{eqn:f_alpha_2_deriv_chi2}
\begin{aligned}
\del_\alpha\del_\alpha f(\mu,u,\alpha)&= c(\delta) \crbk{\mu[w^2]^{-\frac{3}{2}}\mu[w]^2 - \mu[w^2]^{-\frac{1}{2}}\mu[\1\set{\alpha > v}]}\\
&= c(\delta) \mu[w^2 ]^{-\frac{3}{2}}\crbk{\mu[w\1\set{\alpha > v}]^2 - \mu[w^2]\mu[\1\set{\alpha > v}^2]}\\
&\stackrel{(i)}{\leq} 0
\end{aligned}
\end{equation}
when $\alpha>\essinf_{\mu} u$, where $(i)$ follows from Jensen's inequality. Moreover, this expression is continuous for $\alpha > $ Therefore, $f$ is second differentiable and convex in $\alpha$ when $\alpha>\essinf_{\mu} u$. 
\par As we commented after Lemma \ref{lemma:strong_dual_chi2_duchi}, it suffices to optimize over $\alpha\geq \essinf_{\mu} u$. By the continuity of $f$ and $\del_\alpha f$ in $\alpha$ and convexity, if the optimizer $\essinf_{\mu} u < \alpha^*<\infty$, it must satisfies
$$ 0 = \del_\alpha f(\mu,u,\alpha^*) = 1-c(\delta)\mu[w^2]^{-\frac{1}{2}}\mu[w];$$ which is \eqref{eqn:opt_cond_chi2}.
\par Next, we handle the boundary cases $\alpha^* = \infty$ and $\alpha^* = \essinf_\mu u$. Notice that rewriting \eqref{eqn:opt_cond_chi2} as
$$\mu\sqbk{\crbk{\frac{w}{\mu[w]}}^2} = c(\delta)^2$$
we see that for $\delta > 0$, $\alpha^*\neq \infty$, because otherwise $\frac{w}{\mu[w]} = 1$ a.s.$\mu$. and the above equality cannot hold. 
\par On the other hand, if $\alpha^* =  \essinf_{\mu} u$, then \eqref{eqn:opt_cond_chi2} holds trivially with $w=0$. 
\par Then, we show that \eqref{eqn:worst_case_meas_chi2} is a worst-case measure. It suffices to check that $\mu^*[u] = f(\mu,u,\alpha^*)$ and $D_{\chi_2}(\mu^*\|\mu)= \delta$. We have that
\begin{align*}
    \mu^*[u] &= \frac{\mu[wu]}{\mu[w]}\\
    &= \alpha^* - \frac{\mu[(\alpha^*-u)\1\set{\alpha > u}(\alpha^*-u)]}{\mu[w]}\\
    &= \alpha^* - \frac{\mu[w^2]}{\mu[w]}\\
    &\stackrel{(i)}{=}\alpha^* - c(\delta)^2\mu[w] \\
    &\stackrel{(ii)}{=}\alpha^* - c(\delta)\mu[w^2]^{\frac{1}{2}} \\
    &= f(\mu,u,\alpha^*)
\end{align*} 
where $(i)$ and $(ii)$ follows from \eqref{eqn:opt_cond_chi2}. Moreover, by definition \eqref{eqn:def_chi2},
\begin{align*}
D_{\chi_2}(\mu^*\|\mu) 
&= \frac{1}{2}\mu\sqbk{\crbk{\frac{d\mu^*}{d\mu} -1}^2}\\
&= \frac{1}{2}\mu\sqbk{\crbk{\frac{w}{\mu[w]} -1}^2}\\
&= \frac{1}{2}\crbk{\frac{\mu[w^2]}{\mu[w]^2}+1-2}\\
&\stackrel{(i)}{=} \frac{1}{2}\crbk{c(\delta)^2-1}\\
&=\delta
\end{align*} 
again $(i)$ follows from \eqref{eqn:opt_cond_chi2}. 
\par Finally, clearly $\mu^*$ defined in \eqref{eqn:worst_case_meas_alpha_min_chi2} satisfies $\mu^*[u] = \essinf_\mu u = f(\mu,u,\alpha^*)$. So, to show that $\mu^*$ is a worst-case measure, it suffices to check that $D_{\chi_2}(\mu^*\|\mu)\leq \delta$. 
\par To show this, we observe that if $\alpha^* = \essinf_\mu u$, then by convexity we must have that for all sufficiently small $\epsilon > 0$, $\del_\alpha f(\mu,u,\alpha^*+\epsilon)\leq 0$. Otherwise, $\alpha^* = \essinf_\mu u$ cannot be optimal. In particular, let $w(\epsilon) = (\alpha^*+\epsilon - u)_+$, then by \eqref{eqn:dual_func_alph_derivative_chi2}, we have that 
$$ \mu[w(\epsilon)^2] \leq  c(\delta)^2\mu[w(\epsilon)]^2. $$
Note that if $\epsilon$ is sufficiently small, i.e. when $\alpha^*+\epsilon < u(s)$ for all $s\notin U$ and $\mu(s) > 0$, then $w(\epsilon) = \epsilon\1_U$. Therefore, we must have that 
$$ \mu[\1_U] \leq  c(\delta)^2\mu[\1_U]^2;$$
i.e. $\mu(U)^{-1} \leq c(\delta)^2$. 
With this bound, we now compute 
\begin{align*}
    D_{\chi_2}(\mu^*\|\mu) &= \frac{1}{2}\mu\sqbk{\crbk{\frac{\1_{U}}{\mu(U)} -1}^2} \\
    &= \frac{1}{2}\crbk{\frac{1}{\mu(U)}-1}\\
    &\leq \frac{1}{2}\crbk{c(\delta)^2-1}\\
    &= \delta. 
\end{align*}
\par Therefore, this proves Lemma \ref{lemma:alpha_opt_equation_chi2}. 
\end{proof}

\subsection{Proof of Lemma \protect\ref{lemma:alpha_large_chi2}}
\begin{proof}
\par If $u$ is $\mu$ essentially constant, then $\essinf_\mu u = \esssup_\mu u = \alpha^*$; i.e. the statement of Lemma \ref{lemma:alpha_large_chi2} holds. 

\par Next, we prove that if $u$ is not $\mu$ essentially constant, then $\delta < \frac{1}{2}\mu_\wedge$ implies $\alpha^*\geq\esssup_\mu u$.  To achieve this, we first show that $\alpha^* > \essinf_\mu u$ under these assumptions. 
\par We prove this by assuming $\alpha^* = \essinf_\mu u$ and raising a contradiction. By Lemma \ref{lemma:alpha_opt_equation_chi2}, $\mu^*$ defined in \eqref{eqn:worst_case_meas_alpha_min_chi2} is a worst-case measure. Hence, 
$$\begin{aligned}
    \delta &\geq D_{\chi_2}(\mu^*\|\mu) \\
    &= \frac{1}{2}\mu\sqbk{\crbk{\frac{\1_{U}}{\mu(U)} -1}^2} \\
    &= \frac{1}{2}\crbk{\frac{1}{\mu(U)}-1}\\
    &\stackrel{(i)}{\geq}\frac{1}{2}\frac{\mu_\wedge}{ 1-\mu_\wedge}
\end{aligned}$$
where $(i)$ follows from the assumption that $u$ is not $\mu$ essentially constant, so $$U = \set{s:\mu(s) > 0, u(s) = \essinf_\mu u}$$ cannot be of probability 1. In particular, by the definition of $\mu_\wedge$, $\mu(U)\leq 1-\mu_\wedge$. Therefore, rearrange terms, we have that
$$\frac{\delta}{\mu_\wedge}\geq \frac{1}{2}\frac{1}{1-\mu_\wedge}\geq \frac{1}{2};$$
i.e. $\delta\geq \frac{1}{2}\mu_\wedge$, contradicting our assumption. Therefore, $\alpha^* > \essinf_\mu u$. 
\par Using this, we then show that if $u$ is not $\mu$ essentially constant, $\delta < \frac{1}{2}\mu_\wedge$, and $\alpha^*>\essinf_\mu u$, then $\alpha^*\geq\esssup_\mu u$. 
\par We prove by contradiction, assuming that $\essinf_\mu u<\alpha^*\leq\esssup_\mu u$. Since $\alpha^* \leq\esssup_\mu u$, we must have that for some $s'\in \bd{S}$ s.t. $\mu(s') > 0$, $w(s') = (\alpha^*-u(s'))_+ = 0$. By Lemma \ref{lemma:alpha_opt_equation_chi2}, $\mu^*$ defined in \eqref{eqn:worst_case_meas_chi2} is a worst-case measure when $\alpha^*>\essinf_\mu u$. Moreover,
$$\begin{aligned}
    \delta &=D_{\chi_2}(\mu^*\|\mu) \\
    &= \frac{1}{2}\mu\sqbk{\crbk{\frac{w}{\mu[w]} -1}^2} \\
    &\geq \frac{1}{2}\mu(s')
\end{aligned}$$
contradicting the assumption. Therefore, $\alpha^* > \esssup_\mu u$. This completes the proof of Lemma \ref{lemma:alpha_large_chi2}. 
\end{proof}

\subsection{Proof of Lemma \ref{lemma:V_integral_rep_chi2}}
\begin{proof}
\par By assumption, we are interested in empirical measures that satisfy $\Omega_{n,p}(\mu)$ (c.f. \eqref{eqn:Omega_n,p_mu}) with $p\leq\frac{1}{4}\mu_\wedge$. Then, by Lemma \ref{lemma:equiv_of_meas_cond}, we have that $\mu\sim\mu_n\sim\mu(t)$ on $\Omega_{n,p}(\mu)$. 
\par We first fix $s\in[0,1]$. Let us denote $v_{s,\vee} := \esssup_\mu v_s$. Recall that by Lemma \ref{lemma:alpha_large_chi2}, when $\delta < \frac{1}{2}\mu_\wedge$, it suffices to optimize the Lagrange multiplayer in $[v_{s,\vee},\infty)$.  We have
\[
\del_t  f(\mu(t),v_s,\alpha) = -\frac{1}{2}c(\delta)m[w_s^2]\mu(t)[w_s^2]^{-\frac{1}{2}}
\]
where $w_s = (\alpha-v_s)_+ = \alpha-v$. It is not hard to see that $\del_t  f(\mu(t),v_s,\alpha)$ is continuous on $[0,1]\times [v_{s,\vee},\infty)$ even if $v_s$ is essentially constant (in this case we note that $\del_t  f(\mu(t),v_s,v_{s,\vee}) = 0$). 
\par Next define
\[
\Theta(t):=\argmax{\alpha > v_{s,\vee}}f(\mu(t),v_s,\alpha). 
\]
We discuss two cases: 
\begin{enumerate}
    \item If $v_s$ is $\mu$ essentially constant, then for $\alpha > v_{s,\vee}$
\[
f(\mu(t),v_s,\alpha) = \alpha - c(\delta)(\alpha - v_{s,\vee}) = (1-c(\delta) )\alpha + c(\delta)v_{s,\vee}.
\]
Since $c(\delta) = 1+2\delta > 0$, this is maximized at $\Theta(t) = \set{v_{s,\vee}}$. 
\item $v_s$ is not $\mu$ essentially constant. Note that then by Lemma \ref{lemma:alpha_large_chi2}, $\alpha > \esssup_\mu v_s$, $\alpha > v_s$ a.s.$\mu$ (hence $\mu(t)$). Recall that the second derivative in \eqref{eqn:f_alpha_2_deriv_chi2},
\begin{equation}\label{eqn:f_strict_convex}
\begin{aligned}
&\del_\alpha\del_\alpha f(\mu(t),v_s,\alpha)\\
&= c(\delta) \mu(t)[w_s^2 ]^{-\frac{3}{2}}\crbk{\mu(t)[w_s\1\set{\alpha > v_s}]^2 - \mu(t)[w_s^2]\mu(t)[\1\set{\alpha > v_s}^2]}\\
&= c(\delta) \mu(t)[w_s^2 ]^{-\frac{3}{2}}\crbk{\mu(t)[w_s]^2 - \mu(t)[w_s^2]}\\
&< 0
\end{aligned}
\end{equation}
where the last inequality follows from that $w_s$ is not $\mu(t)$ constant, hence the variance is positive. So, in this case $f(\mu(t),v_s,\cd)$ is strictly concave. Thus, $\Theta(t)$ is a singleton. 
\end{enumerate}
\par Therefore, in both case, $\Theta(t)$ is a singleton. We conclude that by \citet[Theorem 7.21]{sharpioBookSP}, the following derivative exists
\begin{equation}\label{eqn:hst_t_deriv_chi2}
\begin{aligned}
d_t\sup_{\alpha > v_{s,\vee}} f(\mu(t),v_s,\alpha) &= \sup_{\alpha\in \Theta(t)}\del_t f(\mu(t),v_s,\alpha) \\
&= \del_t f(\mu(t),v_s,\alpha^*_{s,t})\\
&= -\frac{1}{2}c(\delta)m[w_s^2]\mu(t)[w_s^2]^{-\frac{1}{2}}.
\end{aligned}
\end{equation}
where it is understood that $w_s = (\alpha^*_{s,t}-v_s)_+ = \alpha^*_{s,t}-v_s$. Therefore, we have shown that $t\ra h(s,t)$ is $C^1(0,1)\cap C[0,1]$. Hence,
\begin{align*}|V(s,a)| 
    &= \gamma \abs{h(1,0) - h(0,0) -h(1,1) + h(0,1)}\\
    &= \gamma \abs{\int_0^1\del_t h(1,t) - \del_t h(0, t)dt}
\end{align*}
\par Next, we show that for any fixed $t$, there exists a mapping $s\ra D_s\del_th(s,t)$ s.t. \eqref{eqn:del_t_hst_weak_derivative} holds. 
\par We note that by Lemma \ref{lemma:alpha_large_chi2}, $\alpha_{s,t}^* = v_{s,\vee}$ only when $v_s$ is essentially constant. Again, assuming that at least one of $v(\hat q)$ and $v( q^*)$ is not $\mu$ essentially constant, as in the proof of Proposition \ref{prop:recentered_op_bias_and_var_KL}, this can only happen at one particular $s = s^*$. 
\par We separately consider these two cases: 
\par \textbf{Case 1: }$v_s$ is never essentially constant for all $s\in[0,1]$. 
\par In this case, $\alpha_{s,t}^* > v_{s,\vee}$ for all $s\in[0,1]$.  Note that $w_s = \alpha_{s,t}^* - v_{s} > 0$. So, if on $\Omega_{n,p}(\mu)$,  $\alpha_{s,t}^*$ is $C^1(0,1)\cap C[0,1]$ in $s$, then by chain rule, $s\ra \del_th(s,t) $ in \eqref{eqn:hst_t_deriv_chi2} is $C^1(0,1)\cap C[0,1]$. 

\par As in the proof of Proposition \ref{prop:recentered_op_bias_and_var_KL}, we show differentiability of $s\ra\alpha_{s,t}^*$ by invoking the implicit function theorem. By the strict convexity \eqref{eqn:f_strict_convex}, $\alpha_{s,t}^*$ is the unique solution to the optimality condition \eqref{eqn:opt_cond_chi2}
$$ 0 = c(\delta)^2\mu(t)[w_s]^2 -\mu(t)[w_s^2] =: F(s,\alpha^*_{s,t}). $$
\par Since $F$ is infinite smooth, the implicit function theorem implies that $\alpha_{s,t}^*$ is $C^1(0,1)\cap C[0,1]$ and $s\ra \del_th(s,t)$ is $C^1(0,1)\cap C[0,1]$. 
\par We compute the derivative $\del_s\del_t h$ in this case. Recall $\Delta_v= v(\hat{q})-v(q^*)$. Differentiate w.r.t. $s$ on both side, we have
$$
\begin{aligned}
0 &= c(\delta^2)2\mu(t)[w_s]\mu(t)[\del_s\alpha^*_{s,t} - \Delta_v] - 2\mu(t)[w_s(\del_s\alpha^*_{s,t} - \Delta_v)]. 
\end{aligned}
$$
Rearranging terms, we have 
\begin{equation*}
\begin{aligned}
\del_s\alpha^*_{s,t} &= \frac{c(\delta)^2\mu(t)[w]\mu(t)[\Delta_v] - \mu(t)[w_s\Delta_v]}{(c(\delta)^2 - 1)\mu(t)[w_s]}
\end{aligned}
\end{equation*}
This gives \eqref{eqn:alpha*_derivative_chi2}. Moreover, when $\alpha_{s,t}^* > 0$, 
\begin{equation*}
\begin{aligned}
\del_s\del_t h(s,t) &= \frac{1}{2} c(\delta)  \mu(t)[w_s^2]^{-\frac{3}{2}}\mu(t)[\Delta_vw_s^{2}]m[w_s^{2}] - c(\delta)\mu(t)[w_s^{2}]^{-\frac{1}{2}} m[\Delta_vw_s]\\
&\quad +\del_s\alpha^*_{s,t}\crbk{-c(\delta)\mu(t)[w_s^2]^{-\frac{1}{2}} m[w_s] +\frac{1}{2}c(\delta) \mu(t)[w_s^2]^{-\frac{3}{2}}\mu(t)[w_s]m[w_s^2]}\\
&= c(\delta) \mu(t)[w_s^{2}]^{-\frac{1}{2}}\mu(t)[w_s]\crbk{ \frac{\mu(t)[\Delta_v w_s]m[w_s^2]}{2\mu(t)[w_s]\mu(t)[w_s^2]} -  \frac{m[\Delta_vw_s]}{\mu(t)[w_s]}}\\
&\quad +c(\delta) \mu(t)[w_s^{2}]^{-\frac{1}{2}}\mu(t)[w_s] \del_s\alpha^*_{s,t}\crbk{ \frac{m[w_s^2]}{2\mu(t)[w_s^2]}- \frac{m[w_s]}{\mu(t)[w_s]}}\\
&\stackrel{(i)}{=}  \sqbk{\frac{\mu(t)[\Delta_vw_s]m[w_s^2]}{2\mu(t)[w_s]\mu(t)[w_s^2]} -  \frac{m[\Delta_vw_s]}{\mu(t)[w_s]}} +\del_s\alpha^*_{s,t}\crbk{ \frac{m[w_s^2]}{2\mu(t)[w_s^2]}- \frac{m[w_s]}{\mu(t)[w_s]}} 
\end{aligned}
\end{equation*}
where $(i)$ uses the optimality equation \eqref{eqn:opt_cond_chi2}. This is consistent with  \eqref{eqn:del_s_del_t_hst_decomp_chi2}.
\par \textbf{Case 2: }There is a unique $s^*\in[0,1]$ s.t. $v_s$ is essentially constant. 
\par As in the proof of Proposition \ref{prop:recentered_op_bias_and_var_KL}, in this case, the previous argument implies that  $s\ra \del_th(s,t)$ is $C^1(0,s^*)$, $C^1(s^*,1)$, and continuous at $0,1$. The derivative is also given by \eqref{eqn:del_s_del_t_hst_decomp_chi2}. 
\par Again, we show the existence of $D_s\del_th$ that satisfy \eqref{eqn:del_t_hst_weak_derivative}. Observe that if $s\ra \del_th(s,t)$ is continuous at $s^*$, then applying the fundamental theorem of calculus on the interval $[0,s^*]$ and $[s^*,1]$ separately, we will have that
\[
\del_th(1,t) - \del_t h(0,t) = \int_0^{s^*} \del_s\del_th(s,t) ds+ \int_{s^*}^1 \del_s\del_th(s,t) ds. 
\]
Hence, taking $D_s\del_t h(s,t) = \del_s\del_t h(s,t)$ for every $s\neq s^*$ and $D_s\del_t h(s^*,t) = 0$ will suffice to produce \eqref{eqn:del_t_hst_weak_derivative}. 
\par It is left to check the continuity at $s^*$ of 
$$ \del_th(s,t) = \del_t f(\mu(t),v_s,\alpha^*_{s,t})= -\frac{1}{2}c(\delta)m[w_s^2]\mu(t)[w_s^2]^{-\frac{1}{2}}$$
from \eqref{eqn:hst_t_deriv_chi2}. Note that on $\Omega_{n,p}(\mu)$, for all $s\in[0,1],\alpha\geq v_{s,\vee}$, \begin{align*}\abs{-\frac{1}{2}c(\delta)m[\alpha-v_s]\mu(t)[(\alpha-v_s)^2]^{-\frac{1}{2}}} &\leq \abs{\frac{1}{2}\mu(t)[(\alpha-v_s)^2]^{\frac{1}{2}}\Linfnorm{\frac{dm}{d\mu(t)}}{\mu}}\\
& \stackrel{(i)}{\leq} \abs{\frac{1}{2}\Linfnorm{\alpha-v_s}{\mu}\frac{1}{\mu_\wedge - p}}\\
&\leq  \abs{\frac{1}{2}\Linfnorm{\alpha-v_s}{\mu}\frac{1}{\frac{3}{4}\mu_\wedge}}
\end{align*}
where $(i)$ follows from Lemma \ref{lemma:equiv_of_meas_cond}. 
Also, $\del_t h(s^*,t) = 0$. Therefore, if $\del_s\alpha^*_{s,t}\ra v_{s^*,\vee}$ as $s\ra s^*$, then $\Linfnorm{\alpha-v_s}{\mu}\ra 0$ as $s\ra s^*$, implying continuity at $s^*$. 
\par It is left to check that $\del_s\alpha^*_{s,t}\ra v_{s^*,\vee}$ as $s\ra s^*$. To prove this, we assume to the contrary that there is a subsequential limit $\alpha_{s_n,t}^*\ra \beta + v_{s^*,\vee}$ for some sequence $s_n\ra s^*$ and $\beta > 0$. But by Lemma \ref{lemma:alpha_opt_equation_chi2}, we must have that $$0 = \lim_{n\ra\infty}c(\delta)^2\mu(t)[\alpha^*_{s_n,t}-v_{s_n}]^2 -\mu(t)[(\alpha^*_{s_n,t}-v_{s_n})^2] = \delta\beta$$
raising a contradiction. This implies that $s\ra\del_th(s,t)$ is continuous at $s^*$, and hence \eqref{eqn:del_t_hst_weak_derivative} holds with  $D_s\del_t h(s,t) = \del_s\del_t h(s,t)$ for every $s\neq s^*$ and $D_s\del_t h(s^*,t) = 0$. 
\par Therefore, in both cases \eqref{eqn:V_mixed_partials_bd_chi2} holds with $\abs{D^2 h(s,t)}$ is given by \eqref{eqn:del_s_del_t_hst_decomp_chi2}. This gives the claim of the lemma. 
\end{proof}

\end{document}